\documentclass[runningheads]{llncs}

\usepackage{eccv}

\usepackage{eccvabbrv}

\definecolor{dblue}{RGB}{52, 104, 192}
\definecolor{dgreen}{RGB}{65, 171, 93}
\definecolor{dred}{RGB}{210, 69, 69}
\definecolor{dred2}{RGB}{169, 68, 56}
\definecolor{lavender}{HTML}{E6E6FA}

\usepackage{url}
\usepackage{booktabs}
\usepackage{times}
\usepackage{soul}
\usepackage[numbers]{natbib}

\usepackage{nicefrac}
\usepackage{microtype}
\usepackage{xcolor}
\usepackage{amsfonts}
\usepackage{amsmath}
\usepackage{amssymb}
\usepackage{dsfont}
\usepackage{arydshln}

\usepackage{graphicx}
\usepackage{colortbl}
\usepackage{tabulary}
\usepackage{adjustbox}
\usepackage{multirow}       

\usepackage{makecell}
\usepackage{caption}
\usepackage{subcaption}
\usepackage{enumitem}
\usepackage{pifont}
\usepackage{lipsum}

\usepackage{wrapfig}
\usepackage{stackengine}
\usepackage{comment}
\usepackage{mathtools}
\usepackage[flushleft]{threeparttable}
\usepackage[normalem]{ulem}
\usepackage[ruled]{algorithm2e}
\usepackage{cuted}

\usepackage[accsupp]{axessibility}  

\usepackage{bm, dsfont}

\let\tilde\widetilde

\newcommand{\bh}{\bm{h}}

\newcommand{\bw}{\bm{w}}
\newcommand{\bx}{\bm{x}}

\newcommand{\bz}{\bm{z}}

\newcommand{\Ib}{\mathbf{I}}

\newcommand{\Rb}{\mathbf{R}}

\newcommand{\bU}{\bm{U}}

\newcommand{\cD}{\mathcal{D}}

\newcommand{\cL}{\mathcal{L}}

\newcommand{\cN}{\mathcal{N}}

\newcommand{\EE}{\mathbb{E}}

\newcommand{\RR}{\mathbb{R}}

\newcommand{\bmu}{\bm{\mu}}

\newcommand{\bSigma}{\bm{\Sigma}}

\newcommand{\tr}{\mathop{\mathrm{tr}}}

\makeatletter
\newcommand{\ve}{\@ifnextchar\bgroup{\velong}{{\bm{e}}}}
\newcommand{\velong}[1]{{\bm{#1}}}
\makeatother

\definecolor{tabgray}{HTML}{f2f2f2}
\definecolor{tablavender}{HTML}{E6F7E6}
\definecolor{tabnavy}{HTML}{00008B}
\definecolor{tabwhite}{HTML}{FFFFFF}
\definecolor{linf}{HTML}{8172b2}
\definecolor{l2}{HTML}{ccb974}
\definecolor{l1}{HTML}{64b5cd}

\newcommand{\std}[1]{{\footnotesize$\pm$}{\scriptsize$#1$}}

\definecolor{lavender}{HTML}{9ad756}
\definecolor{DBlue}{RGB}{31, 119, 180}
\definecolor{DGreen}{RGB}{44,160,44}
\definecolor{DRed}{RGB}{214,39,40}
\definecolor{dorange}{RGB}{255, 127, 14}
\definecolor{dorange2}{RGB}{255, 165, 0}

\newcommand{\fold}{\textsc{Fold}\xspace}
\newcommand{\foldr}{\textsc{Fold-r}\xspace}
\newcommand{\folda}{\textsc{Fold-a}\xspace}
\newcommand{\autofold}{\textsc{AutoFold}\xspace}

\SetKwInput{KwInput}{Input}

\NewDocumentCommand{\alignednum}{m o}{%
  \makebox[6em][c]{%
    \makebox[0pt][r]{#1}%
    \makebox[0pt][l]{%
        \IfValueTF{#2}
        {\std{#2}}
        {\textcolor{black!60}{\std{0.0}}}
    }
  }
}

\usepackage{hyperref}

\usepackage{orcidlink}

\begin{document}
\title{Exploiting Local Flatness for Efficient Out-of-Distribution Detection} 

\titlerunning{Exploiting Local Flatness for Efficient Out-of-Distribution Detection}

\author{Seonghwan Park\inst{1, 2}\orcidlink{0009-0009-6322-6959} \and
Hyunji Jung\inst{2}\orcidlink{0009-0002-6062-0642} \and
Dongyeop Lee\inst{2}\orcidlink{0009-0000-2164-4508} \and
Namhoon Lee\inst{2}\orcidlink{0009-0001-5208-2007}}

\authorrunning{Park et al.}

\institute{Korea Electronics Technology Institute (KETI) \and
Pohang University of Science and Technology (POSTECH)\\
\email{\{seonghwan.park, namhoon.lee\}@postech.ac.kr}}

\maketitle

\vspace{-1em}
\begin{abstract}
    Detecting out-of-distribution (OOD) data is crucial for reliable machine learning deployment.
    Among detection strategies, post-hoc methods are particularly attractive due to their efficiency, as they operate directly on pre-trained networks without requiring retraining.
    Within this paradigm, one promising direction exploits loss-landscape curvature to estimate model uncertainty; however, such methods incur substantial computational cost and rely on implicit assumptions about how landscape flatness differs between in-distribution (ID) and OOD data.
    In this work, we provide the first systematic investigation of this curvature discrepancy and show that OOD inputs exhibit larger Hessian curvature than ID data, with the gap widening under stronger distributional shifts.
    Motivated by these observations, we propose \fold, a lightweight flatness-modulated OOD detector that leverages the feature Hessian and partial feature normalization to improve ID-OOD separability while avoiding costly parameter-space curvature approximations.
    To optimally adapt this normalization across diverse datasets, we further introduce \autofold, a self-supervised tuning scheme that synthesizes pseudo-OOD samples via ID logit masking for automatic calibration without requiring external data.
    Experiments on OOD benchmarks show that \fold outperforms prior methods, improving the average AUROC by 1.63\% and reducing FPR95 by 2.30\%, while maintaining computational efficiency comparable to a standard forward pass.
    Supported by theoretical analysis and extensive ablations, \fold provides a principled and practical solution for robust real-world deployment.
    \vspace{-0.5em}
    \keywords{OOD Detection \and Loss Landscape Curvature \and Self-Supervised Tuning}
\end{abstract}
\begin{figure}[t]
    \centering
    \begin{subfigure}{0.49\linewidth}
        \centering
        \includegraphics[width=\linewidth]{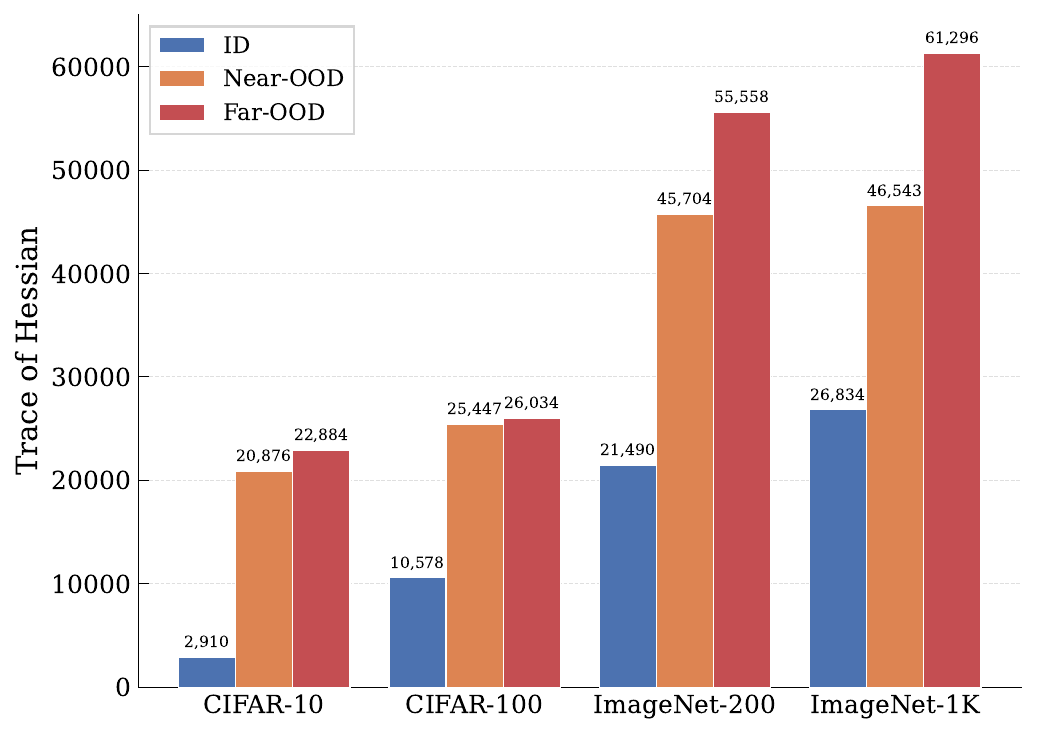}
        \subcaption{Sharpness across distribution shifts}
        \label{fig:pitch 1}
    \end{subfigure}
    \hfill
    \begin{subfigure}{0.49\linewidth}
        \centering
        \includegraphics[width=\linewidth]{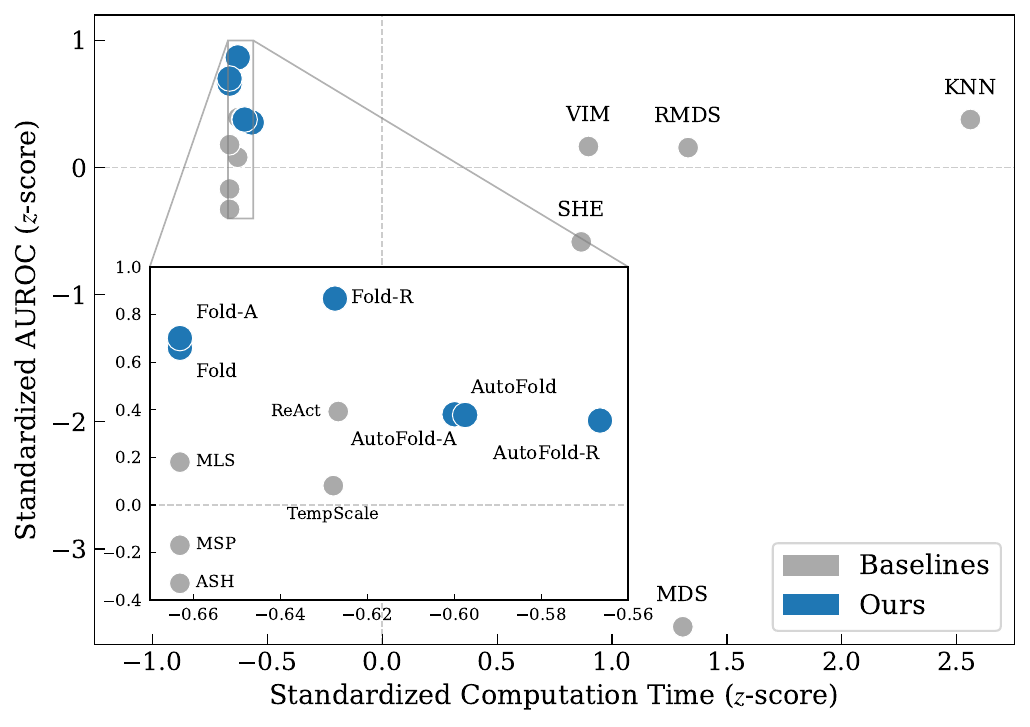}
        \subcaption{Performance-efficiency trade-off}
        \label{fig:pitch 2}
    \end{subfigure}
    \vspace{-0.75em}
    \caption{
        (a) The trace of Hessian, serving as a metric for loss landscape sharpness, progressively increases as the data shifts from ID to near-OOD and far-OOD. 
        (b) The trade-off between detection performance (\ie, standardized AUROC) and computational cost (\ie, standardized computation time). 
        The top-left corner represents the ideal region of high accuracy and low latency. 
        Our proposed \fold and its automatically tuned variant, \autofold, consistently occupy this optimal region, demonstrating outstanding efficiency and performance compared to the baselines.
        }
    \label{fig:pitch performance}
    \vspace{-1.5em}
\end{figure}

\vspace{-2em}
\section{Introduction}
\vspace{-0.5em}
Real-world machine learning systems inevitably encounter out-of-distribution (OOD) data, \ie, samples drawn from distributions whose label sets are disjoint from those seen during training.
Since the model has no prior knowledge of these unseen classes, it should ideally abstain from prediction or express high uncertainty.
However, modern deep neural networks often produce erroneous predictions on such inputs with unwarranted overconfidence~\citep{szegedy2013intriguing, moosavi2017universal, MSP, nguyen2015deep}.
This vulnerability makes reliable OOD identification a critical challenge and has motivated extensive research on detection methods.

Among these efforts, \emph{post-hoc} detection methods have emerged as an appealing paradigm due to their computational efficiency and practical versatility.
Rather than requiring model retraining to suppress overconfidence, these approaches extract discriminative signals directly from pre-trained networks~\citep{OutlierExposure, lee2018training, vyas2018out, devries2018learning, dhamija2018reducing}.
Driven by these advantages, a rich body of work has explored post-hoc OOD detection along multiple directions, including output-level uncertainty estimation~\citep{godin20cvpr, lakshminarayanan2017simple, MDS, ODIN, Energy, tempscale, RMDS}, informative intermediate activations~\citep{sun2021react, ASH}, and neuron- or gradient-level statistics~\citep{NAC, GradNorm, gradientpruningood}.

Within this line of work, recent studies on post-hoc uncertainty estimation exploit the curvature of a pre-trained model’s loss landscape near its optimized parameters~\citep{madras2019detecting, ritter2018scalable}.
Through the Laplace approximation, second-order parameter sensitivity is used to approximate the otherwise intractable Bayesian posterior, enabling a principled estimate of epistemic uncertainty for a fixed model.
However, computing the exact curvature matrix scales quadratically with the number of parameters, making it infeasible for modern deep networks. 
Although several scalable approximations have been proposed \citep{sharma2021sketching, nguyen2026fisher, kristiadi2020being}, they remain computationally demanding and difficult to deploy in real-time settings.
Furthermore, these approaches implicitly rely on the \textit{assumption} that the loss landscape induced by in-distribution (ID) samples is flatter than that of OOD data.

In this work, we provide the first systematic investigation of this ID-OOD curvature discrepancy, culminating in a simple, computationally efficient method that exploits these structural properties.

Specifically, our empirical analysis reveals that key Hessian statistics (\eg, the principal eigenvalue and the trace), are consistently smaller for ID data than for OOD samples (see~\Cref{fig:pitch 1}). 
Furthermore, this curvature discrepancy grows with the severity of the distributional shift, becoming increasingly pronounced from semantically near- to far-OOD regimes.
Motivated by these observations, we introduce \fold, a lightweight \underline{F}latness-m\underline{o}du\underline{l}ated OOD \underline{d}etection algorithm that leverages the \textit{feature Hessian} for OOD discrimination while avoiding the prohibitive computational overhead typical of Bayesian curvature approximations.
Our approach incorporates \textit{partial feature normalization} as a simple structural modification that significantly amplifies the curvature gap between ID and OOD representations.
To further mitigate hyperparameter sensitivity without relying on external data, we introduce AutoFold, a self-supervised tuning strategy that generates pseudo-OOD samples via logit masking using only ID data to enable automatic and robust hyperparameter selection.
Extensive experiments demonstrate that Fold achieves highly competitive detection performance while maintaining the computational efficiency required for practical real-world deployment (see~\Cref{fig:pitch 2}).

\noindent Our key contributions are summarized as follows:
\begin{description}[font={\tiny$\bullet$}~~\normalfont, leftmargin=1.3em, labelindent=0.5em, labelsep=1em, topsep=0.3em, itemsep=0.3em]
    \item[\textbf{Systematic curvature analysis}]
    We present the first systematic analysis of loss landscape curvature, showing Hessian statistics are smaller for ID than for OOD data.
    
    \item[\textbf{Efficient detection method}]
    We propose \fold, a lightweight OOD detection algorithm leveraging the \textit{feature Hessian} and \textit{partial feature normalization} to enhance ID–OOD separability without the overhead of Bayesian curvature-based methods.
    
    \item[\textbf{Automatic hyperparameter tuning}]
    We introduce \autofold, a self-supervised tuning strategy that synthesizes pseudo-OOD samples from ID data via logit masking, enabling automatic and reliable hyperparameter selection without external data.
    
    \item[\textbf{Theoretical and empirical validation}]
    We provide theoretical insights and extensive experiments demonstrating that \fold achieves strong detection performance with substantial computational efficiency across diverse architectures.
\end{description}
\vspace{-0.6em}
\section{Observations}
\label{sec:observations}
\vspace{-0.5em}
In this section, we present a systematic empirical analysis of loss landscape curvature induced by ID and OOD samples. 
We first describe our experimental protocol based on scalable Hessian estimation for computing key curvature metrics, and then examine the geometric discrepancy between the two distributions at both the dataset and sample levels, establishing the empirical foundation for our approach.
\vspace{-0.5em}

\subsection{Experimental Setup}
\label{subsec:experimental setup}
\textbf{Curvature analysis protocol.}\quad
We employ two metrics to characterize the curvature of the loss landscape, where $H$ is the Hessian of the loss: the largest eigenvalue $\lambda_{\mathrm{max}}(H)$, capturing worst-case curvature, and the trace $\mathrm{tr}(H)$, reflecting average curvature~\citep{yao2020pyhessian}. 
Both quantities are estimated using Hutchinson’s method, which relies on Hessian–vector products to avoid explicit construction of the full Hessian matrix~\citep{hutchinson1989stochastic, Hutchinson1, Hutchinson2}. 
We analyze curvature from two complementary perspectives. 
First, to examine our hypothesis that flatness differs across distributions, we compute Hessian statistics of the average loss over the entire dataset, which we refer to as the \textit{dataset-level curvature}. 
Second, since practical OOD detection requires identifying distribution shifts per sample, we investigate whether the \textit{sample-level curvature} provides a reliable signal for this purpose.

\vspace{1em}\noindent\textbf{Experimental details.}\quad
We adopt the OpenOOD framework~\citep{yang2022openood,zhang2023openood}, which categorizes evaluation datasets into near- and far-OOD according to dataset-specific criteria. 
For CIFAR benchmarks~\citep{CIFAR-10}, this distinction is defined strictly by image content and semantic similarity: near-OOD datasets~\citep{CIFAR-10, tin} share object-level similarities with the ID data, whereas far-OOD datasets~\citep{deng2012mnist, svhn, zhou2017places, cimpoi2014describing} contain numerical digits, texture patterns, or scene imagery that differ substantially in both semantics and low-level statistics~\citep{ahmed2020detecting}. 
In contrast, the large visual diversity of ImageNet~\citep{deng2009imagenet} classes makes such semantic categorization less clear; 
thus the near-OOD~\citep{SSB-Hard, ninco} and far-OOD~\citep{van2018inaturalist, cimpoi2014describing, openimages} splits are determined empirically based on detection difficulty.
For all experiments, we use a pretrained ResNet-50 for ImageNet-1K and ResNet-18 for CIFAR-10, CIFAR-100, and ImageNet-200; 
results are averaged over three runs except for ImageNet-1K, where we use the single provided pretrained checkpoint~\citep{zhang2023openood}. 
Additional implementation details and dataset specifications are provided in Appendix~\ref{app:experimental details}.

\begin{table*}[t!]
    \centering
    \caption{
        Comparison of the largest eigenvalue and the Hessian trace across distribution regimes (\eg, ID, near-OOD, far-OOD).
        The reported values are averaged over datasets within each regime.
        ID samples exhibit lower curvature, as reflected by smaller largest eigenvalues and Hessian traces, whereas both metrics increase progressively from ID to near-OOD and further to far-OOD.
        This trend indicates heightened model sensitivity under increasing degrees of distributional shift.
        }
    \label{tab:full parameter hessian statistics}
    \vspace{-0.75em}
    \resizebox{\linewidth}{!}{%
        \begin{sc}
        \begin{tabular}{l @{\hspace{1.5em}} c @{\hspace{1em}} c @{\hspace{1.5em}} c @{\hspace{1em}} c @{\hspace{1.5em}} c @{\hspace{1em}} c @{\hspace{1.5em}} c @{\hspace{1em}} c}
            \toprule
            & \multicolumn{2}{c}{\textbf{CIFAR-10}} & \multicolumn{2}{c}{\textbf{CIFAR-100}} & \multicolumn{2}{c}{\textbf{ImageNet-200}} & \multicolumn{2}{c}{\textbf{ImageNet-1K}} \\ 
            \cmidrule(lr){2-3} \cmidrule(lr){4-5} \cmidrule(lr){6-7} \cmidrule(lr){8-9}
            \textbf{Dataset} &  $\lambda_{\max}(H)$ &  $\tr(H)$ &  $\lambda_{\max}(H)$ &  $\tr(H)$ &  $\lambda_{\max}(H)$ &  $\tr(H)$ &  $\lambda_{\max}(H)$ &  $\tr(H)$ \\ 
            \midrule
            
            ID
            & \alignednum{202.8}[23.0] & \alignednum{2910.5}[717.5] & \alignednum{316.9}[3.4] & \alignednum{10577.8}[751.7] & \alignednum{548.7}[40.2] & \alignednum{21489.6}[2558.6] & \alignednum{702.4} & \alignednum{26834.2} \\
            
            Near-OOD
            & \alignednum{1228.1}[65.2] & \alignednum{20875.7}[917.9] & \alignednum{681.7}[6.7] & \alignednum{25447.0}[704.3] & \alignednum{941.6}[24.3] & \alignednum{45703.7}[2327.0] & \alignednum{1170.5} & \alignednum{46543.2} \\
            
            Far-OOD
            & \alignednum{3417.3}[443.2] & \alignednum{22883.5}[1032.5] & \alignednum{2347.6}[34.3] & \alignednum{26033.5}[709.3] & \alignednum{1147.9}[11.7] & \alignednum{55558.4}[1435.6] & \alignednum{1971.9} & \alignednum{61296.1} \\
            \bottomrule
        \end{tabular}
        \end{sc}
    }
\end{table*}

\begin{figure*}[t]
    \vspace{-0.75em}
    \centering
    \begin{subfigure}{0.245\linewidth}
        \centering
        \includegraphics[width=\linewidth]{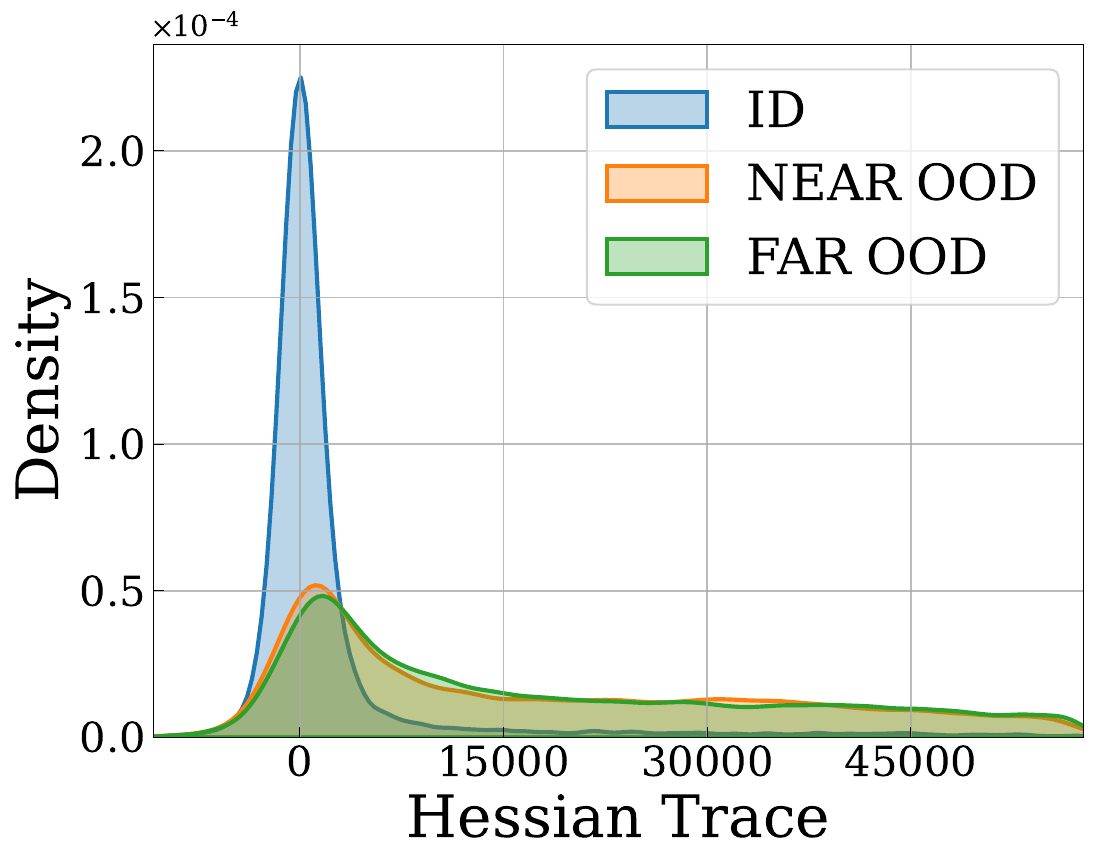}
        \subcaption{CIFAR-10}
    \end{subfigure}
    \begin{subfigure}{0.245\linewidth}
        \centering
        \includegraphics[width=\linewidth]{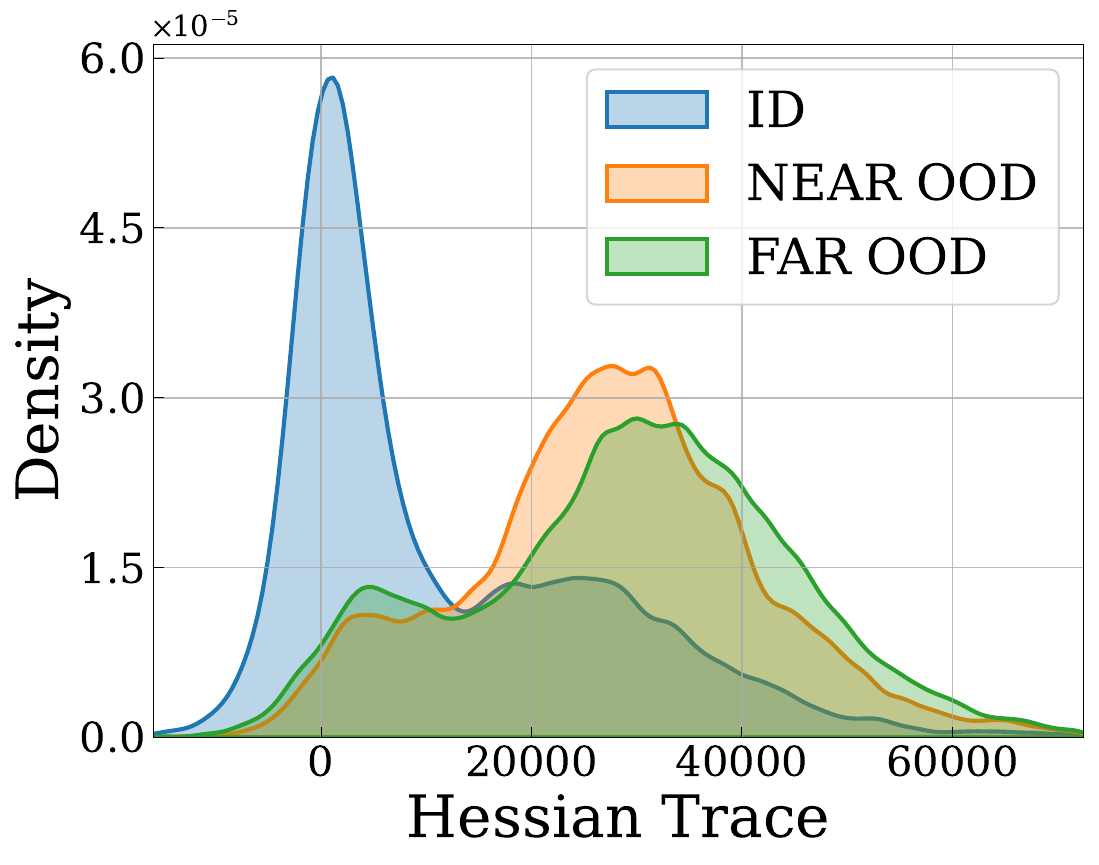}
        \subcaption{CIFAR-100}
    \end{subfigure}
    \begin{subfigure}{0.245\linewidth}
        \centering
        \includegraphics[width=\linewidth]{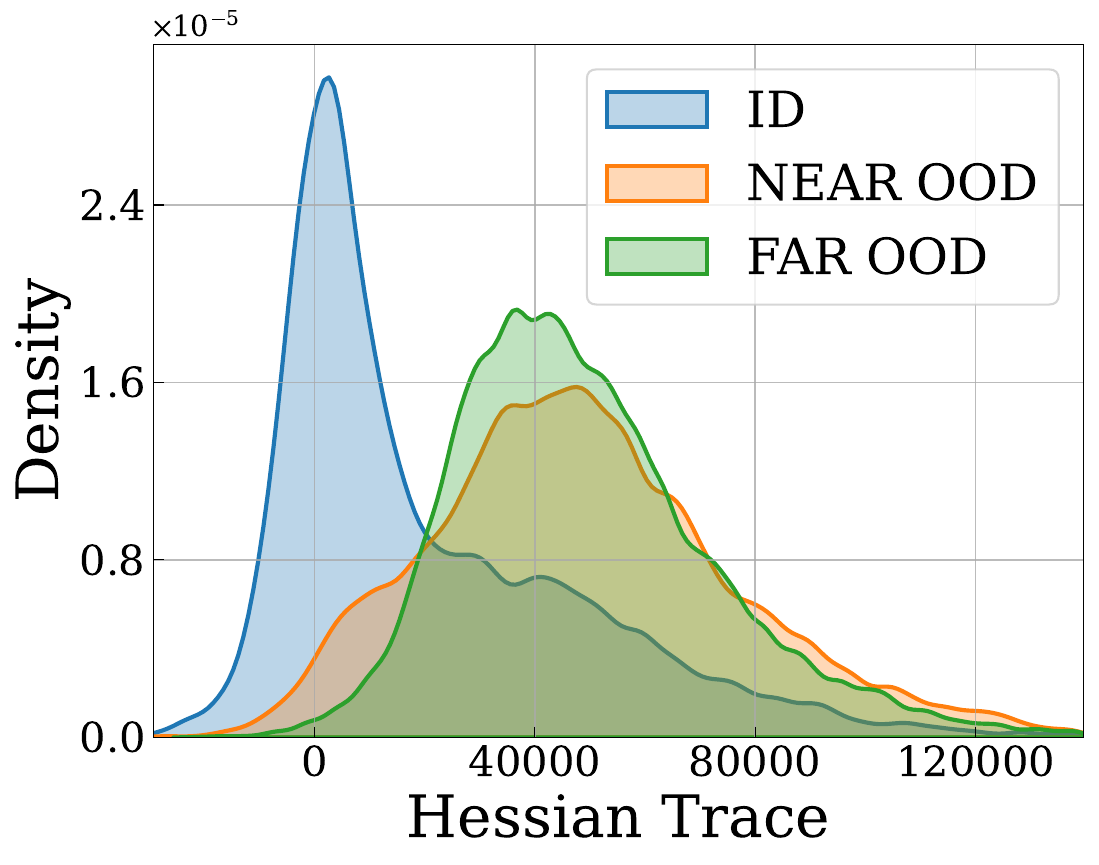}
        \subcaption{ImageNet-200}
    \end{subfigure}
    \begin{subfigure}{0.245\linewidth}
        \centering
        \includegraphics[width=\linewidth]{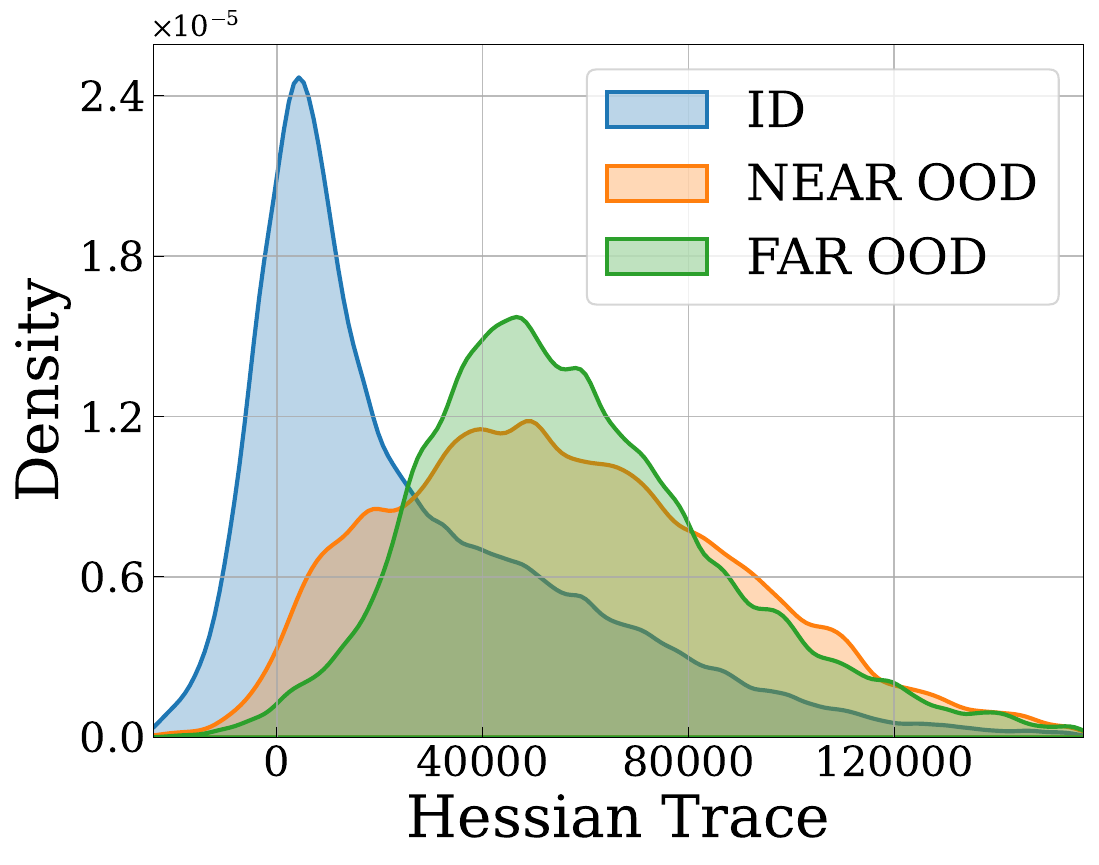}
        \subcaption{ImageNet-1K}
    \end{subfigure}
    \vspace{-1.75em}
    \caption{
        Distribution of per-sample Hessian trace for ID and OOD data, encompassing both near- and far-OOD settings.
        For models trained on CIFAR, we use TIN~\citep{tin} as the near-OOD dataset and SVHN~\citep{svhn} as the far-OOD dataset, while for models trained on ImageNet, we use NINCO~\citep{ninco} and iNaturalist~\citep{van2018inaturalist} as the near- and far-OOD datasets, respectively.
        Across datasets, ID samples are concentrated at small values, whereas OOD samples shift toward larger values, exhibit broader distributions, and often display multimodality, reflecting separability under distribution shift.
        }
    \label{fig:sample-wise hessian}
    \vspace{-1.25em}
\end{figure*}

\subsection{Results}
\label{subsec:results}
\vspace{-0.75em}
\textbf{Dataset-level curvature.}\quad
We begin by comparing dataset-level curvature between ID and OOD inputs.
As shown in \Cref{tab:full parameter hessian statistics}, ID samples consistently exhibit lower curvature than OOD samples across all datasets, with the gap generally widening as the distributional shift becomes more severe.
For instance, when trained on CIFAR-100, the largest eigenvalue for near- and far-OOD inputs exceeds that of ID inputs by factors of $2.2$ and $7.4$, respectively.
For CIFAR-10, the differences are even larger, with corresponding factors of $6.1$ and $16.9$.
A similar pattern is observed for the Hessian trace: in ImageNet-200 and ImageNet-1K, the trace for near-OOD inputs is $2.1$ and $1.7$ times higher than that of ID data, respectively.
Overall, these results indicate that curvature increases with stronger distributional shifts, reflecting heightened sensitivity of the model parameters.

\vspace{0.75em}\noindent\textbf{Sample-level curvature.}\quad
Next, we examine curvature at the sample level.
\Cref{fig:sample-wise hessian} shows the distributions of per-sample Hessian trace values.
Across datasets, ID samples are clearly separated from OOD samples.
For CIFAR-10, the ID density is concentrated near zero, whereas OOD samples shift toward larger values, with far-OOD exhibiting the heaviest tail.
On CIFAR-100, the two OOD distributions exhibit pronounced bimodality, reflecting distinct curvature regimes under increasing distributional shift.
A similar pattern is observed for ImageNet-trained models, where ID samples consistently produce smaller Hessian traces with narrower dispersion than OOD samples.

Overall, the sample-level analysis corroborates the dataset-level observations: curvature increases and becomes more dispersed as the distributional mismatch grows. 
This consistency across scales suggests that curvature is a reliable indicator of OOD behavior.
\section{Proposed Method}
\label{sec:proposed_method}
\vspace{-0.75em}
In this section, we propose \fold, a flatness-modulated OOD detector motivated by the observation that ID samples lie in flatter loss regions than OOD samples. 
We derive a feature Hessian by transporting logit-space curvature through the classifier, enabling flatness estimation in the feature space (Section~\ref{subsec:feature_hessian}). 
We then apply partial feature normalization and use the trace of the normalized feature Hessian as the detection score (Section~\ref{subsec:partial_normalization}).
Finally, we introduce \autofold, a self-supervised scheme that selects the normalization exponent via ID logit masking without requiring OOD data (Section~\ref{subsec:autofold}).

\vspace{-0.75em}
\subsection{Efficient Geometric Curvature via the Feature Hessian}
\label{subsec:feature_hessian}
\vspace{-0.5em}
We consider a deep neural network $f(\cdot;\boldsymbol{\theta})$ decomposed into a feature encoder $h(\cdot;\boldsymbol{\theta}_h)$ and a linear classifier $g(\cdot;\boldsymbol{\theta}_g)$. 
Given an input $\mathbf{x}$, the network produces a latent representation $\mathbf{h}=h(\mathbf{x}) \in \mathbb{R}^d$ and corresponding logits $\mathbf{z}=g(\mathbf{h}) \in \mathbb{R}^C$, where $C$ denotes the number of classes. 
Following prior works~\citep{sun2021dice, sun2021react}, we adopt the energy function~\citep{Energy}
\begin{equation}
    \mathcal{L}(\mathbf{z}) = \log \left( \sum_{i=1}^{C} \exp(z_i) \right).
\end{equation}

As discussed in~\Cref{sec:observations}, curvature contains rich information distinguishing ID and OOD samples. 
However, computing the full parameter Hessian remains computationally expensive for modern deep networks, even with efficient approximations. 
To address this limitation, we draw on recent insights into neural network dynamics. 
In particular, Lee~\etal~\citep{lee2023implicit} show that the logit-space Hessian $\nabla_{\mathbf{z}}^2 \mathcal{L}$ exhibits strong spectral correspondence with the full Hessian, providing a tractable surrogate for curvature analysis.

Nevertheless, curvature measured purely in logit space ignores the geometric structure induced by the classifier. 
Since the linear mapping $g$ determines class decision boundaries in the learned representation space, its Jacobian encodes the discriminative geometry of ID features, \ie, a structure known to be critical for OOD detection~\citep{haoqi2022vim}. 
To jointly capture predictive uncertainty and feature-space geometry, we therefore transport the logit curvature back to the latent space through the classifier mapping. 

Formally, applying the second-order chain rule to $\mathcal{L}(\mathbf{z})$ yields
\begin{equation}
    \nabla_{\mathbf{h}}^{2} \mathcal{L}
    =
    \left( \nabla_{\mathbf{h}} g \right)
    \left( \nabla_{\mathbf{z}}^2 \mathcal{L} \right)
    \left( \nabla_{\mathbf{h}} g \right)^{\!\top}
    +
    \sum_{i=1}^{C}
    \frac{\partial \mathcal{L}}{\partial z_i}
    \nabla_{\mathbf{h}}^{2} g_i.
    \label{eq:feature_chain_rule}
\end{equation}
Because $g$ is linear, $\nabla_{\mathbf{h}}^{2} g_i = 0$ for all $i$, and the second term vanishes, yielding
\begin{equation}
    \nabla_{\mathbf{h}}^{2} \mathcal{L}
    =
    \left( \nabla_{\mathbf{h}} g \right)
    \left( \nabla_{\mathbf{z}}^2 \mathcal{L} \right)
    \left( \nabla_{\mathbf{h}} g \right)^{\!\top}.
    \label{eq:feature_hessian}
\end{equation}
Here, $\nabla_{\mathbf{h}} g \in \mathbb{R}^{C \times d}$ corresponds to the classifier weight matrix, characterizing the sensitivity of logits to perturbations in feature space.

Equation~\eqref{eq:feature_hessian} indicates that the feature Hessian functions as a geometry-aware curvature operator, projecting predictive uncertainty from logit space onto the classifier-induced feature manifold.
Consequently, the resulting curvature reflects directional sharpness aligned with class-discriminative subspaces, capturing geometric structures that are otherwise obscured in the purely logit-space Hessian.

\subsection{Calibrated Geometric Curvature via Partial Normalization}
\label{subsec:partial_normalization}
\vspace{-0.3em}
One limitation of curvature evaluation in feature space is its dependence on the representation norm $\|\bh\|$. 
As the feature magnitude increases, the logits scale proportionally, pushing the softmax distribution toward saturation. 
This effect flattens the local loss landscape and causes the feature Hessian to vanish (\ie, $\nabla_{\bh}^2 \cL \rightarrow \mathbf{0}$), potentially producing a degenerate curvature estimate that fails to reflect meaningful predictive uncertainty.

To mitigate this issue, we introduce partial feature normalization, transforming $\bh$ into a scale-adjusted representation $\tilde{\bh}$:
\vspace{-0.25em}
\begin{equation}
    \tilde{\bh} = \frac{\bh}{\|\bh\|^{\alpha}}, \qquad 0 < \alpha \leq 1,
\end{equation}
where the hyperparameter $\alpha$ controls the degree of magnitude suppression and implicitly induces a sample-dependent temperature scaling of the logits. 
By balancing directional information with feature magnitude in this manner, the final \fold score is defined as
\vspace{-0.25em}
\begin{equation}
    S_{\mathrm{\fold}}(\bx) = \tr\!\left(\nabla_{\tilde{\bh}}^2 \cL\right),
\end{equation}
which effectively measures curvature in the normalized feature space while mitigating magnitude-induced degeneracy.

Empirically, the optimal $\alpha$ varies with dataset complexity. 
Simple datasets (\eg, CIFAR) benefit from near-full normalization ($\alpha\!\approx\!1$), whereas more diverse datasets (\eg, ImageNet) favor milder suppression ($\alpha\!\approx\!0.2$) to preserve magnitude cues (see~\Cref{subsec:partial feature normalization}). 
This motivates the automated calibration strategy introduced in the next section.

\vspace{-0.8em}
\subsection{Self-Supervised Parameter Calibration via ID Logit Masking}
\label{subsec:autofold}
\vspace{-0.3em}

As discussed in the previous section, the optimal $\alpha$ depends on the dataset’s intrinsic feature scale. 
Conventional OOD detection methods often tune hyperparameters using an auxiliary OOD validation set, assuming prior knowledge of the OOD distribution and limiting realistic deployment~\citep{yang2022openood, zhang2023openood}. 
To avoid this dependency, we propose \autofold, a self-supervised framework that selects $\alpha$ using only a small ID validation set.

\autofold generates pseudo-OOD signals via ID logit masking by suppressing the $k$-th logit of a sample $\bx$ with label $y=k$, simulating an unknown-class scenario:
\vspace{-0.25em}
\begin{equation}
    \tilde{z}_j =
    \begin{cases}
        z_j & \text{if } j \neq k, \\
        -\infty & \text{if } j = k.
    \end{cases}
\end{equation}
Removing the dominant class evidence forces the model to rely on residual logits, inducing predictive ambiguity similar to that observed in OOD inputs.

Unlike leave-one-class-out training~\citep{vyas2018out, loco2}, which requires $K$ retraining cycles for $K$ classes, \autofold operates purely at inference time with negligible overhead.
The optimal parameter $\alpha^*$ is obtained by maximizing the separability between the scores of ID validation samples and their masked counterparts:
\vspace{-0.25em}
\begin{equation}
    \alpha^*
    =
    \arg\max_{\alpha}
    \frac{1}{K}\sum_{k=1}^{K}
    \mathrm{AUROC}\!\left(
    \mathcal{S}_{ID}^{(k)},
    \mathcal{S}_{pseudo}^{(k)}
    \right),
\end{equation}
where $\mathcal{S}_{ID}^{(k)}$ and $\mathcal{S}_{pseudo}^{(k)}$ denote the \fold score sets for the validation samples of class $k$ and their logit-masked versions, respectively.
This procedure calibrates $\alpha$ in a strictly in-distribution manner, providing a practical and deployment-friendly tuning criterion.
\section{Evaluations}
\label{sec:evaluations}
\subsection{Experimental Setup}
\textbf{Baselines and implementation details.}\quad
To evaluate \fold, we compare it with a comprehensive set of post-hoc OOD detection methods operating exclusively at inference time and typically assuming classifiers trained with the standard cross-entropy loss~\citep{openmax16cvpr, MSP, tempscale, ODIN, MDS, Energy, RMDS, sun2021react, MLS, haoqi2022vim, sun2022out, ASH, SHE}. 
As \fold is orthogonal to activation thresholding and feature reshaping techniques, we further combine it with ReAct~\citep{sun2021react} and ASH~\citep{ASH}, denoted as \foldr and \folda, respectively. 
Additional implementation details are provided in Appendix~\ref{app:experimental details}. 
The code is available at \href{https://github.com/shpark97/Fold}{https://github.com/shpark97/Fold}.

\vspace{1em}\noindent\textbf{Evaluation metrics.}\quad
We report the area under the receiver operating characteristic curve (AUROC) and the false positive rate at 95\% true positive rate (FPR95) as the primary evaluation metrics. 
All pretrained models are kept fixed during evaluation to preserve their original in-distribution classification performance.


\subsection{Main Results}
\textbf{Standard OOD detection benchmarks.}\quad
\Cref{tab:ood benchmarks main} reports the results on CIFAR and ImageNet benchmarks, comparing \fold with multiple existing baselines. 
Across all evaluations, the proposed methods achieve the best average performance in terms of both AUROC and FPR95, demonstrating the effectiveness of incorporating curvature-based information for OOD detection. 
In particular, the standard \fold attains an average AUROC of 87.20\%, improving upon strong prior baselines including KNN (+0.97\%) and ReAct (+0.92\%). 
For FPR95, \fold achieves a competitive score of 42.42\%, matching VIM and improving upon KNN by 0.82\%.

Performance further improves when \fold is combined with feature shaping techniques. 
Specifically, \fold-\textsc{r} achieves the best overall results, reaching an average AUROC of 87.91\% and an FPR95 of 40.12\%, outperforming prior methods by 1.63\% and 2.30\%, respectively. 
These gains arise because ReAct's activation thresholding stabilizes feature representations, sharpening curvature cues for more reliable uncertainty estimation. 
Additionally, \fold-\textsc{a} performs particularly well on ImageNet benchmarks, achieving among the best results on both ImageNet-200 and ImageNet-1K.

Moreover, unlike prior approaches that often perform well on one benchmark but degrade on others, \fold and its variants maintain consistently strong performance across both CIFAR and ImageNet benchmarks, indicating that curvature-based cues remain effective across varying dataset scales and distribution shifts.

\begin{table*}[t]
    \centering
    \caption{
        Comparison on standard OOD detection benchmarks.
        AUROC ($\uparrow$) and FPR95 ($\downarrow$) are averaged across all OOD datasets for each ID dataset.
        \textbf{Bold} entries denote the best performance, and \underline{underlined} entries indicate the second- and third-best results.
        $\dagger$ marks results reported in \citep{zhang2023openood}.
        Overall, \fold consistently outperforms the competing baselines in terms of average performance.
    }
    \label{tab:ood benchmarks main}
    \vspace{-0.75em}
    \resizebox{\linewidth}{!}{%
        \begin{sc}
        \begin{tabular}{l @{\hspace{0.5em}}|@{\hspace{0.5em}} c@{\hspace{1em}}c@{\hspace{1em}}c@{\hspace{1em}}c @{\hspace{0.5em}}|@{\hspace{0.5em}} c}
            \toprule
            \multirow{2}{*}{\textbf{Method}} & \textbf{CIFAR-10} & \textbf{CIFAR-100} & \textbf{ImageNet-200} & \textbf{ImageNet-1K} & \textbf{Average} \\
            \cmidrule(lr){2-2} \cmidrule(lr){3-3} \cmidrule(lr){4-4} \cmidrule(lr){5-5} \cmidrule(lr){6-6}
            & AUROC($\uparrow$) / FPR95($\downarrow$) & AUROC($\uparrow$) / FPR95($\downarrow$) & AUROC($\uparrow$) / FPR95($\downarrow$) & AUROC($\uparrow$) / FPR95($\downarrow$) & AUROC($\uparrow$) / FPR95($\downarrow$) \\ 
            \midrule
            
            \underline{\textbf{Baselines}}  \\
            
            OpenMax$^\dagger$~\citep{openmax16cvpr}
            & 88.95 \,/\, 34.33 & 78.46 \,/\, 55.19 & 86.23 \,/\, 45.26 & 83.46 \,/\, 48.22 & 84.28 \,/\, 45.75 \\
            
            MSP$^\dagger$~\citep{MSP}
            & 89.83 \,/\, 37.21 & 78.60 \,/\, 57.40 & 87.41 \,/\, 43.19 & 81.55 \,/\, 57.14 & 84.35 \,/\, 48.73 \\
            
            TempScale$^\dagger$~\citep{tempscale}
            & 90.01 \,/\, 39.31 & 79.46 \,/\, 56.79 & 87.97 \,/\, 42.33 & 83.39 \,/\, 53.79 & 85.21 \,/\, 48.05 \\
            
            ODIN$^\dagger$~\citep{ODIN}
            & 86.26 \,/\, 63.81 & 79.49 \,/\, 58.54 & 87.13 \,/\, 47.24 & 83.58 \,/\, 55.38 & 84.12 \,/\, 56.24 \\
            
            MDS$^\dagger$~\citep{MDS}
            & 87.88 \,/\, 38.11 & 65.82 \,/\, 76.01 & 69.60 \,/\, 68.64 & 66.72 \,/\, 71.93 & 72.51 \,/\, 63.67 \\
            
            RMDS$^\dagger$~\citep{RMDS}
            & 91.40 \,/\, \underline{29.86} & \underline{82.00} \,/\, \underline{53.70} & 85.86 \,/\, 41.08 & 82.63 \,/\, 50.56 & 85.47 \,/\, 43.80 \\
            
            EBO$^\dagger$~\citep{Energy}
            & 90.00 \,/\, 48.24 & 80.15 \,/\, 56.26 & 87.51 \,/\, 45.01 & 84.03 \,/\, 50.46 & 85.42 \,/\, 49.99 \\
            
            ReAct$^\dagger$~\citep{sun2021react}
            & 89.31 \,/\, 51.12 & 80.52 \,/\, 54.93 & 88.13 \,/\, 42.10 & 87.15 \,/\, 42.46 & 86.28 \,/\, 47.65 \\
            
            MLS$^\dagger$~\citep{MLS}
            & 89.90 \,/\, 48.23 & 80.14 \,/\, 56.31 & 87.83 \,/\, 44.32 & 84.33 \,/\, 50.06 & 85.55 \,/\, 49.73 \\
            
            VIM$^\dagger$~\citep{haoqi2022vim}
            & 91.88 \,/\, 31.65 & 79.46 \,/\, 54.70 & 86.23 \,/\, 39.99 & 84.44 \,/\, 43.34 & 85.50 \,/\, \underline{42.42} \\
            
            KNN$^\dagger$~\citep{sun2022out}
            & \underline{92.19} \,/\, \textbf{27.52} & 81.66 \,/\, 56.17 & 88.52 \,/\, 40.43 & 82.55 \,/\, 48.83 & 86.23 \,/\, 43.24 \\
            
            ASH$^\dagger$~\citep{ASH}
            & 77.42 \,/\, 81.61 & 79.79 \,/\, 61.37 & \underline{89.29} \,/\, 42.33 & \textbf{88.71} \,/\, \underline{37.02} & 83.80 \,/\, 55.58 \\
            
            SHE$^\dagger$~\citep{SHE}
            & 84.06 \,/\, 70.87 & 77.59 \,/\, 62.44 & 85.96 \,/\, 52.02 & 84.07 \,/\, 54.07 & 82.92 \,/\, 59.85 \\
            
            \midrule
            \textbf{\underline{Fixed $\alpha$}}\\
            
            \fold  
            & \textbf{92.46} \,/\, \underline{29.43} & 81.94 \,/\, \underline{51.62} & 88.65 \,/\, 40.30 & 85.74 \,/\, 48.35 & \underline{87.20} \,/\, \underline{42.42} \\
            
            \fold-r  
            & \underline{92.30} \,/\, 30.65 & \textbf{82.22} \,/\, \textbf{50.55} & 89.07 \,/\, \underline{38.92} & \underline{88.04} \,/\, 40.36 & \textbf{87.91} \,/\, \textbf{40.12} \\
            
            \fold-a  
            & 89.43 \,/\, 49.18 & \underline{81.98} \,/\, 54.26 & \textbf{89.68} \,/\, \textbf{37.06} & \underline{88.28} \,/\, \underline{37.03} & \underline{87.34} \,/\, 44.38 \\
            
            \midrule
            \textbf{\underline{Auto $\alpha$ tuning}}\\
            
            \autofold  
            & \textbf{92.46} \,/\, \underline{29.43} & 79.76 \,/\, 57.52 & 88.37 \,/\, 40.82 & 84.38 \,/\, 46.93 & 86.24 \,/\, 43.67 \\
            
            \autofold-r
            & \underline{92.30} \,/\, 30.65 & 79.40 \,/\, 55.71 & 89.03 \,/\, 39.45 & 83.85 \,/\, 43.39 & 86.15 \,/\, \underline{42.30} \\
            
            \autofold-a  
            & 89.43 \,/\, 49.18 & 78.44 \,/\, 63.84 & \underline{89.44} \,/\, \underline{37.97} & 87.60 \,/\, \textbf{36.81} & 86.23 \,/\, 46.95 \\
            
            \bottomrule
        \end{tabular}
        \end{sc}
    }
\end{table*}
\begin{figure*}[t]
    \centering
    \begin{subfigure}{0.495\linewidth}
        \centering
        \includegraphics[width=\linewidth]{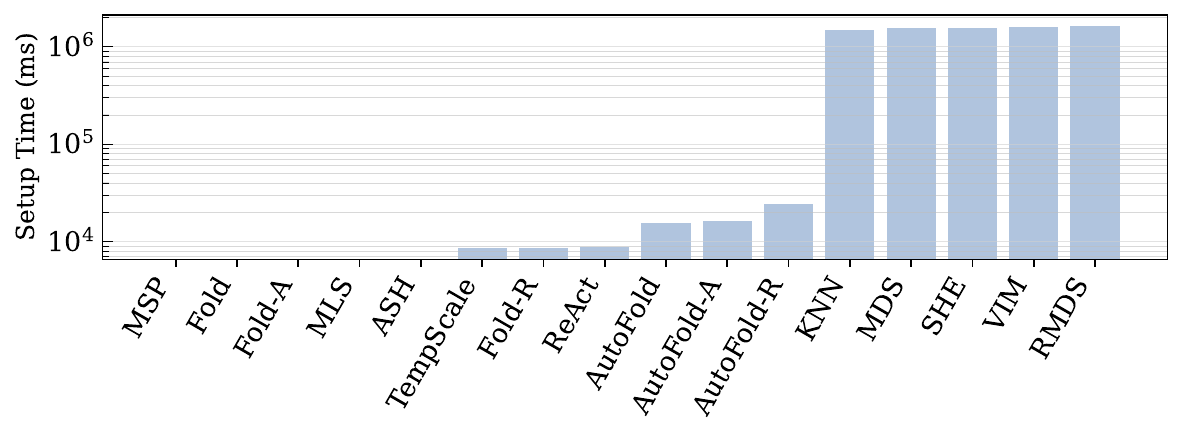}
        \subcaption{Setup time (ms)}
    \end{subfigure}
    \begin{subfigure}{0.495\linewidth}
        \centering
        \includegraphics[width=\linewidth]{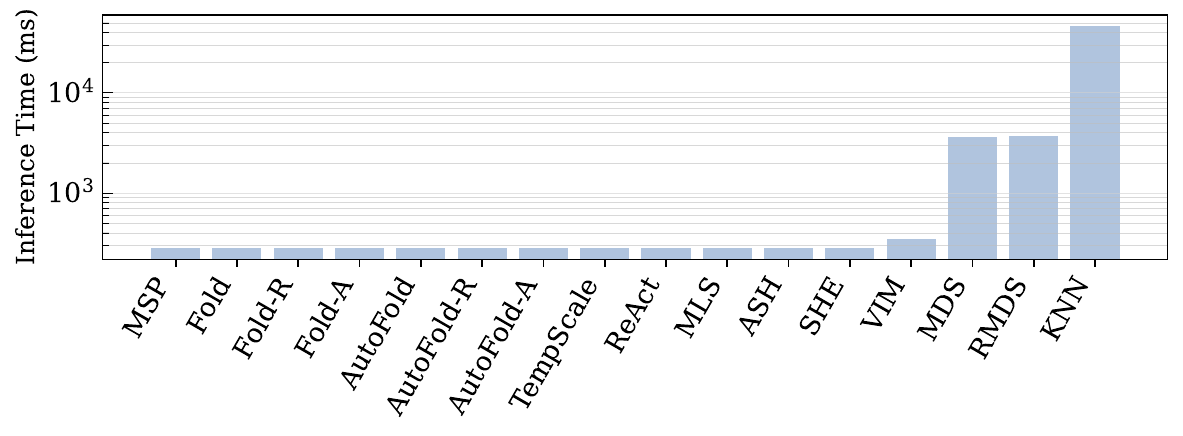}
        \subcaption{Batch-wise inference time (ms)}
    \end{subfigure}
    \vspace{-2em}
    \caption{
        Setup and inference times of baseline methods on ImageNet-1K, ordered left to right.
        All measurements are conducted on a single A100 GPU with a batch size of 256.
        \fold incurs near-minimal setup and inference overhead.
        \autofold, which automatically determines the optimal $\alpha$, introduces only a marginal increase in setup time and no additional inference cost.
        }
    \label{fig:cost}
    \vspace{-1.5em}
\end{figure*}

\vspace{1em}\noindent\textbf{Inference cost.}\quad
To evaluate the efficiency of \fold, we measure the setup and inference times of various baseline methods on ImageNet-1K. 
Here, setup time denotes the preparatory stage required before inference; for instance, KNN requires collecting feature vectors, whereas MDS and RMDS involve estimating class-conditional Gaussian distributions. 
Inference time refers to the time required to compute the OOD detection score for each method.

\vspace{1em}

As shown in \Cref{fig:cost}, the \fold framework incurs minimal overhead in both setup and inference. 
Specifically, the base \fold and \fold-\textsc{a} require virtually no setup time, while the additional computation introduced by \fold-\textsc{r} remains negligible. 
During inference, all \fold variants operate with latency comparable to a standard forward pass, indicating high computational efficiency. 
In contrast, competing methods such as RMDS, VIM, and KNN often incur substantially higher computational costs despite achieving competitive performance in certain settings. 
These overheads arise either from lengthy setup procedures (\eg, VIM) or significant computation during both setup and inference (\eg, RMDS and KNN).

To illustrate the performance–efficiency trade-off, we plot standardized computational cost against standardized AUROC (see \Cref{fig:pitch 2}). 
The \fold variants cluster in the upper-left region of the plot, indicating strong detection performance with minimal computational overhead. 
Overall, these results show that \fold achieves a favorable balance between detection accuracy and computational efficiency, making it well suited for practical deployment.

\begin{figure}[t]
    \centering
    \begin{subfigure}{0.245\linewidth}
        \centering
        \includegraphics[width=\linewidth]{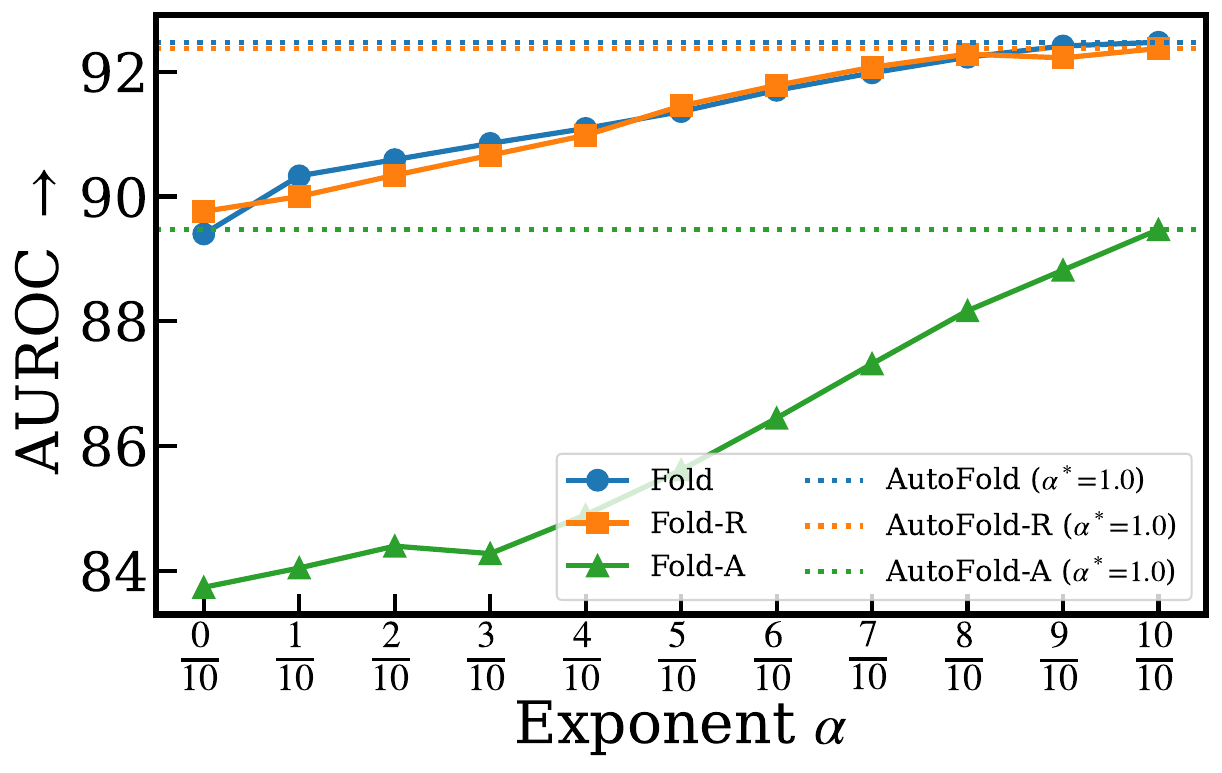}
        \subcaption{CIFAR-10, AUROC($\uparrow$)}
    \end{subfigure}
    \begin{subfigure}{0.245\linewidth}
        \centering
        \includegraphics[width=\linewidth]{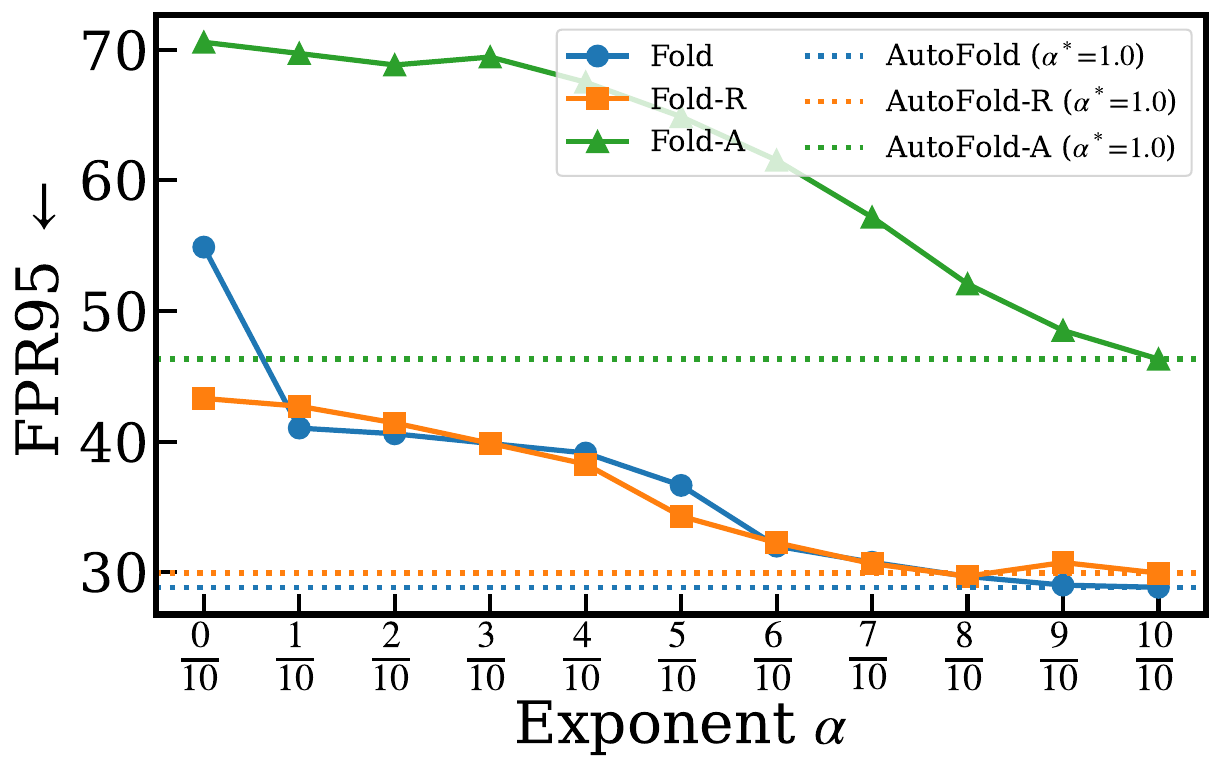}
        \subcaption{CIFAR-10, FPR95($\downarrow$)}
    \end{subfigure}
    \begin{subfigure}{0.245\linewidth}
        \centering
        \includegraphics[width=\linewidth]{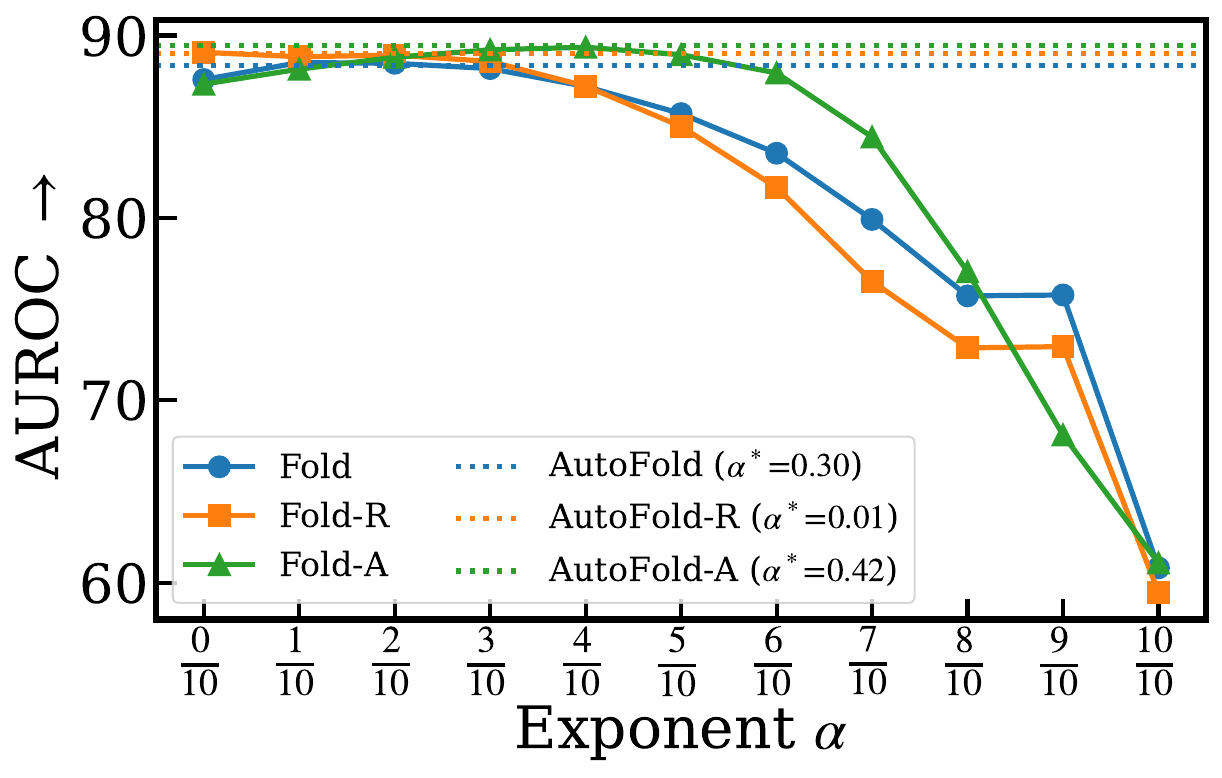}
        \subcaption{ImageNet-200, AUROC($\uparrow$)}
    \end{subfigure}
    \begin{subfigure}{0.245\linewidth}
        \centering
        \includegraphics[width=\linewidth]{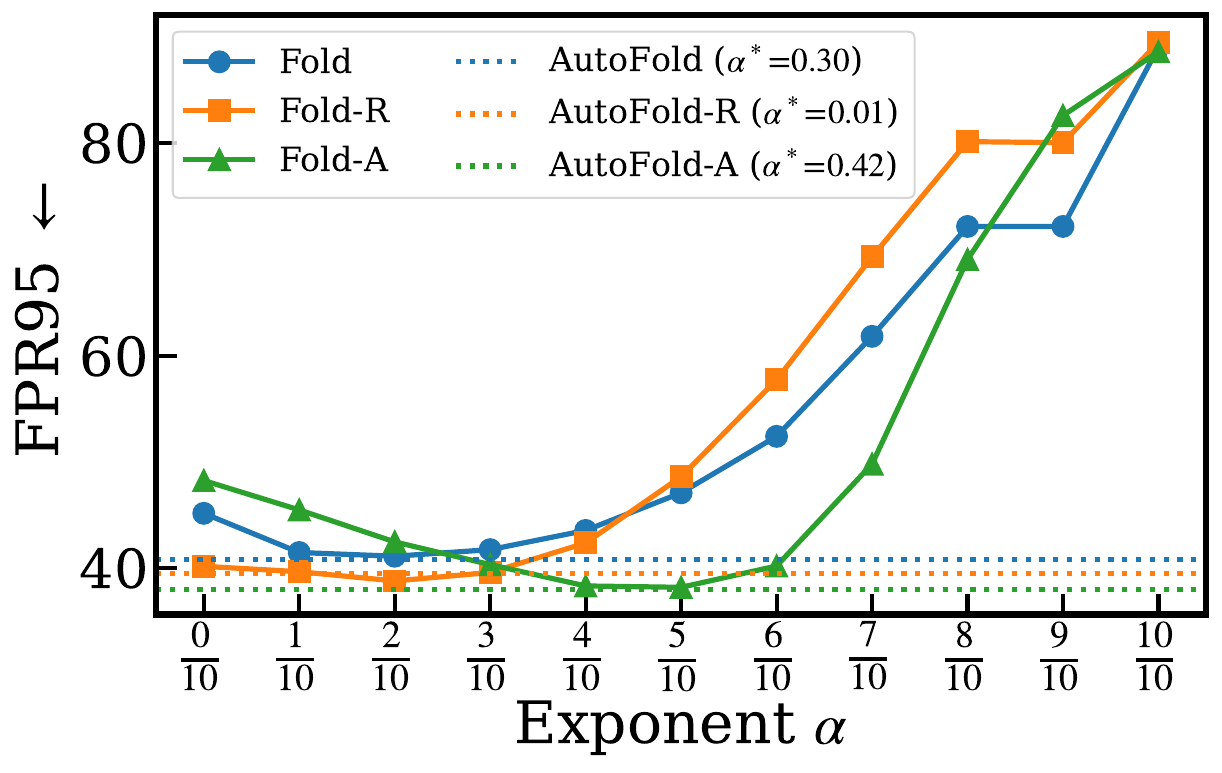}
        \subcaption{ImageNet-200, FPR95($\downarrow$)}
    \end{subfigure}
    \vspace{-1.5em}
    \caption{
        Ablation study on the normalization parameter $\alpha$.
        On CIFAR-10, larger $\alpha$ values consistently improve AUROC and FPR95 across all \fold variants, whereas relatively smaller $\alpha$ values yield better performance on ImageNet-200, indicating dataset-dependent sensitivity to $\alpha$.
        In contrast, \autofold automatically identifies a near-optimal $\alpha$ for each dataset, and the resulting performance (dotted line) closely matches the best achievable results.
        }
    \label{fig:parameter analysis alpha}
    \vspace{-1.5em}
\end{figure}

\subsection{Sensitivity Analysis and Automated Calibration}
\label{subsec:partial feature normalization}
\textbf{Dataset-dependent sensitivity of $\alpha$.}\quad
\Cref{fig:parameter analysis alpha} presents a systematic analysis of the effect of the norm parameter $\alpha$. 
We vary $\alpha$ within the range $[0.0, 1.0]$ for all \fold variants.
For CIFAR-10, larger values of $\alpha$ lead to clear performance gains, whereas for ImageNet-200, smaller values yield superior results. 
This contrast suggests that the optimal influence of $\alpha$ is dataset-dependent, likely reflecting differences in semantic complexity and intrinsic feature distributions.

\vspace{1em}\noindent\textbf{OOD-free calibration via \autofold.}\quad
As indicated by the dotted lines in \Cref{fig:parameter analysis alpha}, \autofold identifies a near-optimal parameter setting that closely matches the peak detection performance without manual tuning. 
Consistent with this observation, \Cref{tab:ood benchmarks main} shows that \autofold performs comparably to \fold evaluated with a fixed $\alpha$, and occasionally achieves slightly better results. 
This improvement arises from the search resolution: while the manual \fold evaluation discretizes $\alpha$ into 10 intervals within $[0.0, 1.0]$, \autofold explores the parameter space with a finer resolution of 100 intervals. 
Consequently, \autofold achieves competitive performance without relying on any external OOD validation data.

Moreover, \autofold operates with setup times that are orders of magnitude lower (on a logarithmic scale) than those of competing methods (\eg, KNN, MDS, SHE, VIM, RMDS). 
Notably, the reported setup costs of these baselines do not account for the additional hyperparameter tuning typically performed on auxiliary OOD validation sets. 
In contrast, \autofold removes this requirement entirely, highlighting its practicality and efficiency in realistic scenarios where OOD data are unavailable.
\vspace{-1em}
\section{Further Analysis}
\label{sec:further analysis}
\vspace{-0.5em}
\subsection{Partial Normalization}
\label{subsec:further partial normalization}
\vspace{-0.3em}
\textbf{Generalizability to existing baselines.}\quad
To assess whether the proposed partial feature normalization generalizes to existing methods, we evaluate three normalization schemes (no, partial, and full) across multiple baselines on CIFAR-10 and ImageNet-200. 
As shown in \Cref{tab:norm_method_comparison}, partial normalization consistently improves prior approaches. 
In contrast to previous works that adopt full feature normalization~\citep{mueller2025mahalanobis++, park2023understanding, RMDS}, our results reveal clear dataset-dependent behavior. 
On CIFAR-10, partial and full normalization yield comparable improvements. 
However, on more complex datasets (\ie, ImageNet-200), full normalization substantially degrades detection performance by discarding informative magnitude cues. 
In contrast, partial normalization improves performance by preserving informative feature variance while suppressing noisy magnitude effects. 
These findings suggest that partial normalization serves as an effective plug-and-play component for enhancing general OOD detection frameworks.

\begin{table*}[!t]
    \centering
    \caption{
        Effects of partial feature normalization on baselines, evaluated using AUROC ($\uparrow$) and FPR95 ($\downarrow$) on CIFAR-10 and ImageNet-200.
        On CIFAR-10, normalization improves performance, with partial and full normalization both competitive.
        On ImageNet-200, partial normalization yields consistent gains, whereas full normalization substantially degrades detection performance.
    }
    \label{tab:norm_method_comparison}
    \vspace{-0.75em}
    \resizebox{\linewidth}{!}{%
        \begin{sc}
        \begin{tabular}{l @{\hspace{1.5em}} c @{\hspace{1.5em}} c@{\hspace{1.5em}}c@{\hspace{1.5em}}c@{\hspace{1.5em}}c@{\hspace{1.5em}}c@{\hspace{1.5em}}c@{\hspace{1.5em}}c @{\hspace{0.75em}}@{\hspace{0.75em}} c}
            \toprule
            & & \multicolumn{7}{c@{\hspace{1.5em}}}{\textbf{Method}} \\
            \cmidrule(lr){3-9}
            \textbf{Dataset} & \textbf{Norm.} & \textbf{MSP} & \textbf{EBO} & \textbf{ReAct} & \textbf{ASH} & \textbf{\fold} & \textbf{\fold-R} & \textbf{\fold-A} & \textbf{Average} \\
            \midrule
            
            \multirow{3}{*}{CIFAR-10} 
            & $\alpha = 0$    & 89.83 \,/\, 37.21 & 90.00 \,/\, 48.24 & 89.31 \,/\, 51.12 & 77.42 \,/\, 81.61 & 89.72 \,/\, 51.40 & 89.83 \,/\, 43.46 & 83.82 \,/\, 69.63 & 87.13 \,/\, 54.67 \\
            & $0 < \alpha < 1$ & 91.92 \,/\, 28.22 & \textbf{92.37} \,/\, \textbf{31.61} & 92.06 \,/\, \textbf{33.74} & 88.02 \,/\, 56.70 & 92.42 \,/\, \textbf{29.22} & 92.28 \,/\, \textbf{30.14} & 88.82 \,/\, 51.00 & 91.13 \,/\, 37.23 \\
            & $\alpha = 1$    & \textbf{92.01} \,/\, \textbf{27.84} & 92.30 \,/\, 32.37 & \textbf{92.10} \,/\, 33.75 & \textbf{88.82} \,/\, \textbf{53.26} & \textbf{92.46} \,/\, 29.43 & \textbf{92.29} \,/\, 31.17 & \textbf{89.43} \,/\, \textbf{49.18} & \textbf{91.34} \,/\, \textbf{36.71} \\
            
            \midrule
            
            \multirow{3}{*}{ImageNet-200} 
            & $\alpha = 0$    & 87.41 \,/\, 43.19 & \textbf{87.51} \,/\, 45.01 & \textbf{88.13} \,/\, 42.10 & 89.29 \,/\, 42.33 & 87.81 \,/\, 44.10 & \textbf{89.22} \,/\, 39.24 & 87.61 \,/\, 47.14 & 87.64 \,/\, 43.30 \\
            & $0 < \alpha < 1$ & \textbf{88.31} \,/\, 40.62 & 87.38 \,/\, \textbf{44.82} & 88.02 \,/\, \textbf{41.84} & \textbf{89.67} \,/\, \textbf{39.83} & \textbf{88.65} \,/\, \textbf{40.16} & 89.12 \,/\, \textbf{37.90} & \textbf{89.64} \,/\, \textbf{37.47} & \textbf{89.91} \,/\, \textbf{40.38} \\
            & $\alpha = 1$    & 87.65 \,/\, \textbf{40.39} & 64.92 \,/\, 87.96 & 62.34 \,/\, 88.98 & 66.41 \,/\, 85.33 & 62.23 \,/\, 88.56 & 60.89 \,/\, 89.04 & 62.51 \,/\, 88.08 & 79.02 \,/\, 81.19 \\
            
            \bottomrule
        \end{tabular}
        \end{sc}
    }
    \vspace{-2em}
\end{table*}

\begin{wrapfigure}{r}{0.6\linewidth}
    \vspace{-2.5em}
    \centering
    \includegraphics[width=\linewidth]{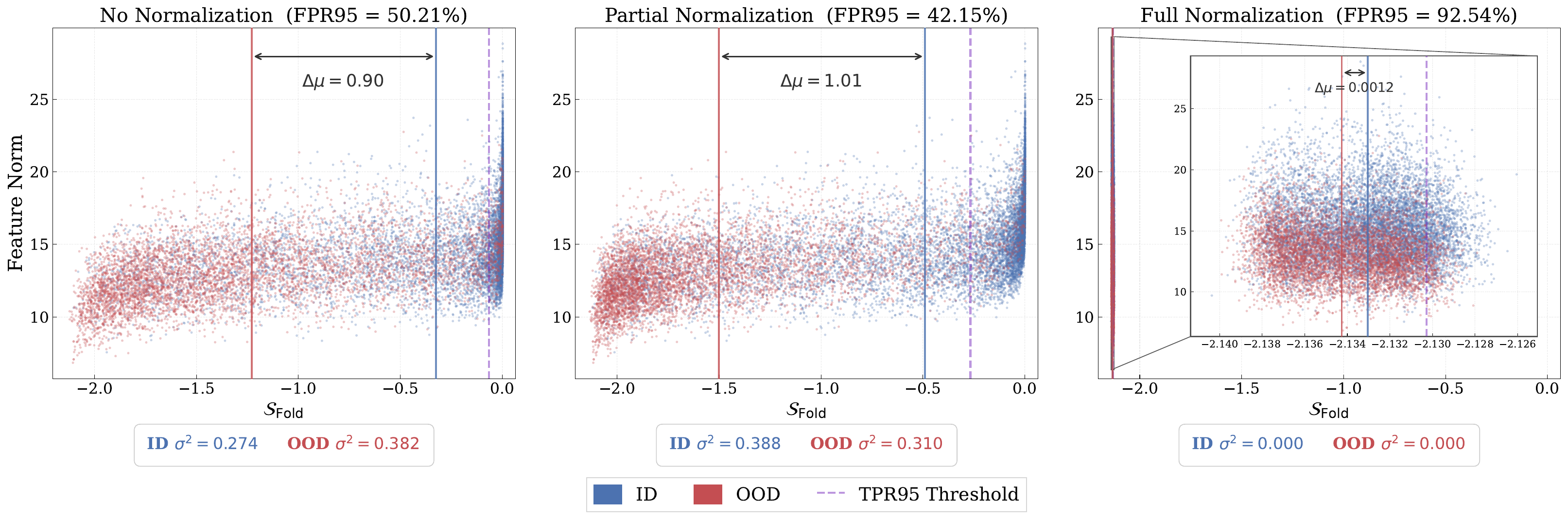}
    \vspace{-1.5em}
    \caption{
        Scatter plots illustrating the relationship between feature norm and \fold score under three normalization regimes.
        Partial normalization enhances ID–OOD separability by increasing the mean score gap and reducing OOD variance, yielding a higher Cohen's $d$ ($1.605 \rightarrow 1.689$)~\citep{cohen2013statistical}.
        In contrast, full normalization removes magnitude information, collapsing score variance and markedly reducing discriminability (Cohen's $d = 0.520$).
        }
    \label{fig:normalization visualization}
    \vspace{-2em}
\end{wrapfigure}

\vspace{1em}\noindent\textbf{Effect of feature normalization.}\quad
To further analyze the effect of partial normalization on the \fold score, we examine the relationship between feature norm magnitude and detection score on ImageNet-200. 
As shown in \Cref{fig:normalization visualization}, partial normalization increases the mean score gap (\ie, $\Delta \mu$) between ID and OOD samples from $0.90$ to $1.01$ while substantially reducing the variance of the OOD distribution relative to the unnormalized baseline. 
Consequently, Cohen's $d$~\citep{cohen2013statistical}, a standard measure of statistical separability, improves from $1.605$ to $1.689$, yielding an absolute $8.06\%$ improvement in FPR95. 
In contrast, full normalization collapses score variance and severely degrades discriminability (Cohen's $d\!=\!0.520$), rendering two distributions nearly indistinguishable (\ie, $\text{FPR95}\!=\!92.54\%$).

\vspace{1em}\noindent\textbf{Spectral analysis of feature curvature.}\quad
To better understand this behavior, we analyze the top 50 absolute eigenvalues of the feature Hessian for ID and OOD samples under three normalization regimes. 
As shown in \Cref{fig:eigenvalues}, full normalization collapses the spectral gap across the spectrum, rendering the ID and OOD spectra nearly identical and thereby explaining the severe degradation in detection performance. 
Without normalization, the spectral difference is largely concentrated in the top 10 eigenvalues, indicating that the score primarily depends on worst-case curvature. 
In contrast, partial normalization slightly reduces the gap among the largest eigenvalues while enlarging the spectral separation across a broader portion of the spectrum. 
This shift suggests that partial normalization exploits more distributed, average-case curvature information, capturing richer geometric signals that improve the robustness of OOD detection.

\begin{figure}[t]
    \centering
    \includegraphics[width=\linewidth]{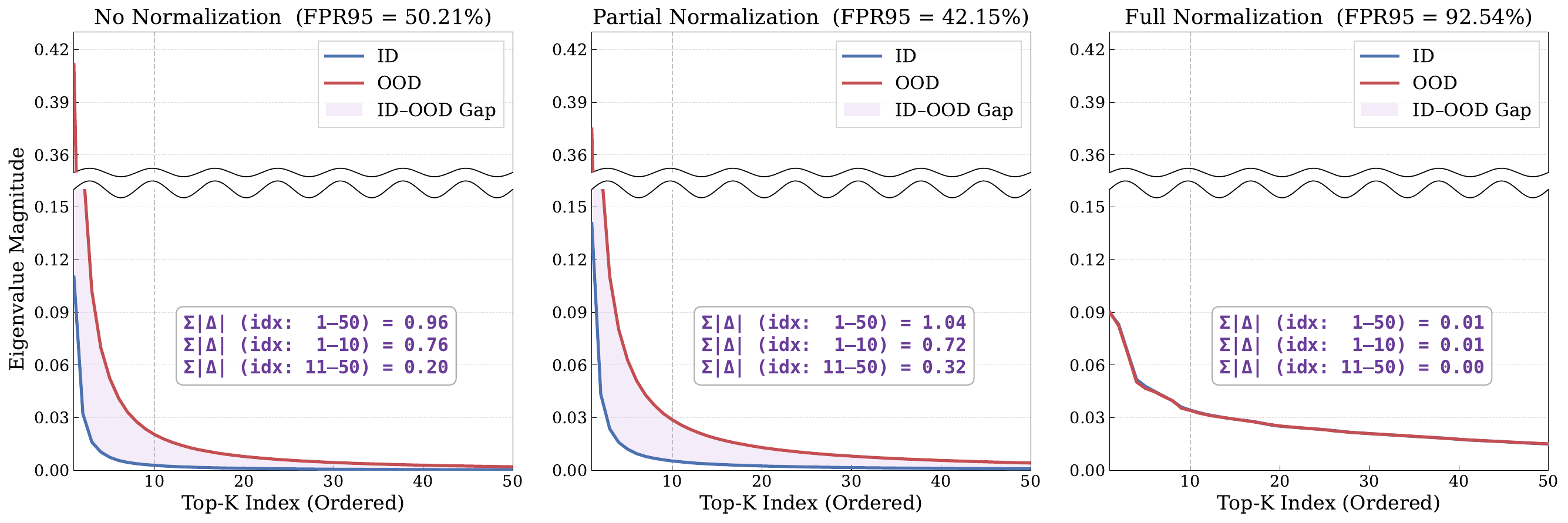}
    \vspace{-1em}
    \caption{
        Top-50 absolute eigenvalues of the feature Hessian for ID and OOD samples on ImageNet-200 under no, partial, and full normalization.
        Full normalization collapses the spectral gap, making the ID and OOD spectra nearly identical.
        Without normalization, the difference is concentrated in only the top portion of the spectrum.
        In contrast, partial normalization enlarges the spectral gap across a broader range of eigenvalues, aligning with its improved OOD detection performance.
        }
    \label{fig:eigenvalues}
\end{figure}

\begin{wraptable}{r}{0.475\linewidth}
    \vspace{-3.5em}
    \centering
    \caption{
        Comparison of logit and feature Hessians in AUROC ($\uparrow$) across datasets.
        The feature Hessian achieves higher performance, indicating richer signals for OOD detection.
    }
    \label{tab:logit hessian vs feature hessian}
    \resizebox{\linewidth}{!}{%
        \begin{sc}
        \begin{tabular}{l @{\hspace{0.5em}}|@{\hspace{0.5em}} c@{\hspace{1em}}c@{\hspace{1em}}c@{\hspace{1em}}c @{\hspace{0.5em}}|@{\hspace{0.5em}} c}
            \toprule
            \textbf{Hessian} & \textbf{C-10} & \textbf{C-100} & \textbf{IN-200} & \textbf{IN-1K} & \textbf{Avg.} \\
            \midrule
            
            Logit & 92.52 & 81.27 & 86.35 & 84.10 & 86.02 \\
            Feature & 92.46 & 81.94 & 88.65 & 85.74 & 86.81 \\
            
            \bottomrule
        \end{tabular}
        \end{sc}
    }
    \vspace{-2em}
\end{wraptable}

\subsection{Feature Hessian}
\label{subsec:feature hessian}
\textbf{Empirical efficacy of the feature Hessian.}\quad
To empirically validate the advantage of evaluating curvature in the latent space, we compare the detection performance of the feature Hessian with that of the logit Hessian. 
As shown in \Cref{tab:logit hessian vs feature hessian}, the feature Hessian consistently improves the average AUROC across standard benchmarks. 
While the performance remains competitive on simpler datasets such as CIFAR-10, the gains become more pronounced on more complex domains (\eg, ImageNet-200 and ImageNet-1K). 
These empirical findings align with our methodological motivation. 
By transporting curvature through the classifier mapping, the feature Hessian leverages the discriminative geometry encoded in the decision boundaries. 
Consequently, it captures both predictive uncertainty and intrinsic feature-space structure, yielding richer and more robust representations for OOD detection.

\vspace{1em}\noindent\textbf{Discriminative power of the feature Hessian.}\quad
To examine whether the feature Hessian exhibits curvature behavior similar to that of the parameter-space Hessian, we analyze the empirical spectral density (ESD)~\citep{yao2020pyhessian}. 
The ESD provides a comprehensive characterization of curvature by capturing the distribution of eigenvalues across the spectrum. 
As illustrated in \Cref{fig: Feature Hessian ESD}, although the absolute scales differ due to dimensionality and parameterization, both the spectral density and the largest eigenvalue consistently increase under distribution shifts. 
Notably, the feature Hessian exhibits a tightly concentrated spectrum for ID data and a more dispersed spectrum for OOD data. 
These observations indicate that the feature Hessian serves as a reliable and computationally efficient surrogate for the full parameter-space Hessian. 
Moreover, it can offer a sharper and more discriminative signal for separating ID from OOD inputs.

\begin{figure*}[!t]
    \centering
    \begin{subfigure}{0.245\linewidth}
        \centering
        \includegraphics[width=\linewidth]{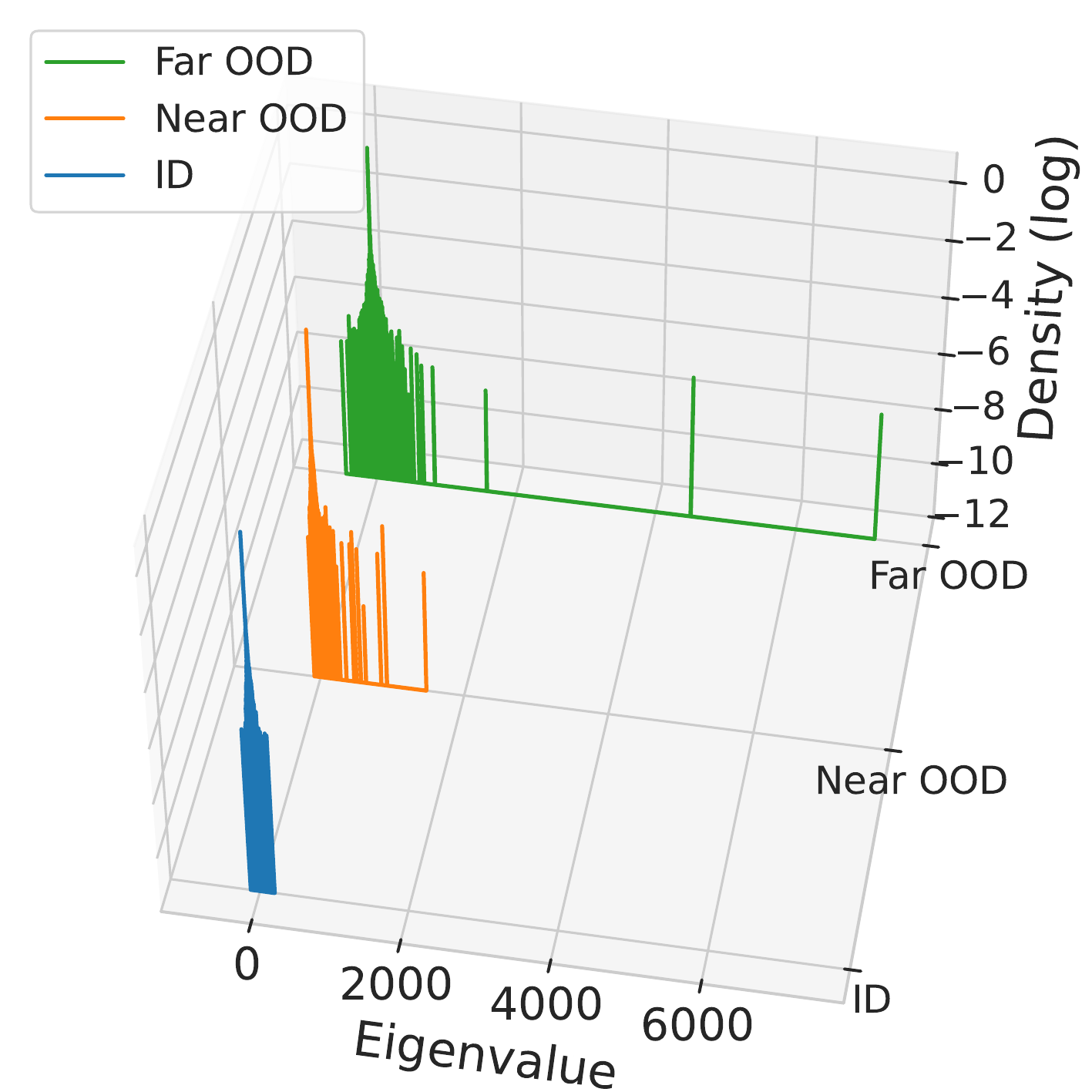}
        \subcaption{CIFAR-10, Full}
    \end{subfigure}
    \begin{subfigure}{0.245\linewidth}
        \centering
        \includegraphics[width=\linewidth]{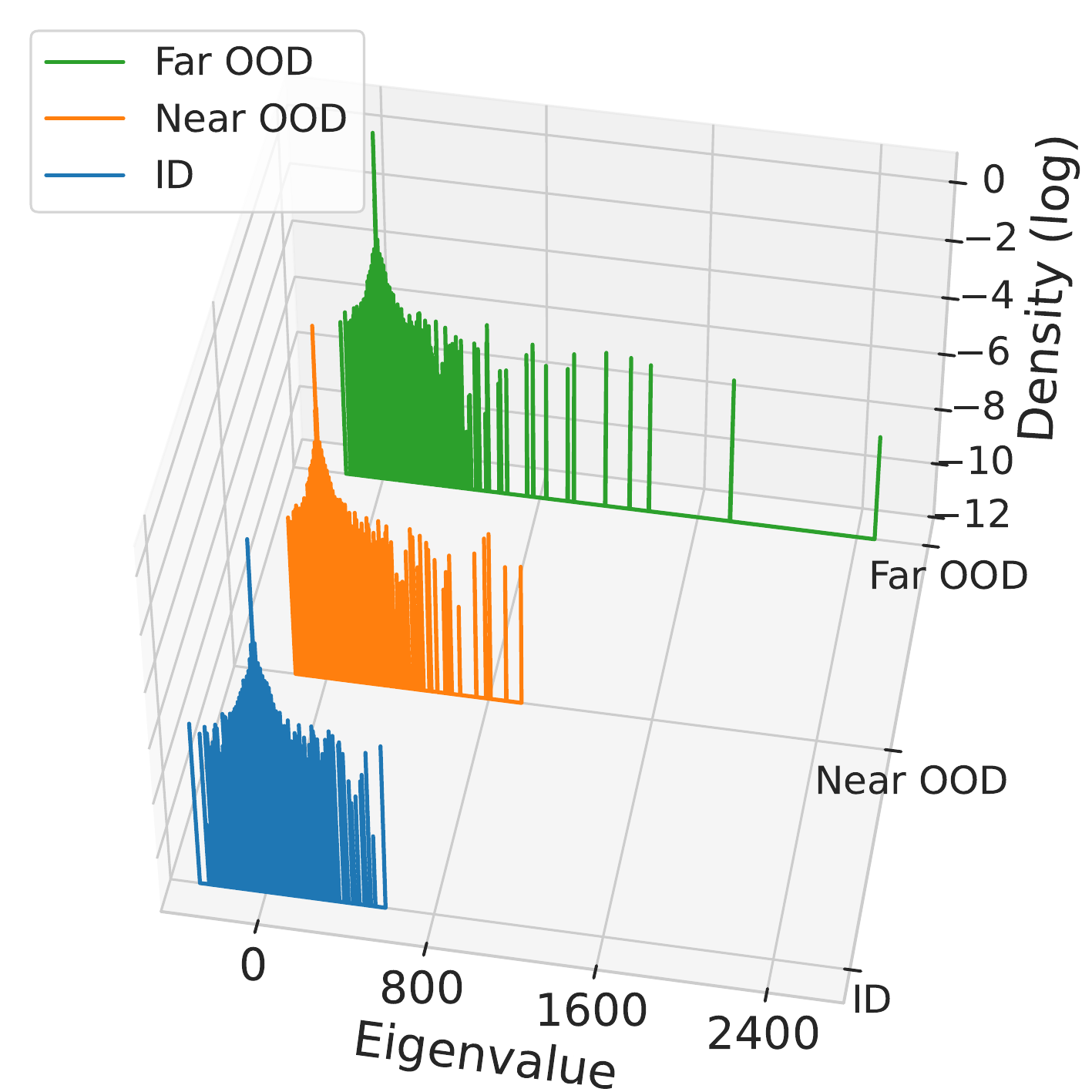}
        \subcaption{ImageNet-200, Full}
    \end{subfigure}
    \begin{subfigure}{0.245\linewidth}
        \centering
        \includegraphics[width=\linewidth]{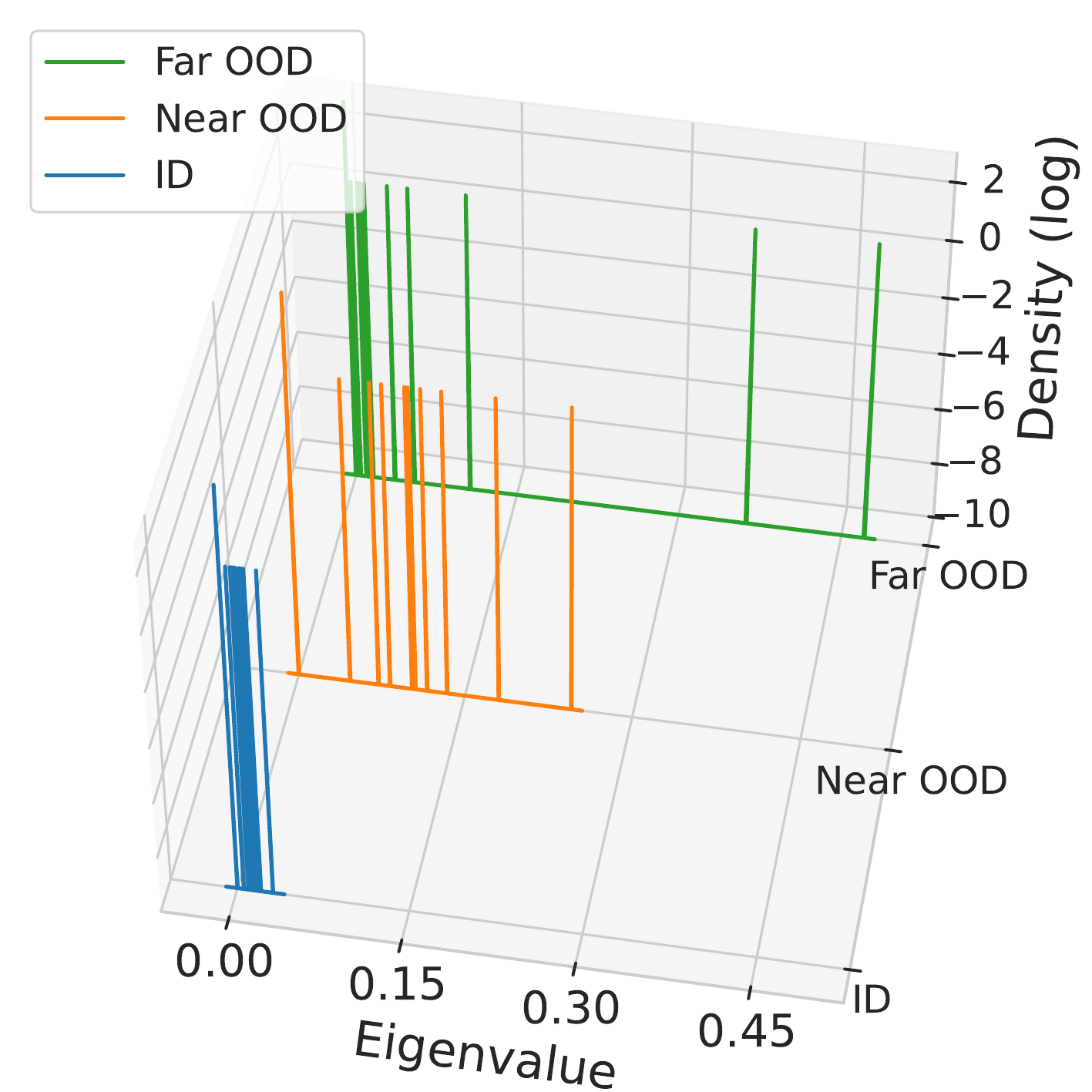}
        \subcaption{CIFAR-10, Feature}
    \end{subfigure}
    \begin{subfigure}{0.245\linewidth}
        \centering
        \includegraphics[width=\linewidth]{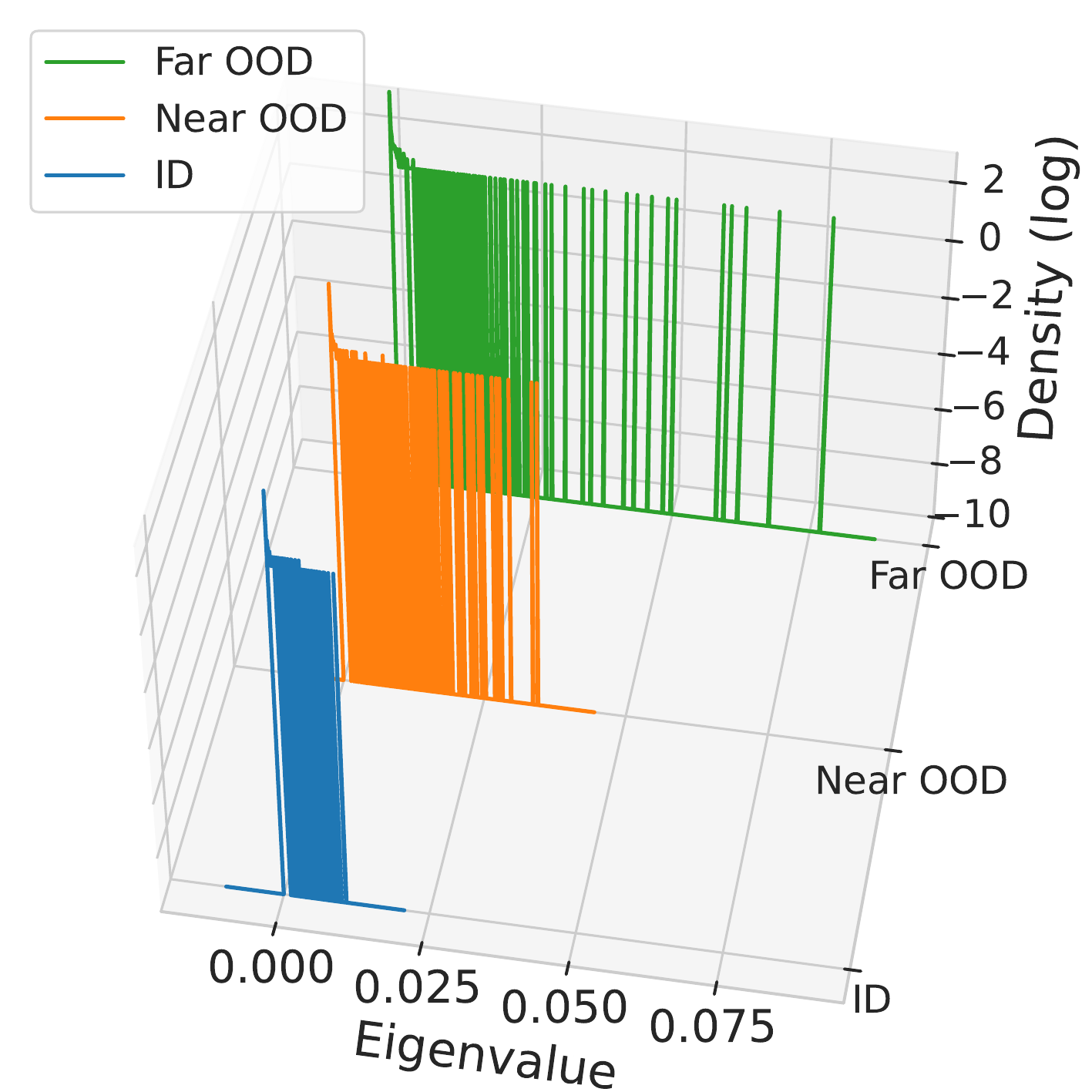}
        \subcaption{ImageNet-200, Feature}
    \end{subfigure}
    \vspace{-1.6em}
    \caption{
        Empirical spectral densities of the parameter and feature Hessians for ID and OOD data on CIFAR-10 and ImageNet-200.
        Consistent with the full-parameter Hessian, the feature Hessian demonstrates a monotonic increase in the largest eigenvalue when transitioning from ID to OOD samples.
        Furthermore, the feature Hessian exhibits a broader spectral distribution than the parameter Hessian, suggesting a more pronounced separability signal for OOD detection.
        }
    \label{fig: Feature Hessian ESD}
    \vspace{-0.5em}
\end{figure*}

\begin{table*}[t]
    \centering
    \caption{
        Comparison of OOD detection performance on the ImageNet-1K benchmark across diverse backbone architectures~\citep{regnet, densenet, wrn, resnext}.
        Across architectures, \fold yields consistent but modest improvements in average performance, indicating stable and architecture-agnostic gains.
    }
    \label{tab:ood benchmarks architecture}
    \vspace{-0.75em}
    \resizebox{\linewidth}{!}{%
        \begin{sc}
        \begin{tabular}{l @{\hspace{0.5em}}|@{\hspace{0.5em}} c@{\hspace{1.5em}}c@{\hspace{1.5em}}c@{\hspace{1.5em}}c @{\hspace{0.5em}}|@{\hspace{0.5em}} c}
            \toprule
            \multirow{2}{*}{\textbf{Method}} & \textbf{RegNet}~\citep{regnet} & \textbf{DenseNet}~\citep{densenet} & \textbf{WRN}~\citep{wrn} & \textbf{ResNeXt}~\citep{resnext} & \textbf{Average} \\
            \cmidrule(lr){2-2} \cmidrule(lr){3-3} \cmidrule(lr){4-4} \cmidrule(lr){5-5} \cmidrule(lr){6-6}
            & AUROC($\uparrow$) / FPR95($\downarrow$) & AUROC($\uparrow$) / FPR95($\downarrow$) & AUROC($\uparrow$) / FPR95($\downarrow$) & AUROC($\uparrow$) / FPR95($\downarrow$) & AUROC($\uparrow$) / FPR95($\downarrow$) \\ 
            \midrule
            
            \underline{\textbf{Baselines}}  \\
            
            MSP~\citep{MSP}
            & 88.05 \,/\, 44.60 & 79.80 \,/\, 59.20 & 83.19 \,/\, 54.78 & 82.00 \,/\, 56.96 & 83.26 \,/\, 53.89 \\
            
            TempScale~\citep{tempscale}
            & 90.79 \,/\, 38.80 & 82.08 \,/\, 55.83 & 84.86 \,/\, 52.00 & 84.09 \,/\, 53.87 & 85.45 \,/\, 50.13 \\
            
            MDS~\citep{MDS}
            & 88.32 \,/\, 37.66 & 66.44 \,/\, 75.31 & 71.71 \,/\, 64.41 & 65.85 \,/\, 71.96 & 73.08 \,/\, 62.33 \\
            
            RMDS~\citep{RMDS}
            & 90.66 \,/\, 34.60 & 79.34 \,/\, 61.99 & 85.31 \,/\, 43.25 & 83.86 \,/\, 45.81 & 84.79 \,/\, 46.41 \\

            EBO~\citep{RMDS}
            & \underline{92.82} \,/\, 35.26 & 82.51 \,/\, 52.13 & 84.43 \,/\, 52.21 & 84.83 \,/\, 51.75 & 86.15 \,/\, 47.84 \\
            
            ReAct~\citep{sun2021react}
            & 84.93 \,/\, 47.75 & 83.80 \,/\, 46.63 & 85.69 \,/\, 49.32 & 87.40 \,/\, 42.64 & 85.46 \,/\, 46.59 \\
            
            MLS~\citep{MLS}
            & 92.59 \,/\, 35.96 & 82.80 \,/\, 51.92 & 84.89 \,/\, 51.88 & 85.02 \,/\, 51.48 & 86.33 \,/\, 47.81 \\
            
            VIM~\citep{haoqi2022vim}
            & 91.50 \,/\, \underline{31.28} & 80.06 \,/\, 50.06 & 83.62 \,/\, 45.53 & 84.62 \,/\, 42.90 & 84.95 \,/\, 42.44 \\
            
            KNN~\citep{sun2022out}
            & 91.72 \,/\, \underline{30.72} & 72.88 \,/\, 64.20 & 85.39 \,/\, 44.12 & 85.38 \,/\, 43.72 & 83.84 \,/\, 45.69 \\
            
            ASH~\citep{ASH}
            & 78.22 \,/\, 61.88 & 83.00 \,/\, 50.93 & \underline{87.80} \,/\, 46.40 & \textbf{88.93} \,/\, \underline{40.32} & 84.49 \,/\, 49.88 \\
            
            SHE~\citep{SHE}
            & 87.23 \,/\, 50.81 & 81.21 \,/\, 57.61 & 77.67 \,/\, 65.85 & 79.97 \,/\, 65.03 & 81.52 \,/\, 59.82 \\
            
            \midrule
            \textbf{\underline{Fixed $\alpha$}}\\
            
            \fold  
            & \textbf{93.78} \,/\, \textbf{28.93} & 83.52 \,/\, 47.21 & 85.53 \,/\, 45.22 & 85.12 \,/\, 47.00 & \underline{86.99} \,/\, \underline{42.09} \\
            
            \fold-r  
            & 89.39 \,/\, 38.37 & \textbf{85.41} \,/\, \underline{45.58} & 86.59 \,/\, \underline{42.95} & 86.38 \,/\, 42.68 & \underline{86.94} \,/\, \underline{42.40} \\
            
            \fold-a  
            & 86.87 \,/\, 48.44 & \underline{84.91} \,/\, \textbf{44.59} & \underline{87.95} \,/\, \textbf{37.60} & \underline{88.61} \,/\, \textbf{37.54} & \textbf{87.08} \,/\, \textbf{42.05} \\
            
            \midrule
            \textbf{\underline{Auto $\alpha$ tuning}}\\
            
            \autofold  
            & \underline{92.80} \,/\, 32.76 & 83.69 \,/\, 47.76 & 85.62 \,/\, 49.05 & 85.28 \,/\, 48.48 & 86.85 \,/\, 44.51 \\
            
            \autofold-r
            & 90.02 \,/\, 37.12 & 83.22 \,/\, 51.80 & 84.76 \,/\, 46.45 & 84.67 \,/\, 45.69 & 85.67 \,/\, 45.27 \\
            
            \autofold-a  
            & 85.59 \,/\, 51.88 & \underline{85.02} \,/\, \underline{45.50} & \textbf{88.60} \,/\, \underline{42.53} & \underline{88.15} \,/\, \underline{37.71} & 86.84 \,/\, 44.40 \\
            
            \bottomrule
        \end{tabular}
        \end{sc}
    }
    \vspace{-1.5em}
\end{table*}
\subsection{Generalization across diverse architectures}
\vspace{-0.4em}
To evaluate architectural robustness, we compare OOD detection performance on the ImageNet-1K benchmark using four backbone networks: RegNet \citep{regnet}, DenseNet \citep{densenet}, WRN \citep{wrn}, and ResNeXt \citep{resnext}. 
As shown in \Cref{tab:ood benchmarks architecture}, the \fold framework consistently demonstrates strong and stable performance across all evaluated architectures. 
In particular, \fold-\textsc{a} achieves the highest average AUROC of 87.08\% and the lowest average FPR95 of 42.05\%, corresponding to a 0.75\% absolute AUROC improvement over the strongest baseline (\ie, MLS) and a 0.39\% reduction in FPR95 compared to the most competitive baseline for that metric (\ie, VIM). 
Moreover, the base \fold model also attains a competitive average AUROC of 86.99\%.

These results indicate that the geometric advantages of curvature-based OOD scoring are not limited to a particular architecture family but generalize across diverse models. 
Even without an external validation set, the \autofold variants maintain robust detection performance. 
Overall, these findings establish \fold as a versatile OOD detection approach capable of achieving reliable and stable performance across architectures.
\section{Theoretical Analysis}
\label{sec:theoretical_analysis}

In this section, we provide theoretical justification for the proposed \fold score by analyzing the curvature of the energy function under a simplified binary classification setting. 
In particular, we show that the expected trace of the feature Hessian is provably larger for OOD data than for ID data.

Let the feature representation $\tilde{\bx}=h(\bx)\in\RR^d$ follow a class-conditional distribution $\cD_{\mathrm{id}}$ defined as
\begin{equation}
    \tilde{\bx}\mid y \sim \cD_{\mathrm{id}}
    =
    \begin{cases}
        \cN(\bmu,\bSigma) & \text{if } y=1,\\
        \cN(-\bmu,\bSigma) & \text{if } y=-1,
    \end{cases}
\end{equation}
where $\bSigma$ is a positive definite covariance matrix.
Consider a linear classifier $g(\tilde{\bx};\bw)=[\tilde{\bx}^\top\bw,0]^\top$ trained by minimizing the expected logistic loss $\EE[-\log(\sum_{i=1}^2 \exp(yz_i))]$, where $\bz=g(\tilde{\bx};\bw)$ denotes the logits.
To model OOD inputs, we assume features are drawn from a shifted and rotated variant of $\cD_{\mathrm{id}}$:
\begin{equation}
    \tilde{\bx}\mid y \sim \cD_{\mathrm{ood}}
    =
    \begin{cases}
        \cN(c\Rb\bmu,\bSigma) & \text{if } y=1,\\
        \cN(-c\Rb\bmu,\bSigma) & \text{if } y=-1,
    \end{cases}
\end{equation}
where $\Rb$ is a rotation matrix and $c>0$ controls the magnitude of the distributional shift.

\begin{theorem}
    \label{theorem:id_less_than_ood}
    If $c|\bmu^\top \bSigma^{-1} \Rb\bmu| \le \bmu^\top \bSigma^{-1} \bmu$, then the expected trace of the feature Hessian is smaller for ID data than for OOD data, \ie, 
    \begin{equation}
        \EE_{\cD_{\mathrm{id}}}\!\left[\tr(\nabla^2_{\tilde{\bx}}\cL(\bw))\right]
        \le
        \EE_{\cD_{\mathrm{ood}}}\!\left[\tr(\nabla^2_{\tilde{\bx}}\cL(\bw))\right].
    \end{equation}
\end{theorem}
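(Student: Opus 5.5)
The plan is to reduce everything to a one-dimensional Gaussian computation. First I would compute the feature Hessian explicitly. With $\bz=g(\tilde{\bx};\bw)=[\tilde{\bx}^\top\bw,0]^\top$ and $\cL(\bz)=\log(e^{z_1}+e^{z_2})$, we have $\cL=\log(1+e^{t})$ with $t=\tilde{\bx}^\top\bw$. By Eq.~\eqref{eq:feature_hessian}, or by direct differentiation,
\begin{equation*}
    \nabla^2_{\tilde{\bx}}\cL=\sigma(t)\bigl(1-\sigma(t)\bigr)\bw\bw^\top,
    \qquad
    \tr\bigl(\nabla^2_{\tilde{\bx}}\cL\bigr)=\|\bw\|^2\,\varphi(t),
\end{equation*}
where $\sigma$ is the logistic function and $\varphi=\sigma(1-\sigma)$. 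The weight $\bw$ is fixed (trained on ID data), so the factor $\|\bw\|^2$ is common to both sides. Hence it suffices to compare $\EE[\varphi(t)]$ under the two distributions. The key facts about $\varphi$ are that it is even, positive, and nonincreasing in $|t|$.

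Second, I would identify the trained classifier. I read the training objective as the standard logistic loss $\EE[\log(1+e^{-y\tilde{\bx}^\top\bw})]$, which is what the displayed expression is intended to encode. Under $\cD_{\mathrm{id}}$ with equal priors, Bayes' rule gives
\begin{equation*}
    \log\frac{p(y=1\mid\tilde{\bx})}{p(y=-1\mid\tilde{\bx})}=2\bmu^\top\bSigma^{-1}\tilde{\bx}.
\end{equation*}
The logistic model is therefore well specified, with true parameter $\bw^\star=2\bSigma^{-1}\bmu$. The population logistic loss is strictly convex because $\bSigma\succ0$. Since the true conditional model minimizes expected log-loss, $\bw^\star$ is the unique minimizer. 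I expect this identification to be the main obstacle: the result should not depend on delicate properties of the minimizer, and the cleanest route is this well-specification argument. Any fallback only needs $\bw=\beta\bSigma^{-1}\bmu$ for some $\beta>0$.

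Third, I would compute the law of $t=\tilde{\bx}^\top\bw$. With $\bw=\beta\bSigma^{-1}\bmu$ and $s^2=\bw^\top\bSigma\bw$, the ID law of $t$ is the equal mixture of $\cN(\pm m,s^2)$ with $m=\beta\,\bmu^\top\bSigma^{-1}\bmu$. The OOD law is the equal mixture of $\cN(\pm m',s^2)$ with $m'=\beta c\,\bmu^\top\bSigma^{-1}\Rb\bmu$, and the variance $s^2$ is the same because the covariance is unchanged. The hypothesis of Theorem~\ref{theorem:id_less_than_ood} is exactly $|m'|\le m$.

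Finally, define $\Phi(a)=\EE_{Z\sim\cN(0,1)}[\varphi(a+sZ)]$. Because $\varphi$ is even, both mixture components give the same value, so
\begin{equation*}
    \EE_{\cD_{\mathrm{id}}}[\varphi(t)]=\Phi(m),
    \qquad
    \EE_{\cD_{\mathrm{ood}}}[\varphi(t)]=\Phi(|m'|).
\end{equation*}
$\Phi$ is a convolution of the even, unimodal function $\varphi$ with a centered Gaussian density, which is also even and unimodal. By the classical result that such a convolution is again even and unimodal (Wintner/Anderson), $\Phi$ is nonincreasing on $[0,\infty)$. A direct alternative is to differentiate: $\Phi'(a)=\EE[\varphi'(a+sZ)]$, and pairing $u$ with $-u$ around $a$ shows this is $\le0$ for $a\ge0$. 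Hence $\Phi(m)\le\Phi(|m'|)$. Multiplying by $\|\bw\|^2$ gives the claimed inequality.
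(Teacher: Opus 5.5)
Your proposal is correct and follows the paper's proof essentially step for step. Both arguments reduce the feature-Hessian trace to $\|\bw\|^2\varphi(\tilde{\bx}^\top\bw)$ with $\bw\propto\bSigma^{-1}\bmu$, and then compare Gaussian averages of the even function $\varphi$ under a common variance and means $m\ge|m'|$. Your version is somewhat more complete on two points: you derive $\bw^\star=2\bSigma^{-1}\bmu$ from well-specification of the logistic model rather than citing Oh et al., and you justify, via Wintner/Anderson unimodality of the Gaussian convolution or the derivative pairing, the step the paper only asserts from the monotonicity of $\varphi$ in $|x|$.
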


\begin{proof}
For $\bz=g(\tilde{\bx};\bw)=[\tilde{\bx}^\top\bw,0]^\top$, the trace of the logit-space Hessian satisfies
\begin{equation}
    \tr(\nabla_{\bz}^2\cL(\bw))
    =
    1-\sum_{i=1}^2
    \left(\frac{\exp z_i}{\exp z_1+1}\right)^2.
\end{equation}

Taking expectation yields
\begin{equation}
    \EE_{\cD}[\tr(\nabla_{\bz}^2\cL(\bw))]
    =
    2\EE_{\cD}\!\left[
    \frac{\exp z_1}{(\exp z_1+1)^2}
    \right].
\end{equation}

A fundamental property of binary classification is that the expected energy loss $\mathbb{E}[\cL(\bw)]$ admits a unique minimizer whose direction aligns with that of the Bayes-optimal classifier~\citep{soudry2018implicit, ji2018risk, nacson2019stochasticgradientdescentseparable, 10.5555/3618408.3619509}. 
In particular, Oh \etal~\citep[Thm.~3.1]{10.5555/3618408.3619509} show that training on ID data converges to the population-risk minimizer $\bw^* = \alpha \bSigma^{-1} \bmu$ for some $\alpha > 0$. 
Without loss of generality, we set $\alpha = 1$.

Under $\cD_{\mathrm{id}}$, $\tilde{\bx}$ follows a balanced mixture of $\cN(\bmu,\bSigma)$ and $\cN(-\bmu,\bSigma)$; 
under $\cD_{\mathrm{ood}}$, the same structure holds with shifted means. 
Hence
\begin{align}
    \label{proof:eqns}
    \EE_{\cD_{\mathrm{id}}}[\tr(\nabla_{\bz}^2\cL(\bw))]
    &=
    2\EE_{\tilde{\bx}\sim\cN(\bmu,\bSigma)}
    \!\left[
    \frac{\exp(\tilde{\bx}^\top\bSigma^{-1}\bmu)}
    {(\exp(\tilde{\bx}^\top\bSigma^{-1}\bmu)+1)^2}
    \right], \\
    \EE_{\cD_{\mathrm{ood}}}[\tr(\nabla_{\bz}^2\cL(\bw))]
    &=
    2\EE_{\tilde{\bx}\sim\cN(c\Rb\bmu,\bSigma)}
    \!\left[
    \frac{\exp(\tilde{\bx}^\top\bSigma^{-1}\bmu)}
    {(\exp(\tilde{\bx}^\top\bSigma^{-1}\bmu)+1)^2}
    \right].
\end{align}

By the affine invariance of the Gaussian distribution, the scalar random variable $x := \tilde{\bx}^\top \bSigma^{-1} \bmu$ follows a normal distribution with variance $\sigma^2 = \bmu^\top \bSigma^{-1} \bmu$. 
Its mean is $\mu_{\mathrm{id}}=\bmu^\top\bSigma^{-1}\bmu$ under $\cD_{\mathrm{id}}$ and $\mu_{\mathrm{ood}}=c\bmu^\top\bSigma^{-1}\Rb\bmu$ under $\cD_{\mathrm{ood}}$. 
Therefore,
\begin{align}
    \EE_{\cD_{\mathrm{id}}}[\tr(\nabla_{\bz}^2\cL(\bw))]
    &=
    2\EE_{x\sim\cN(\mu_{\mathrm{id}},\sigma^2)}[\varphi(x)],\\
    \EE_{\cD_{\mathrm{ood}}}[\tr(\nabla_{\bz}^2\cL(\bw))]
    &=
    2\EE_{x\sim\cN(\mu_{\mathrm{ood}},\sigma^2)}[\varphi(x)],
\end{align}
where $\varphi(x)=\frac{\exp x}{(\exp x+1)^2}$.

Since $\varphi(x)$ is strictly decreasing in $|x|$ for $x\ge0$, the condition
$c|\bmu^\top\bSigma^{-1}\Rb\bmu|\le\bmu^\top\bSigma^{-1}\bmu$
implies
\begin{equation}
    \EE_{\cD_{\mathrm{id}}}[\tr(\nabla_{\bz}^2\cL(\bw))]
    \le
    \EE_{\cD_{\mathrm{ood}}}[\tr(\nabla_{\bz}^2\cL(\bw))].
\end{equation}

Applying the second-order chain rule,
\begin{equation}
    \nabla_{\tilde{\bx}}^2\cL(\bw)
    =
    \bU\nabla_{\bz}^2\cL(\bw)\bU^\top,
    \quad
    \bU=
    \begin{bmatrix}\bw^*&0\end{bmatrix}.
\end{equation}
Using the cyclic trace identity,
\begin{equation}
    \tr(\nabla_{\tilde{\bx}}^2\cL(\bw))
    =
    \|\bw^*\|^2
    \tr(\nabla_{\bz}^2\cL(\bw))/2.
\end{equation}

Thus the inequality also holds in feature space:
\begin{equation}
    \EE_{\cD_{\mathrm{id}}}[\tr(\nabla_{\tilde{\bx}}^2\cL)]
    \le
    \EE_{\cD_{\mathrm{ood}}}[\tr(\nabla_{\tilde{\bx}}^2\cL(\bw))].
\end{equation}
\end{proof}

Theorem~\ref{theorem:id_less_than_ood} formalizes the intuition that OOD samples, which lie in regions of lower model confidence due to their absence during training, tend to exhibit larger traces of the feature Hessian. 
For example, when $\bSigma=\Ib$, the inequality naturally holds for any $c<1$ and rotation $\Rb$, corresponding to shifts toward low-confidence regions.

Overall, this analysis provides theoretical support for using the feature-Hessian trace as an OOD detection score in \fold. 
Despite the simplifying assumptions on the in- and out-of-distribution data (\eg, Gaussian mixtures and rotational shifts) and the omission of partial feature normalization, it offers an intuitive explanation for why OOD samples tend to exhibit larger Hessian traces than ID samples. 
Consistent with this intuition, our empirical results (see~\Cref{tab:feature hessian} in the Appendix) show substantially larger feature-Hessian traces for OOD data, supporting the practical effectiveness of the proposed approach.
\section{Discussion}
\subsection{Feature Magnitude}

The role of feature magnitude depends strongly on the complexity of the underlying recognition task.
Feature magnitude reflects semantic activation intensity, but its discriminative value varies across datasets.
For simpler datasets (\eg, CIFAR-10), feature direction alone is often sufficient for class separation, and excessively large magnitudes mainly introduce overconfident noise, favoring strong suppression ($\alpha\!\approx\!1$).
In contrast, for more complex datasets (\eg, ImageNet) with substantial intra-class variation, feature magnitude contains fine-grained semantic information that complements feature direction, making weaker normalization ($\alpha\!\approx\!0.2$) preferable to avoid discarding useful cues.
These observations explain why the optimal $\alpha$ varies with dataset complexity and suggest that adaptive feature calibration is a promising direction for future work.

\subsection{Theoretical Limitation}
Our theoretical analysis provides intuition rather than a complete characterization.
It relies on a simplified setting and therefore does not fully capture the deep, multiclass regime considered in our experiments.
Accordingly, it motivates the connection between the feature-Hessian trace and curvature rather than providing a formal guarantee.
A tighter characterization of the ID-OOD curvature gap is left for future work.

\section{Related Work}
\emph{Post-hoc methods} derive uncertainty scores from a fixed pre-trained model, including logit-based~\citep{MSP, ODIN, Energy, MLS, tempscale}, distance-based~\citep{MDS, RMDS, haoqi2022vim, sun2022out, sun2021react, ASH, kernelpca, actsub}, and gradient-based methods~\citep{GradNorm, gradpca}. 
Related curvature-based methods estimate parameter-space sharpness via Laplace approximations~\citep{ritter2018scalable, kristiadi2020being}, but incur substantially higher computational cost. 
\emph{Training-time methods} instead improve OOD detection through modified optimization, including outlier exposure~\citep{OutlierExposure}, synthetic outlier generation~\citep{Vos, LargeSamples}, confidence calibration~\citep{lee2018training, confbranch2018arxiv}, logit normalization~\citep{LogitNorm}, contrastive learning~\citep{csi20nips}, regularization without OOD data~\citep{godin20cvpr, yu2019unsupervised}, and loss-landscape regularization~\citep{ravikumar2024curvature}. 

\section{Conclusion}
\label{sec:conclusion}
In this work, we investigated the curvature discrepancy between ID and OOD data to enable efficient post-hoc OOD detection, leading to three main contributions:
\begin{enumerate}[leftmargin=*, noitemsep, label=\roman*.]
    \item We show that OOD inputs exhibit larger Hessian curvature than ID data, with the discrepancy increasing under stronger distributional shifts.
    
    \item We propose \fold, which leverages the feature Hessian and partial normalization to improve ID-OOD separability without parameter-space overhead.

    \item We introduce \autofold, a self-supervised tuning scheme based on ID logit masking for automatic normalization calibration without external data.
\end{enumerate}
Together, these contributions establish a principled and efficient framework for reliable OOD detection in real-world deployment settings.

\section*{Acknowledgement}
This work was partly supported by the Institute of Information \& communications Technology Planning \& Evaluation (IITP) grants funded by the Korea government (MSIT) (RS-2019-II191906, Artificial Intelligence Graduate School Program (POSTECH); RS-2022-II220959, (part2) Few-Shot learning of Causal Inference in Vision and Language for Decision Making; RS-2025-25441838, Development of a human foundation model for human-centric universal artificial intelligence and training of personnel), and the National Research Foundation of Korea (NRF) grants funded by the Korea government (MSIT) (RS-2023-00210466, RS-2026-25500419).

%
%
\bibliographystyle{splncs04}
\bibliography{main}

\appendix
\clearpage

\begin{center}
    {\LARGE \bf Exploiting Local Flatness for Efficient Out-of-Distribution Detection} \\
    \vspace{1.0em}
    
    {\large Supplementary Material} \\
    \vspace{1.0em}
\end{center}

\section{Experimental Details}
\label{app:experimental details}
This section provides additional details of the experimental framework, expanding upon the core setup described in the main text. 
For completeness, we briefly revisit relevant concepts where necessary. 
We first describe the strictly separated validation protocol used to ensure unbiased evaluation. 
We then present the CIFAR and ImageNet benchmark datasets used in our experiments. 
Next, we summarize the suite of post-hoc OOD detection baselines considered for comparison. 
Finally, we detail the implementation settings, including model architectures, computational environment, and the hyperparameter optimization procedures for both \fold and its self-supervised variant, \autofold.

\subsection{Validation Protocol and Hyperparameter Tuning}
\label{app:validation_protocol}
Many OOD detection methods rely on tunable hyperparameters.  
In contrast to earlier studies that occasionally tune these parameters directly on test samples, which may lead to overly optimistic performance estimates, we adopt a strictly separated validation protocol following~\citep{yang2022openood, zhang2023openood}.  
Specifically, we construct independent ID and OOD validation sets, denoted as $\mathcal{D}_{\text{val}}^{\text{ID}}$ and $\mathcal{D}_{\text{val}}^{\text{OOD}}$, respectively.  
The ID validation set is formed by holding out a subset of the original ID test data, consisting of 1,000 samples for CIFAR and 5,000 samples for ImageNet.  
The OOD validation set is carefully curated to ensure that its label space is strictly disjoint from that of the OOD test set, \ie, $\mathcal{Y}_{\text{val}}^{\text{OOD}} \cap \mathcal{Y}_{\text{test}}^{\text{OOD}} = \emptyset$.  
These validation sets are used exclusively for hyperparameter tuning and model selection.  
Consequently, the final test evaluations remain strictly unbiased.

\subsection{Datasets}
\label{app:datasets}
To evaluate OOD detection performance under diverse and realistic conditions, we adopt the standard benchmark suites curated by the OpenOOD framework~\citep{yang2022openood, zhang2023openood}.

\vspace{1em}\noindent\textbf{CIFAR Benchmarks.}\quad
For the CIFAR-based evaluation~\citep{CIFAR-10}, we employ CIFAR-10 and CIFAR-100 as the ID datasets.  
The corresponding near-OOD and far-OOD datasets are defined according to their semantic similarity and visual characteristics relative to the ID data.
When CIFAR-10 serves as the ID dataset, the near-OOD group consists of CIFAR-100 and Tiny ImageNet (TIN)~\citep{tin}, which contain object categories semantically similar to those in the ID data.  
The far-OOD group includes MNIST~\citep{deng2012mnist} (grayscale handwritten digits), SVHN~\citep{svhn} (street-view house numbers), Textures~\citep{cimpoi2014describing} (describable texture patterns), and Places365~\citep{zhou2017places} (diverse scene imagery).  
These datasets differ substantially from CIFAR in both semantic content and low-level statistics.
When CIFAR-100 is used as the ID dataset, the near-OOD datasets consist of CIFAR-10 and TIN, while the far-OOD datasets remain identical to those used in the CIFAR-10 setting.
To strictly prevent evaluation leakage, we follow the protocol of~\citep{yang2022openood} to remove overlapping images between CIFAR and either TIN or Places365.  
Specifically, 1,203 images are removed from TIN due to duplication with CIFAR.  
In addition, 1,000 images from TIN spanning 20 categories are reserved as $\mathcal{D}_{\text{val}}^{\text{OOD}}$, ensuring that they are disjoint from $\mathcal{D}_{\text{test}}^{\text{OOD}}$.  
For the far-OOD datasets, an additional 1,305 images are excluded due to semantic overlap with the ID categories.

\vspace{1em}\noindent\textbf{ImageNet Benchmarks.}\quad
For large-scale evaluation, we utilize ImageNet-1K~\citep{deng2009imagenet} and its computationally tractable subset, ImageNet-200, as the ID datasets.  
From the official ImageNet validation set, 45,000 images are allocated to $\mathcal{D}_{\text{test}}^{\text{ID}}$, while the remaining 5,000 images are reserved as $\mathcal{D}_{\text{val}}^{\text{ID}}$.
Due to the large semantic diversity of the ImageNet label space, near-OOD and far-OOD splits are defined empirically based on detection difficulty rather than strict semantic similarity.  
The near-OOD group includes SSB-Hard~\citep{SSB-Hard}, which consists of semantically similar classes sampled from ImageNet-21K, and NINCO~\citep{ninco}, a manually curated dataset of challenging OOD samples.  
The far-OOD group includes iNaturalist~\citep{van2018inaturalist} (fine-grained biological categories), Textures~\citep{cimpoi2014describing} (describable texture patterns), and OpenImage-O~\citep{haoqi2022vim}, a diverse subset of Open Images.
Following the OpenOOD protocol, 1,763 images from OpenImage-O are designated as $\mathcal{D}_{\text{val}}^{\text{OOD}}$, and the remaining samples are used as $\mathcal{D}_{\text{test}}^{\text{OOD}}$.

\subsection{Baseline Methods}
We compare our method against a comprehensive suite of post-hoc OOD detection baselines following the taxonomy of the OpenOOD framework~\citep{zhang2023openood}. 
\textit{MSP}~\citep{MSP} uses the maximum softmax probability over ID classes as a confidence score. 
\textit{OpenMax}~\citep{openmax16cvpr} modifies the softmax layer to estimate the probability that an input belongs to an unknown class. 
\textit{TempScale}~\citep{tempscale} improves confidence calibration by applying temperature scaling to the softmax logits. 
\textit{ODIN}~\citep{ODIN} extends temperature scaling by adding small input perturbations to further separate ID and OOD score distributions. 
\textit{MDS}~\citep{MDS} models penultimate-layer features using class-conditional Gaussian distributions and employs the Mahalanobis distance as the detection score. 
Its variant, \textit{MDSEns}~\citep{MDS}, aggregates Mahalanobis scores computed from multiple intermediate layers. 
\textit{RMDS}~\citep{RMDS} extends the Mahalanobis framework by incorporating a background score derived from an unconditional Gaussian distribution. 
\textit{EBO}~\citep{Energy} adopts an energy-based formulation that converts logits into an energy score reflecting the likelihood of a sample belonging to the ID distribution. 
\textit{ReAct}~\citep{sun2021react} suppresses abnormal activations by truncating feature values that exceed a predefined threshold. 
\textit{MLS}~\citep{MLS} uses the maximum logit value directly as the detection score without applying the softmax transformation. 
\textit{VIM}~\citep{haoqi2022vim} combines the logit-based score with the norm of the residual feature component orthogonal to the principal ID subspace. 
\textit{KNN}~\citep{sun2022out} performs a $k$-nearest neighbor search on penultimate-layer features to estimate the distance from the ID training distribution. 
\textit{ASH}~\citep{ASH} applies activation shaping by pruning low-magnitude activations and simplifying the remaining ones to suppress spurious signals. 
\textit{SHE}~\citep{SHE} constructs template features for each ID class and computes the OOD score based on the distance between the test feature and its nearest class template.

\subsection{Implementation Details}
\label{app:implementation_details}
For models trained on CIFAR-10, CIFAR-100, and ImageNet-200, we employ a ResNet-18 backbone~\citep{resnet}. 
For large-scale evaluation on ImageNet-1K, we utilize a pre-trained ResNet-50 model provided by the OpenOOD framework~\citep{zhang2023openood}. 
To ensure statistical robustness, all results for the CIFAR datasets and ImageNet-200 are reported as the average over three independent runs with different random seeds (0, 1, and 2). 
For ImageNet-1K, we follow the standard protocol and report the performance of the publicly available pre-trained checkpoint.
Hessian-related statistics, including the trace, the largest eigenvalue, and the empirical spectral density discussed in \Cref{sec:observations} and \Cref{sec:further analysis}, are computed using the approximation procedures implemented in PyHessian~\citep{yao2020pyhessian}. 
All computational procedures, including evaluation and Hessian approximations, are conducted on NVIDIA A100 GPUs.
Our proposed methods introduce a normalization coefficient $\alpha$, whose search space is defined over the interval $(0, 1]$. 
The optimization protocol for this parameter differs depending on the specific variant. 
For \fold, we perform a grid search with a step size of 0.1 (\ie, $\alpha \in \{0.1, 0.2, \dots, 1.0\}$), and select the optimal value based on the AUROC measured on the auxiliary OOD validation set described in \Cref{app:datasets}. 
In contrast, for \autofold, we employ a finer grid with a step size of 0.01 (\ie, $\alpha \in \{0.01, 0.02, \dots, 1.00\}$). 
As detailed in \Cref{subsec:autofold}, the optimal $\alpha$ for \autofold is determined in a strictly self-supervised manner using only the ID validation set. 
Specifically, $\alpha$ is selected by maximizing the AUROC between the original ID validation samples and their logit-masked pseudo-OOD counterparts, thereby eliminating any reliance on prior OOD knowledge.

\section{Detailed Experimental Results}
\label{app:additional results}
This section presents additional experimental results that complement and further support the findings reported in the main text.
For clarity, the analysis is organized into three parts.
First, we examine the structural properties of the feature-space Hessian and show its correspondence with the parameter-space Hessian through spectral statistics and ESD visualizations.
Second, we extend the curvature analysis by presenting sample-wise Hessian trace distributions, providing a more detailed view of the geometric distinctions between ID and OOD data across multiple benchmarks.
Finally, we report per-dataset OOD detection results to assess the stability and generalization of the proposed framework relative to existing baselines.

\begin{table}[t]
    \centering
    \caption{
        Comparison of the largest eigenvalue and the trace of the feature Hessian across different distribution regimes.
        Consistent with the trends observed for the full Hessian in \Cref{tab:full parameter hessian statistics}, the feature Hessian exhibits a similar pattern: ID samples attain smaller values, while both metrics increase progressively from ID to near-OOD to far-OOD.
        This alignment indicates that the distribution-dependent sensitivity observed at the full-parameter level is likewise reflected in the feature space.
        }
    \label{tab:feature hessian}
    \vspace{-0.75em}
    \resizebox{\linewidth}{!}{%
        \begin{sc}
        \begin{tabular}{l cc cc cc cc} 
            \toprule
            & \multicolumn{2}{c}{\textbf{CIFAR-10}} & \multicolumn{2}{c}{\textbf{CIFAR-100}} & \multicolumn{2}{c}{\textbf{ImageNet-200}} & \multicolumn{2}{c}{\textbf{ImageNet-1K}} \\ 
            \cmidrule(lr){2-3} \cmidrule(lr){4-5} \cmidrule(lr){6-7} \cmidrule(lr){8-9}
            \textbf{Dataset} & $\lambda_{\max}(H)_{ \times10^{-3}}$ & $\tr(H)_{ \times10^{-3}}$ & $\lambda_{\max}(H)_{ \times10^{-3}}$ & $\tr(H)_{ \times10^{-3}}$ & $\lambda_{\max}(H)_{ \times10^{-3}}$ & $\tr(H)_{ \times10^{-3}}$ & $\lambda_{\max}(H)_{ \times10^{-3}}$ & $\tr(H)_{ \times10^{-3}}$ \\ 
            \midrule
            
            ID
            & \alignednum{36.3}[5.2] & \alignednum{161.7}[29.0] & \alignednum{16.3}[0.3] & \alignednum{546.3}[4.1] & \alignednum{10.5}[0.2] & \alignednum{484.1}[2.4] & \alignednum{6.3} & \alignednum{650.8} \\
            
            Near-OOD
            & \alignednum{263.2}[43.0] & \alignednum{1184.7}[150.3] & \alignednum{50.3}[1.1] & \alignednum{1394.5}[5.1] & \alignednum{33.2}[0.4] & \alignednum{1140.7}[6.4] & \alignednum{18.7} & \alignednum{1082.6} \\
            
            Far-OOD
            & \alignednum{441.6}[70.8] & \alignednum{1362.5}[212.0] & \alignednum{141.6}[2.2] & \alignednum{1262.4}[21.1] & \alignednum{67.4}[1.5] & \alignednum{1415.4}[5.6] & \alignednum{75.5} & \alignednum{1504.7} \\
            \bottomrule
        \end{tabular}
        \end{sc}
    }
\end{table}

\begin{table}[t!]
    \centering
    \caption{
        Hessian characteristics across OOD datasets in the CIFAR and ImageNet benchmarks.
        }
    \label{tab: comprehensive hessian all}
    \vspace{-1.5em}
    \begin{subtable}{\linewidth}
        \centering
        \label{tab: comprehensive full parameter hessian for cifar datasets}
        \resizebox{\linewidth}{!}{%
            \begin{sc}
            \begin{tabular}{l @{\hspace{1em}} l @{\hspace{1.5em}} c @{\hspace{1.5em}} c @{\hspace{1em}} c @{\hspace{1em}} c @{\hspace{1.5em}} c @{\hspace{1em}} c @{\hspace{1em}} c @{\hspace{1em}} c} 
                \toprule
                \multirow{2}{*}{\textbf{Dataset}} & \multirow{2}{*}{\textbf{Metric}} & \multirow{2}{*}{\textbf{ID}} & \multicolumn{3}{c}{\textbf{Near-OOD}} & \multicolumn{4}{c}{\textbf{Far-OOD}} \\
                \cmidrule(lr){4-6} \cmidrule(lr){7-10}
                & & & CIFAR-10 & CIFAR-100 & TIN & MNIST & SVHN & Textures & Places365 \\
                \midrule
                & $\lambda_{\max}(H)$ & \alignednum{202.8}[22.9] & \textcolor{gray}{N / A} & \alignednum{1058.7}[63.8] & \alignednum{1397.5}[68.2] & \alignednum{3087.2}[480.0] & \alignednum{6977.6}[1679.6] & \alignednum{1894.6}[132.8] & \alignednum{1709.7}[32.2] \\
                \multirow{-2}{*}{CIFAR-10} & $\tr(H)$ & \alignednum{2910.5}[717.5] & \textcolor{gray}{N / A} & \alignednum{20388.9}[761.3] & \alignednum{21362.5}[1078.1] & \alignednum{22582.2}[1869.3] & \alignednum{24676.5}[3487.8] & \alignednum{23339.6}[1825.5] & \alignednum{20935.9}[792.8] \\
                \midrule
                & $\lambda_{\max}(H)$ & \alignednum{316.9}[3.4] & \alignednum{594.6}[17.7] & \textcolor{gray}{N / A} & \alignednum{768.7}[5.0] & \alignednum{2842.5}[298.2] & \alignednum{4765.7}[236.6] & \alignednum{937.2}[90.3] & \alignednum{845.1}[26.2] \\
                \multirow{-2}{*}{CIFAR-100} & $\tr(H)$ & \alignednum{10577.8}[751.7] & \alignednum{24213.9}[949.5] & \textcolor{gray}{N / A} & \alignednum{26680.2}[1031.1] & \alignednum{22292.2}[2509.0] & \alignednum{27152.4}[2717.5] & \alignednum{29686.8}[1212.1] & \alignednum{25002.7}[812.1] \\
                \bottomrule
            \end{tabular}
            \end{sc}
        }
        \caption{\textit{Full-parameter Hessian} characteristics across OOD datasets in the CIFAR benchmarks.}
    \end{subtable}
    
    \begin{subtable}{\linewidth}
        \centering
        \label{tab: comprehensive feature hessian for cifar datasets}
        \resizebox{\linewidth}{!}{%
            \begin{sc}
            \begin{tabular}{l @{\hspace{1em}} l @{\hspace{1.5em}} c @{\hspace{1.5em}} c @{\hspace{1em}} c @{\hspace{1em}} c @{\hspace{1.5em}} c @{\hspace{1em}} c @{\hspace{1em}} c @{\hspace{1em}} c} 
                \toprule
                \multirow{2}{*}{\textbf{Dataset}} & \multirow{2}{*}{\textbf{Metric}} & \multirow{2}{*}{\textbf{ID}} & \multicolumn{3}{c}{\textbf{Near-OOD}} & \multicolumn{4}{c}{\textbf{Far-OOD}} \\
                \cmidrule(lr){4-6} \cmidrule(lr){7-10}
                & & & CIFAR-10 & CIFAR-100 & TIN & MNIST & SVHN & Textures & Places365 \\
                \midrule
                & $\lambda_{\max}(H)_{ \times10^{-3}}$ & \alignednum{36.3}[5.2] & \textcolor{gray}{N / A} & \alignednum{245.3}[41.4] & \alignednum{281.0}[44.8] & \alignednum{559.9}[131.9] & \alignednum{548.2}[80.2] & \alignednum{385.0}[37.6] & \alignednum{273.2}[36.0] \\
                \multirow{-2}{*}{CIFAR-10} & $\tr(H)_{ \times10^{-3}}$ & \alignednum{161.7}[29.0] & \textcolor{gray}{N / A} & \alignednum{1129.8}[154.2] & \alignednum{1239.6}[146.6] & \alignednum{1555.5}[278.4] & \alignednum{1338.0}[294.9] & \alignednum{1306.5}[212.9] & \alignednum{1250.0}[140.7] \\
                \midrule
                & $\lambda_{\max}(H)_{ \times10^{-3}}$ & \alignednum{16.3}[0.3] & \alignednum{54.0}[2.0] & \textcolor{gray}{N / A} & \alignednum{46.7}[0.3] & \alignednum{221.3}[7.5] & \alignednum{191.8}[5.9] & \alignednum{90.4}[5.0] & \alignednum{63.0}[2.8] \\
                \multirow{-2}{*}{CIFAR-100} & $\tr(H)_{ \times10^{-3}}$ & \alignednum{546.3}[4.1] & \alignednum{1318.0}[7.4] & \textcolor{gray}{N / A} & \alignednum{1470.9}[3.5] & \alignednum{1208.4}[82.8] & \alignednum{1291.7}[39.6] & \alignednum{1254.2}[23.6] & \alignednum{1295.2}[11.0] \\
                \bottomrule
            \end{tabular}
            \end{sc}
        }
        \caption{\textit{Feature Hessian} characteristics across OOD datasets in the CIFAR benchmarks.}
    \end{subtable}
    
    \begin{subtable}{\linewidth}
        \centering
        \label{tab: comprehensive full parameter hessian for imagenet datasets}
        \resizebox{\linewidth}{!}{%
            \begin{sc}
            \begin{tabular}{l @{\hspace{1em}} l @{\hspace{1.5em}} c @{\hspace{1.5em}} c @{\hspace{1em}} c @{\hspace{1.5em}} c @{\hspace{1em}} c @{\hspace{1em}} c} 
                \toprule
                \multirow{2}{*}{\textbf{Dataset}} & \multirow{2}{*}{\textbf{Metric}} & \multirow{2}{*}{\textbf{ID}} & \multicolumn{2}{c}{\textbf{Near-OOD}} & \multicolumn{3}{c}{\textbf{Far-OOD}} \\
                \cmidrule(lr){4-5} \cmidrule(lr){6-8}
                & & & SSB-Hard & NINCO & iNaturalist & Textures & OpenImage-O \\
                \midrule
                & $\lambda_{\max}(H)$ & \alignednum{548.7}[40.2] & \alignednum{883.9}[34.1] & \alignednum{999.3}[37.9] & \alignednum{2586.1}[208.0] & \alignednum{1136.9}[9.8] & \alignednum{1159.0}[31.6] \\
                \multirow{-2}{*}{ImageNet-200} & $\tr(H)$ & \alignednum{21489.6}[2558.6] & \alignednum{42552.4}[3515.1] & \alignednum{48855.0}[2801.2] & \alignednum{50103.7}[3418.80] & \alignednum{63892.1}[2727.9] & \alignednum{47224.6}[247.9] \\
                \midrule
                & $\lambda_{\max}(H)$ & \alignednum{702.4} & \alignednum{1105.2} & \alignednum{1235.7} & \alignednum{2920.0} & \alignednum{1475.7} & \alignednum{1520.1} \\
                \multirow{-2}{*}{ImageNet-1K} & $\tr(H)$ & \alignednum{26834.2} & \alignednum{39022.0} & \alignednum{54064.3} & \alignednum{63005.4} & \alignednum{64076.6} & \alignednum{56806.4} \\
                \bottomrule
            \end{tabular}
            \end{sc}
        }
        \caption{\textit{Full-parameter Hessian} characteristics across OOD datasets in the ImageNet benchmarks.}
    \end{subtable}
    
    \begin{subtable}{\linewidth}
        \centering
        \label{tab: comprehensive feature hessian for imagenet datasets}
        \resizebox{\linewidth}{!}{%
            \begin{sc}
            \begin{tabular}{l @{\hspace{1em}} l @{\hspace{1.5em}} c @{\hspace{1.5em}} c @{\hspace{1em}} c @{\hspace{1.5em}} c @{\hspace{1em}} c @{\hspace{1em}} c} 
                \toprule
                \multirow{2}{*}{\textbf{Dataset}} & \multirow{2}{*}{\textbf{Metric}} & \multirow{2}{*}{\textbf{ID}} & \multicolumn{2}{c}{\textbf{Near-OOD}} & \multicolumn{3}{c}{\textbf{Far-OOD}} \\
                \cmidrule(lr){4-5} \cmidrule(lr){6-8}
                & & & SSB-Hard & NINCO & iNaturalist & Textures & OpenImage-O \\
                \midrule
                & $\lambda_{\max}(H)_{ \times10^{-3}}$ & \alignednum{10.5}[0.2] & \alignednum{30.4}[0.8] & \alignednum{36.0}[1.5] & \alignednum{76.0}[2.2] & \alignednum{66.6}[3.1] & \alignednum{68.2}[1.9] \\
                \multirow{-2}{*}{ImageNet-200} & $\tr(H)_{ \times10^{-3}}$ & \alignednum{484.1}[2.4] & \alignednum{1041.6}[2.9] & \alignednum{1239.8}[10.2] & \alignednum{1580.3}[17.2] & \alignednum{1447.9}[9.1] & \alignednum{1382.9}[3.2] \\
                \midrule
                & $\lambda_{\max}(H)_{ \times10^{-3}}$ & \alignednum{6.3} & \alignednum{14.2} & \alignednum{23.2} & \alignednum{103.5} & \alignednum{52.9} & \alignednum{70.2} \\
                \multirow{-2}{*}{ImageNet-1K} & $\tr(H)_{ \times10^{-3}}$ & \alignednum{650.8} & \alignednum{930.5} & \alignednum{1234.6} & \alignednum{1529.1} & \alignednum{1511.7} & \alignednum{1473.2} \\
                \bottomrule
            \end{tabular}
            \end{sc}
        }
        \caption{\textit{Feature Hessian} characteristics across OOD datasets in the ImageNet benchmarks.}
    \end{subtable}

\end{table}

\subsection{Feature Hessian}
To assess whether the feature-space Hessian behaves similarly to the parameter-space Hessian in separating ID and OOD data, we analyze key curvature metrics (\eg, the trace and the largest eigenvalue).
As shown in \Cref{tab:feature hessian}, although their absolute magnitudes differ due to dimensionality and parameterization, the overall trends remain consistent.
Both the largest eigenvalue and the trace increase monotonically as the evaluated distribution deviates further from ID data.
This trend matches the parameter-space Hessian statistics reported in \Cref{tab:full parameter hessian statistics}.
These results suggest that the feature-space approximation preserves the essential structural properties of the full parameter-space Hessian.

Building on these observations, we provide a more detailed analysis by reporting dataset-specific statistics for both the full parameter Hessian and the feature Hessian. 
\Cref{tab: comprehensive full parameter hessian for cifar datasets} summarizes these results across the CIFAR and ImageNet benchmark suites. 
A dataset-level examination reveals consistent trends across the evaluation settings. 
In particular, the spectral curvature metrics follow a clear monotonic pattern: they are lowest for ID data, increase for near-OOD samples, and reach their highest values for far-OOD inputs. 
These results further support the use of Hessian-based curvature metrics as indicators of distribution shift.

To complement these scalar statistics, we visualize the ESD of both the parameter-space and feature-space Hessians for the CIFAR benchmarks.
The corresponding plots are shown in~\Cref{fig:Hessian ESD for CIFAR-10 across datasets} and \Cref{fig:Feature Hessian ESD for CIFAR-10 across datasets}.
Consistent with the scalar metrics, the spectral distributions exhibit strong qualitative agreement between the two Hessian representations.
Analogous ESD visualizations for the ImageNet benchmarks are presented in~\Cref{fig:Hessian ESD for ImageNet-200 across datasets} and \Cref{fig:Feature Hessian ESD for ImageNet-200 across datasets}.
These results further support the structural correspondence between the two Hessian formulations.

\vspace{5em}

\begin{figure*}[h!]
    \centering
    \begin{subfigure}{0.245\linewidth}
        \centering
        \includegraphics[width=\linewidth]{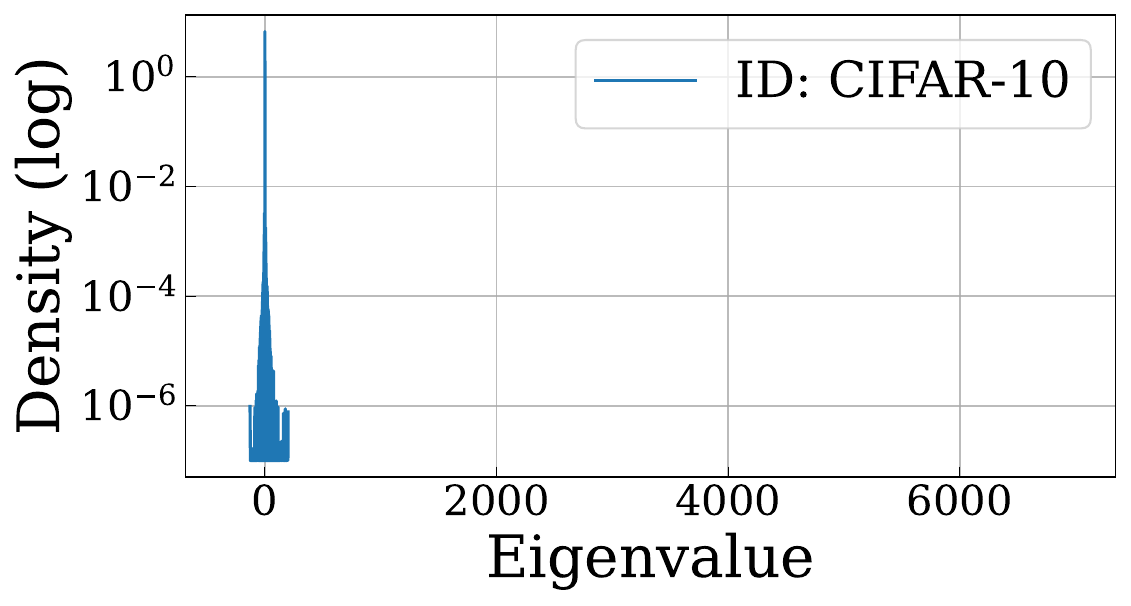}
        \subcaption{CIFAR-10}
    \end{subfigure}
    \begin{subfigure}{0.245\linewidth}
        \centering
        \includegraphics[width=\linewidth]{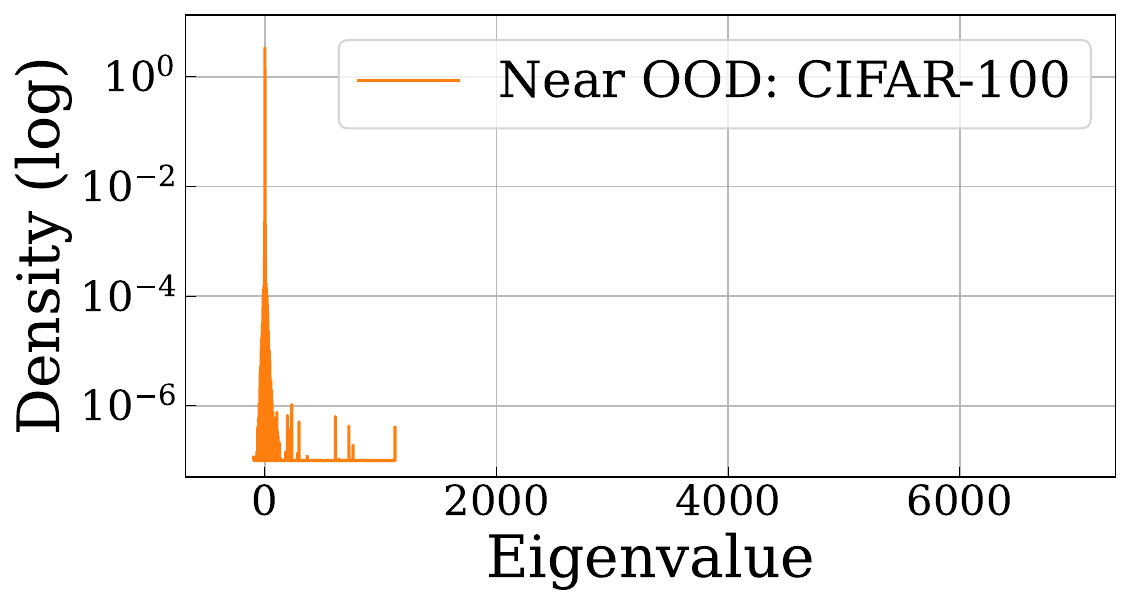}
        \subcaption{CIFAR-100}
    \end{subfigure}
    \begin{subfigure}{0.245\linewidth}
        \centering
        \includegraphics[width=\linewidth]{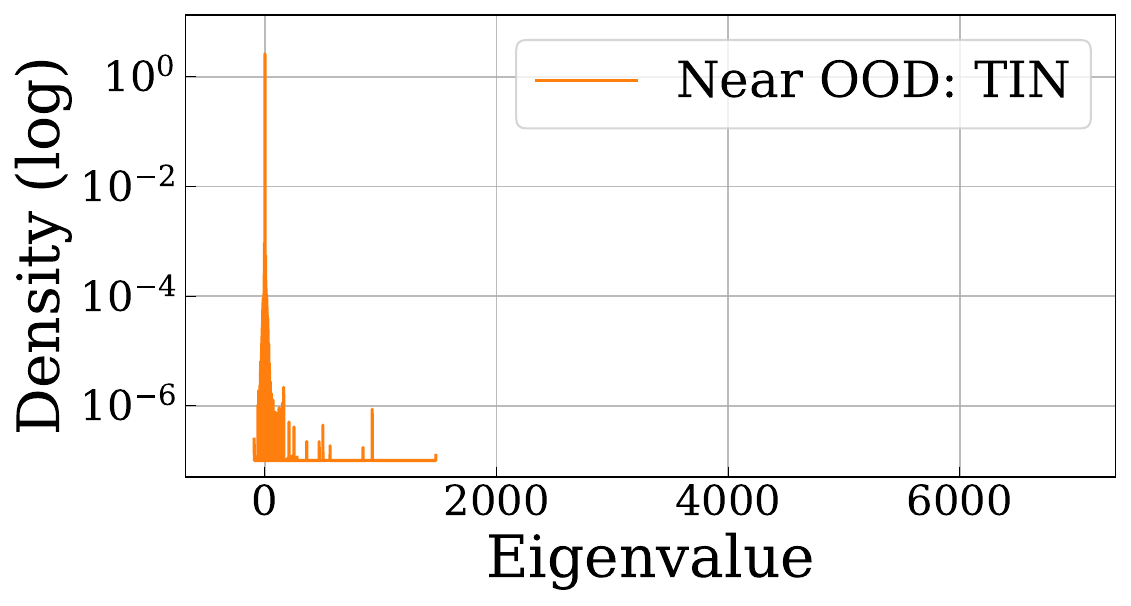}
        \subcaption{TIN}
    \end{subfigure}
    
    \begin{subfigure}{0.245\linewidth}
        \centering
        \includegraphics[width=\linewidth]{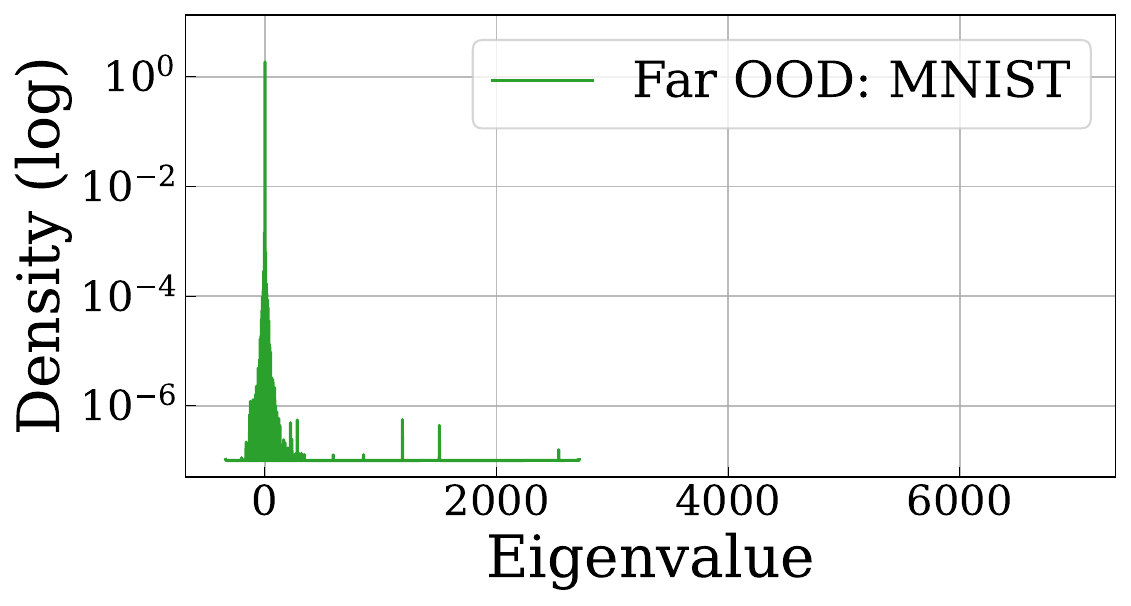}
        \subcaption{MNIST}
    \end{subfigure}
    \begin{subfigure}{0.245\linewidth}
        \centering
        \includegraphics[width=\linewidth]{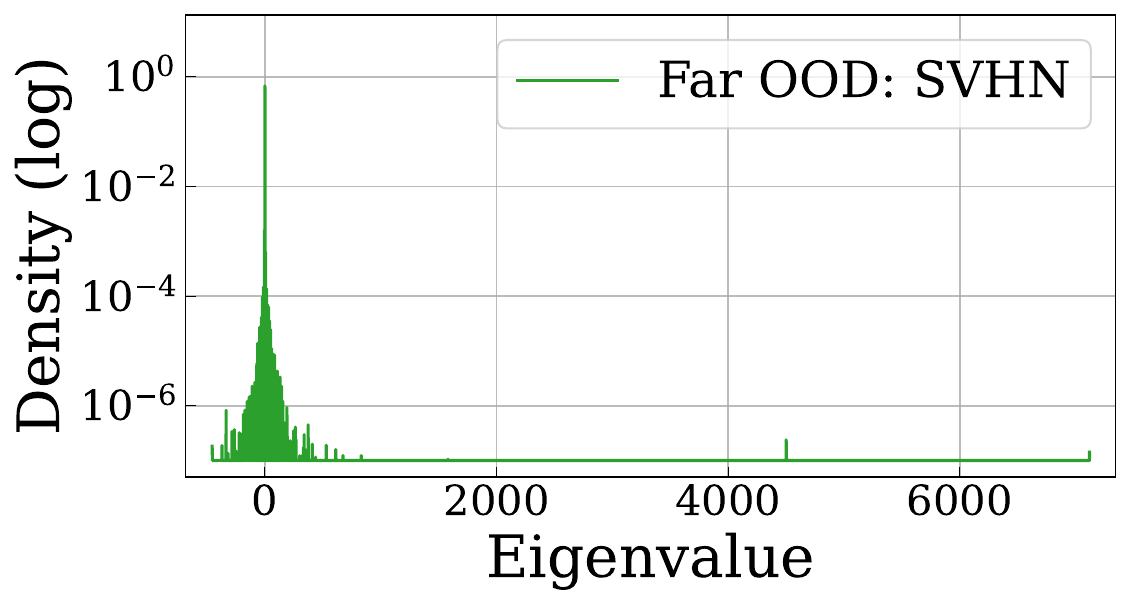}
        \subcaption{SVHN}
    \end{subfigure}
    \begin{subfigure}{0.245\linewidth}
        \centering
        \includegraphics[width=\linewidth]{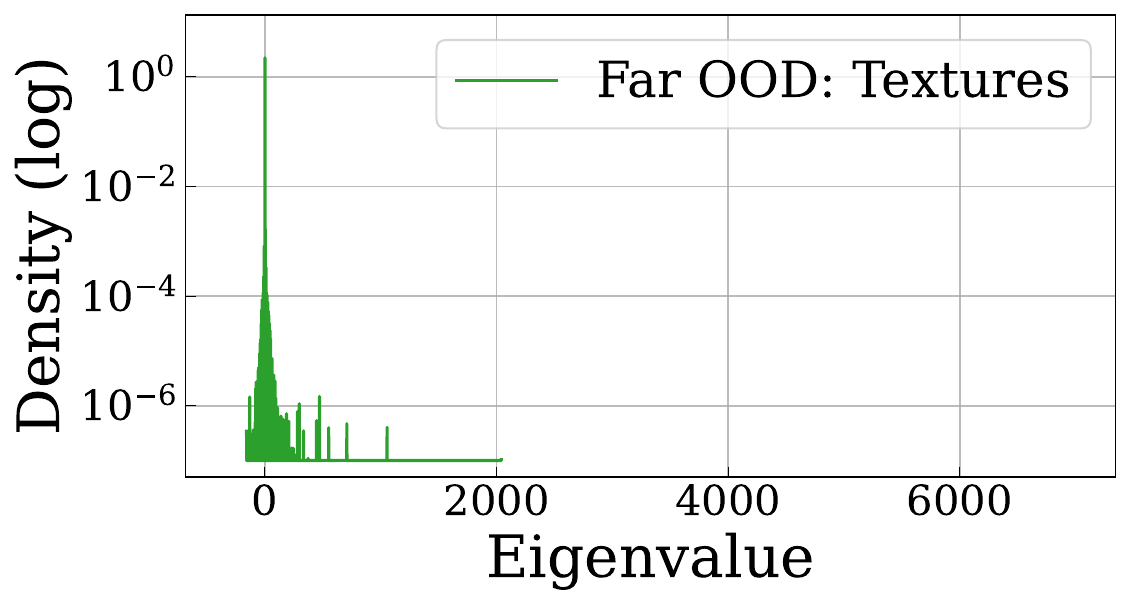}
        \subcaption{Textures}
    \end{subfigure}
    \begin{subfigure}{0.245\linewidth}
        \centering
        \includegraphics[width=\linewidth]{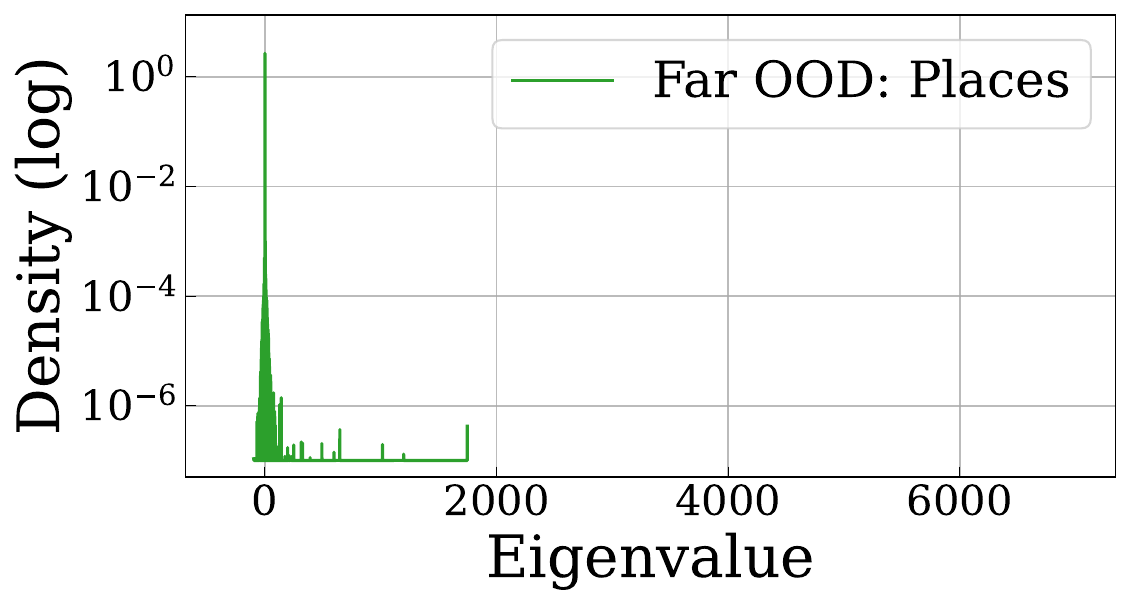}
        \subcaption{Places}
    \end{subfigure}
    \caption{
        \textit{Parameter-space Hessian} ESD of CIFAR-10–trained models across multiple OOD datasets.
        Colors indicate ID (\textcolor{dblue}{blue}), near-OOD (\textcolor{dorange}{orange}), and far-OOD (\textcolor{dgreen}{green}).
        }
    \label{fig:Hessian ESD for CIFAR-10 across datasets}
\end{figure*}

\begin{figure*}[h!]
    \centering
    \begin{subfigure}{0.245\linewidth}
        \centering
        \includegraphics[width=\linewidth]{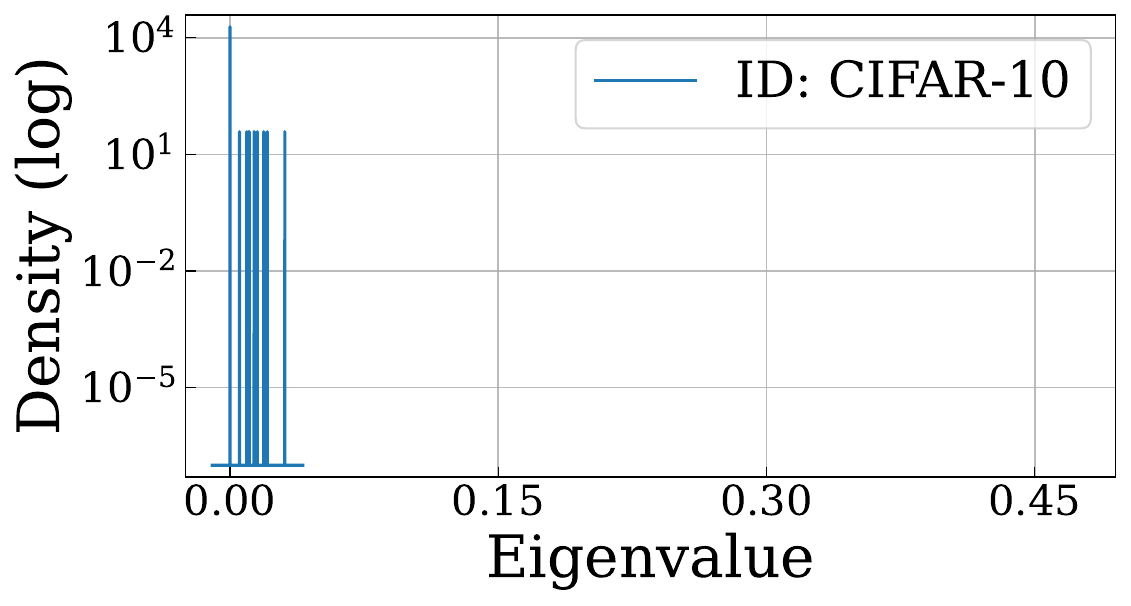}
        \subcaption{CIFAR-10}
    \end{subfigure}
    \begin{subfigure}{0.245\linewidth}
        \centering
        \includegraphics[width=\linewidth]{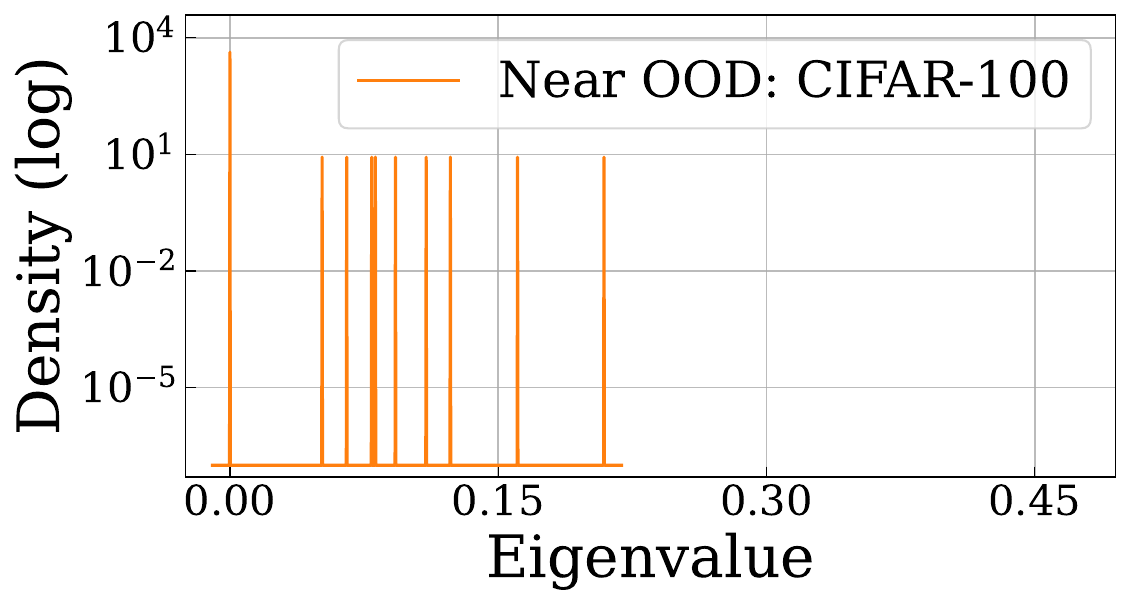}
        \subcaption{CIFAR-100}
    \end{subfigure}
    \begin{subfigure}{0.245\linewidth}
        \centering
        \includegraphics[width=\linewidth]{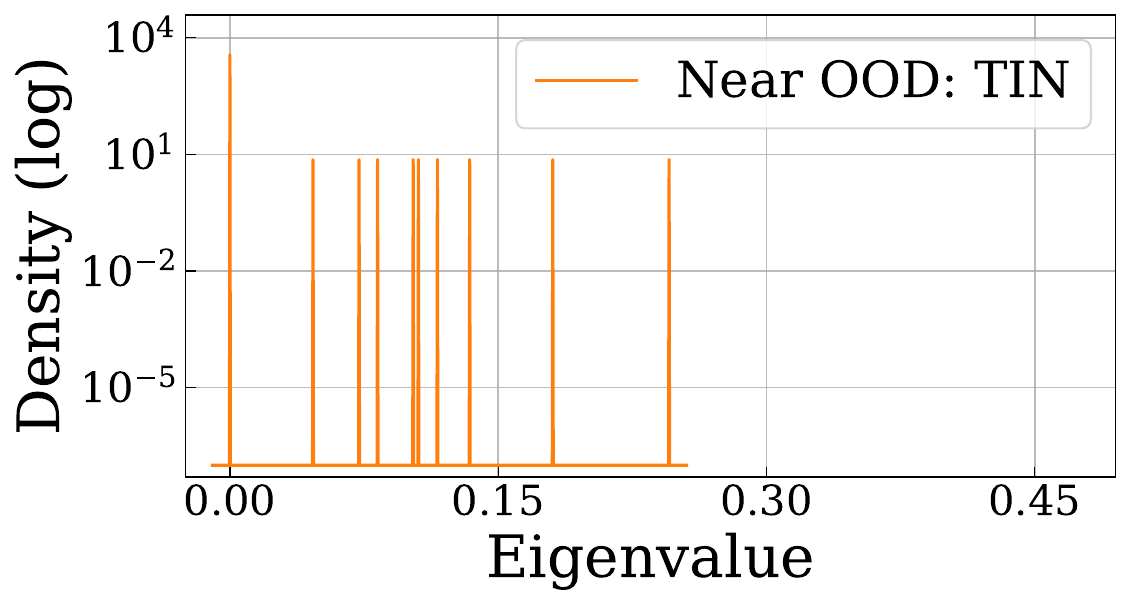}
        \subcaption{TIN}
    \end{subfigure}
    
    \begin{subfigure}{0.245\linewidth}
        \centering
        \includegraphics[width=\linewidth]{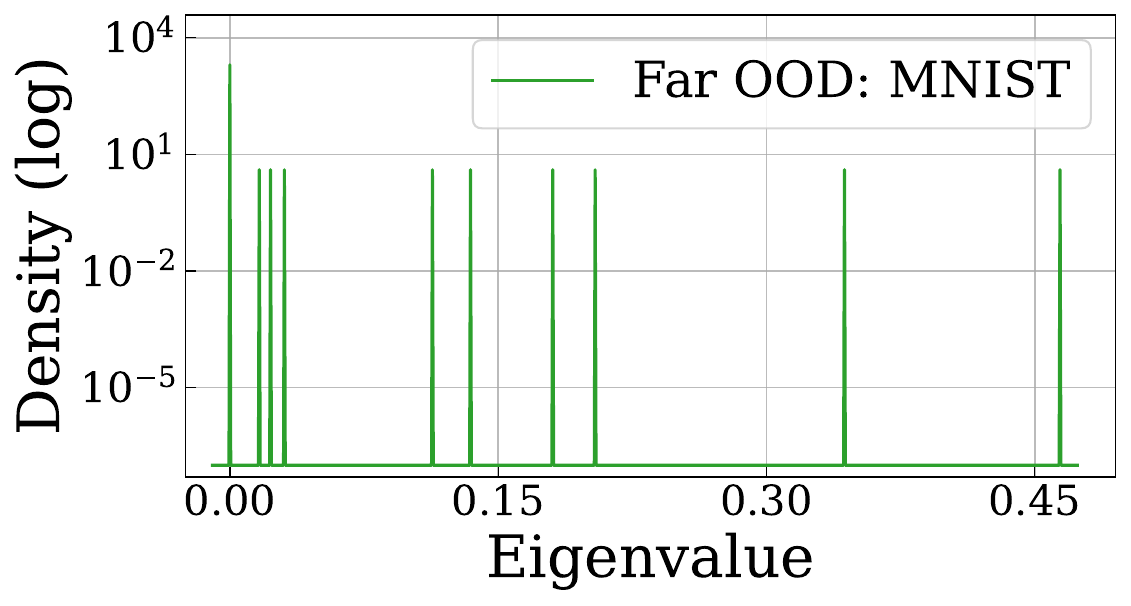}
        \subcaption{MNIST}
    \end{subfigure}
    \begin{subfigure}{0.245\linewidth}
        \centering
        \includegraphics[width=\linewidth]{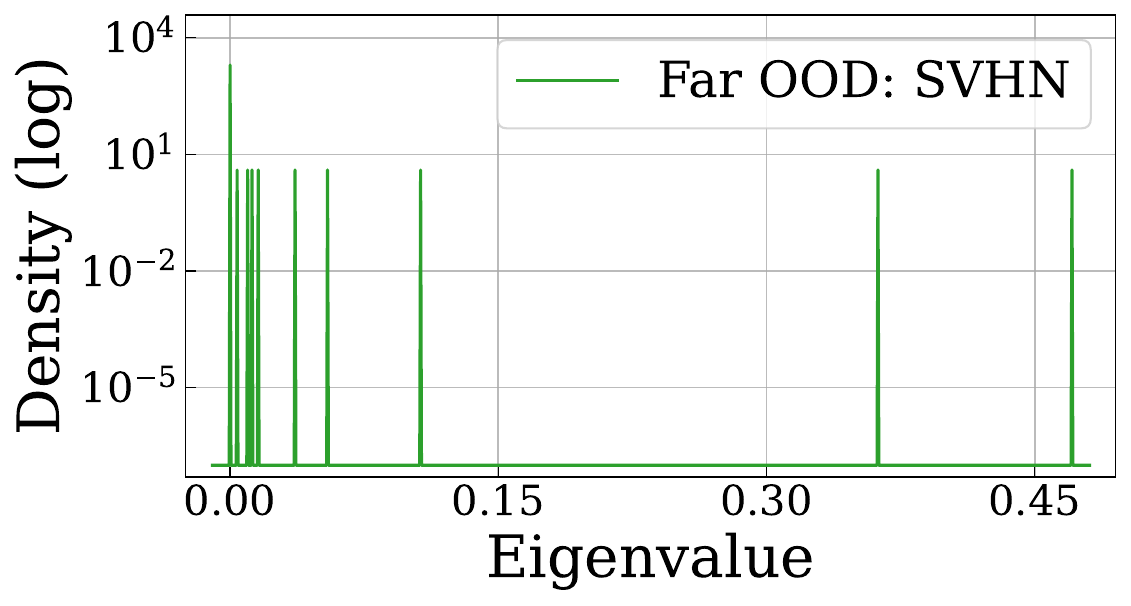}
        \subcaption{SVHN}
    \end{subfigure}
    \begin{subfigure}{0.245\linewidth}
        \centering
        \includegraphics[width=\linewidth]{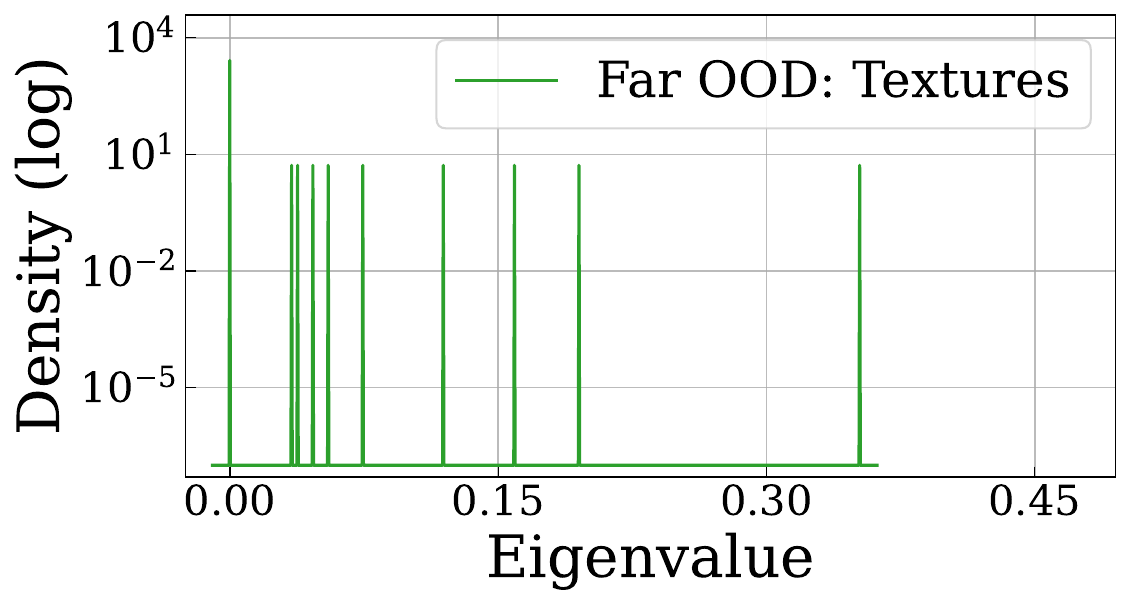}
        \subcaption{Textures}
    \end{subfigure}
    \begin{subfigure}{0.245\linewidth}
        \centering
        \includegraphics[width=\linewidth]{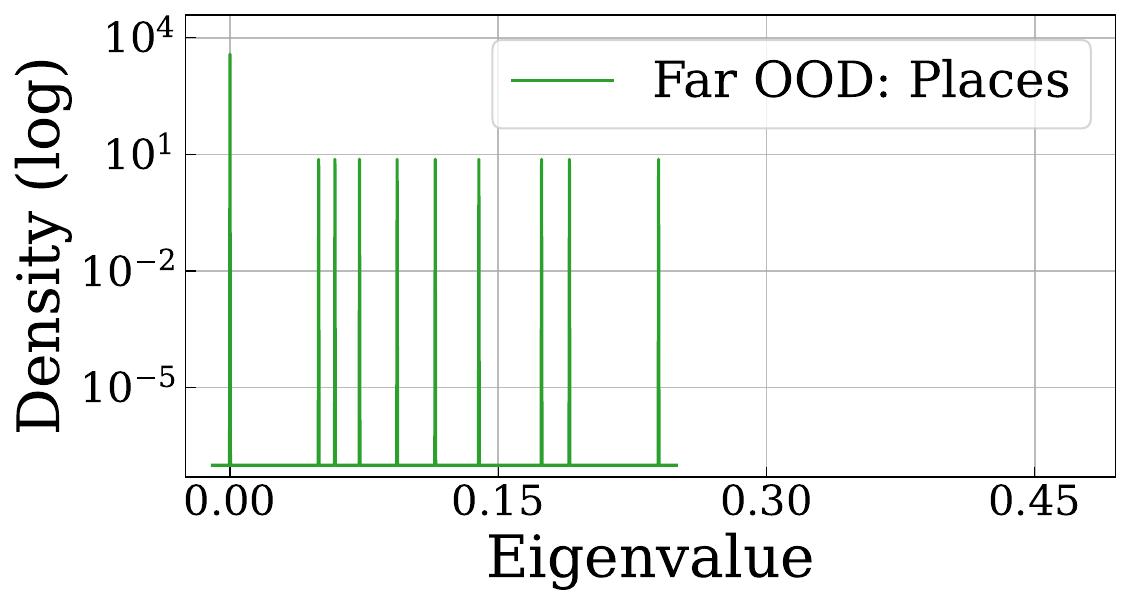}
        \subcaption{Places}
    \end{subfigure}
    \caption{
        \textit{Feature Hessian} ESD of CIFAR-10–trained models across multiple OOD datasets.
        Colors indicate ID (\textcolor{dblue}{blue}), near-OOD (\textcolor{dorange}{orange}), and far-OOD (\textcolor{dgreen}{green}).
        }
    \label{fig:Feature Hessian ESD for CIFAR-10 across datasets}
\end{figure*}

\clearpage

\begin{figure*}[t!]
    \centering
    \begin{subfigure}{0.245\linewidth}
        \centering
        \includegraphics[width=\linewidth]{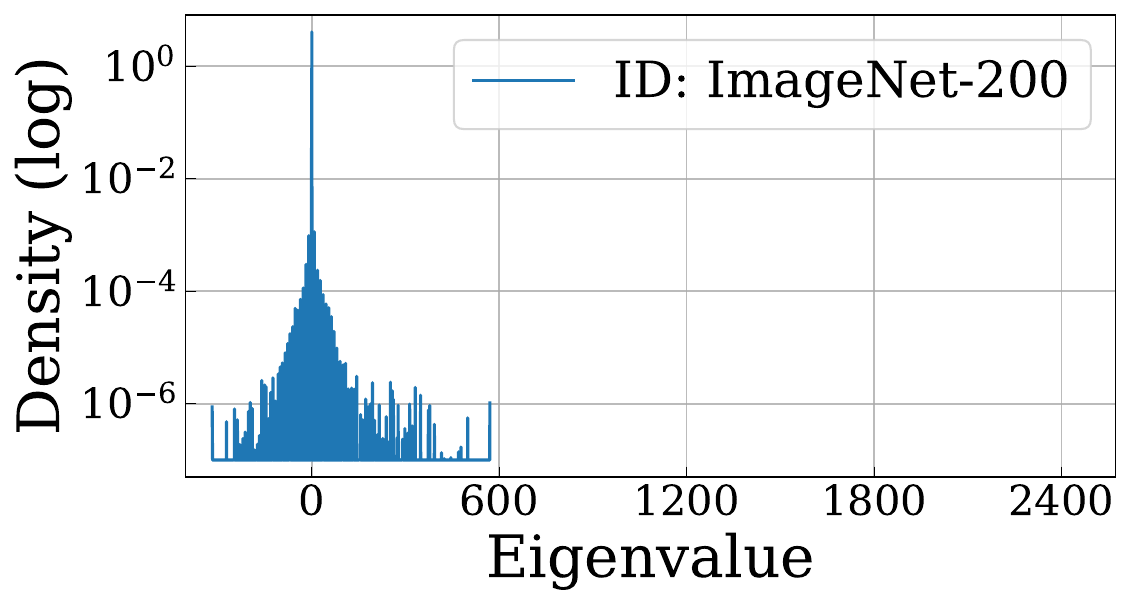}
        \subcaption{ImageNet-200}
    \end{subfigure}
    \begin{subfigure}{0.245\linewidth}
        \centering
        \includegraphics[width=\linewidth]{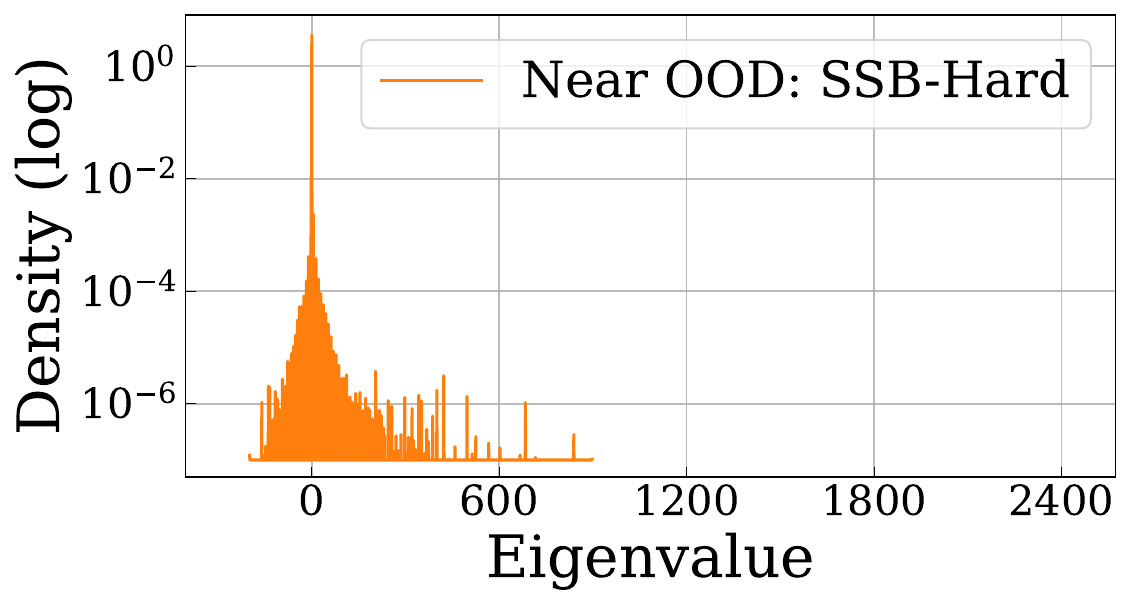}
        \subcaption{SSB-Hard}
    \end{subfigure}
    \begin{subfigure}{0.245\linewidth}
        \centering
        \includegraphics[width=\linewidth]{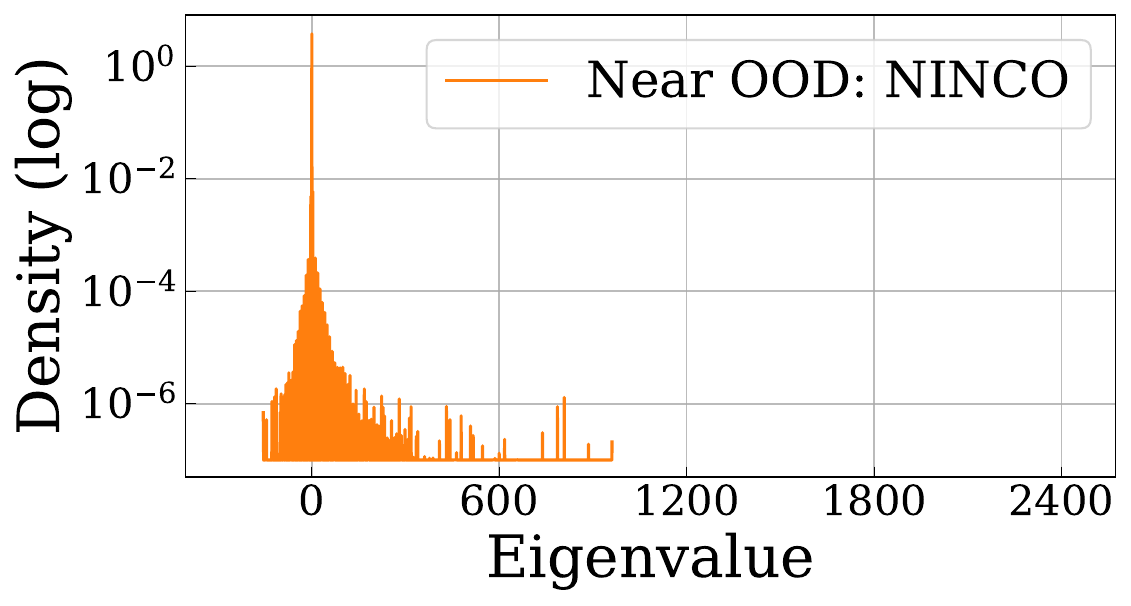}
        \subcaption{NINCO}
    \end{subfigure}
    
    \begin{subfigure}{0.245\linewidth}
        \centering
        \includegraphics[width=\linewidth]{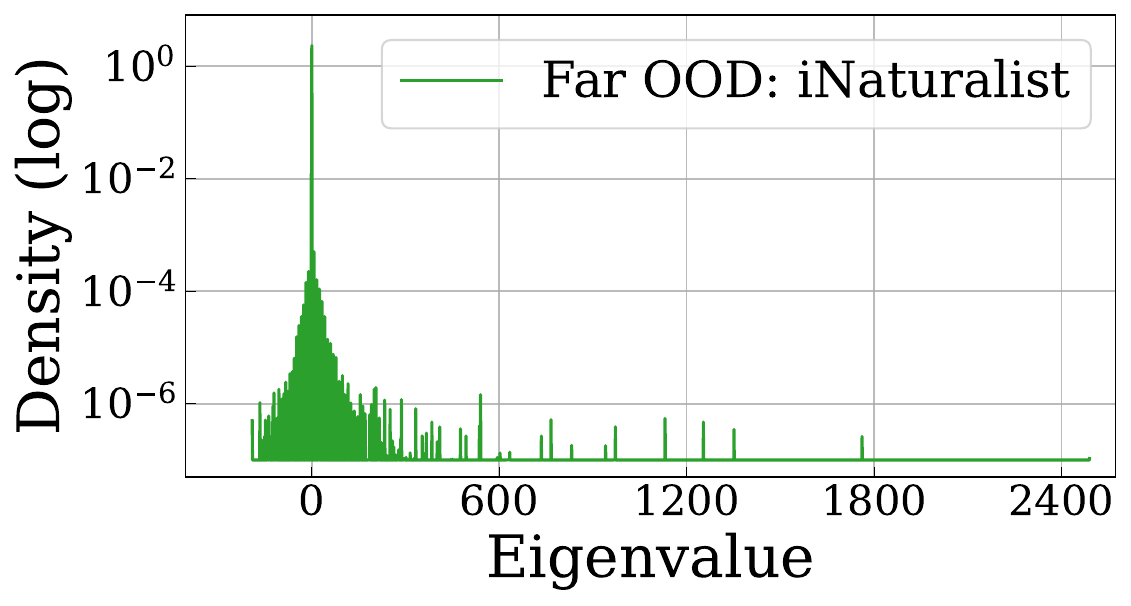}
        \subcaption{iNaturalist}
    \end{subfigure}
    \begin{subfigure}{0.245\linewidth}
        \centering
        \includegraphics[width=\linewidth]{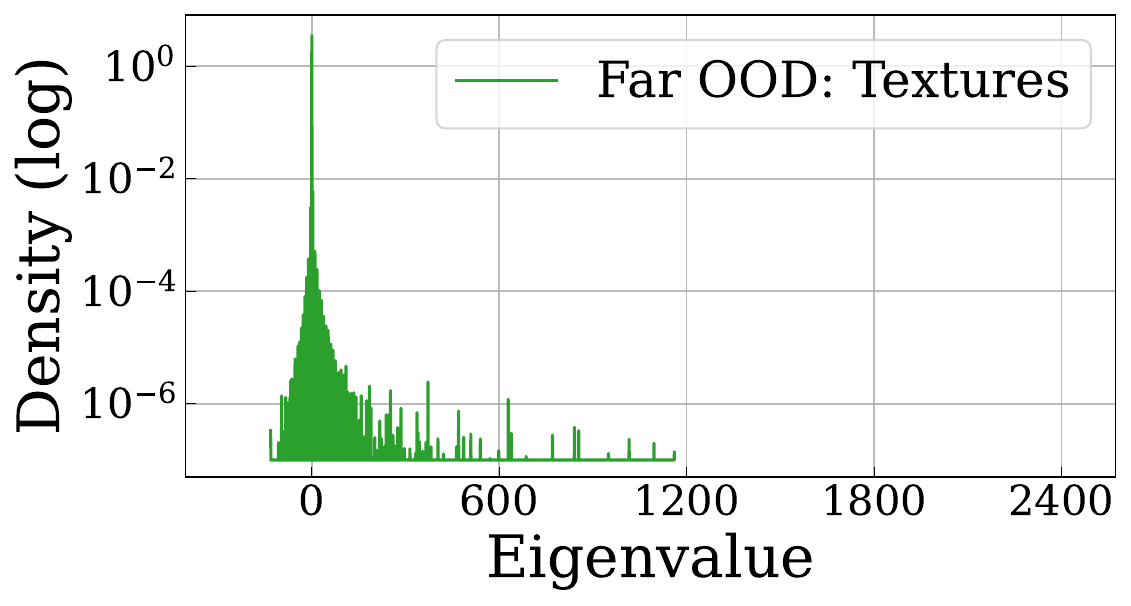}
        \subcaption{Textures}
    \end{subfigure}
    \begin{subfigure}{0.245\linewidth}
        \centering
        \includegraphics[width=\linewidth]{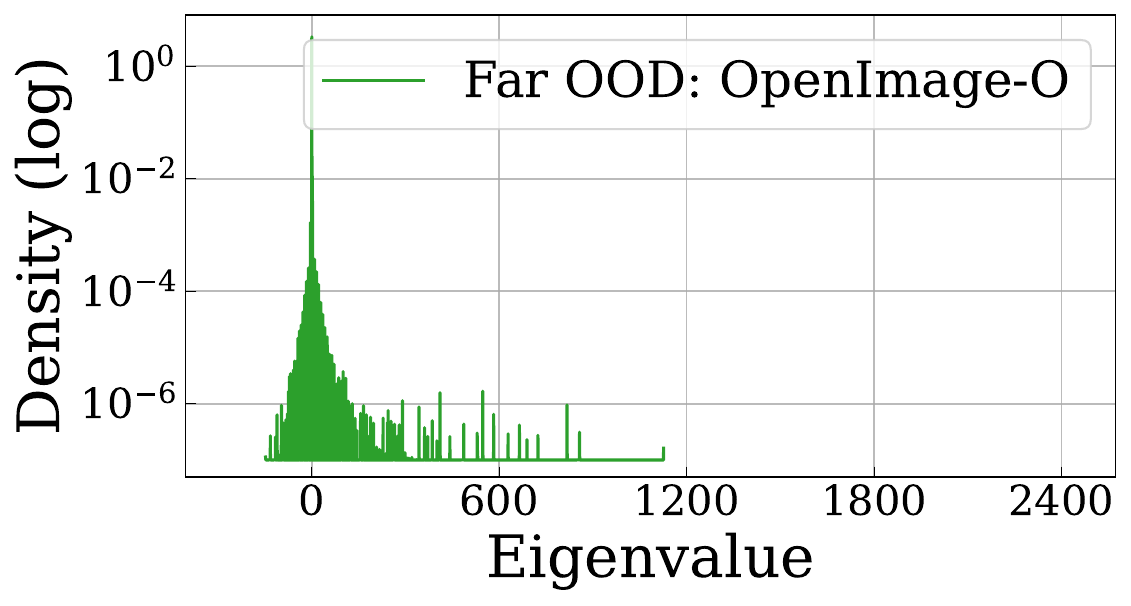}
        \subcaption{OpenImage-O}
    \end{subfigure}
    \caption{
        \textit{Parameter-space Hessian} ESD of ImageNet-200–trained models across multiple OOD datasets.
        Colors indicate ID (\textcolor{dblue}{blue}), near-OOD (\textcolor{dorange}{orange}), and far-OOD (\textcolor{dgreen}{green}).
        }
    \label{fig:Hessian ESD for ImageNet-200 across datasets}
\end{figure*}

\vspace{10em}

\begin{figure*}[t!]
    \centering
    \begin{subfigure}{0.245\linewidth}
        \centering
        \includegraphics[width=\linewidth]{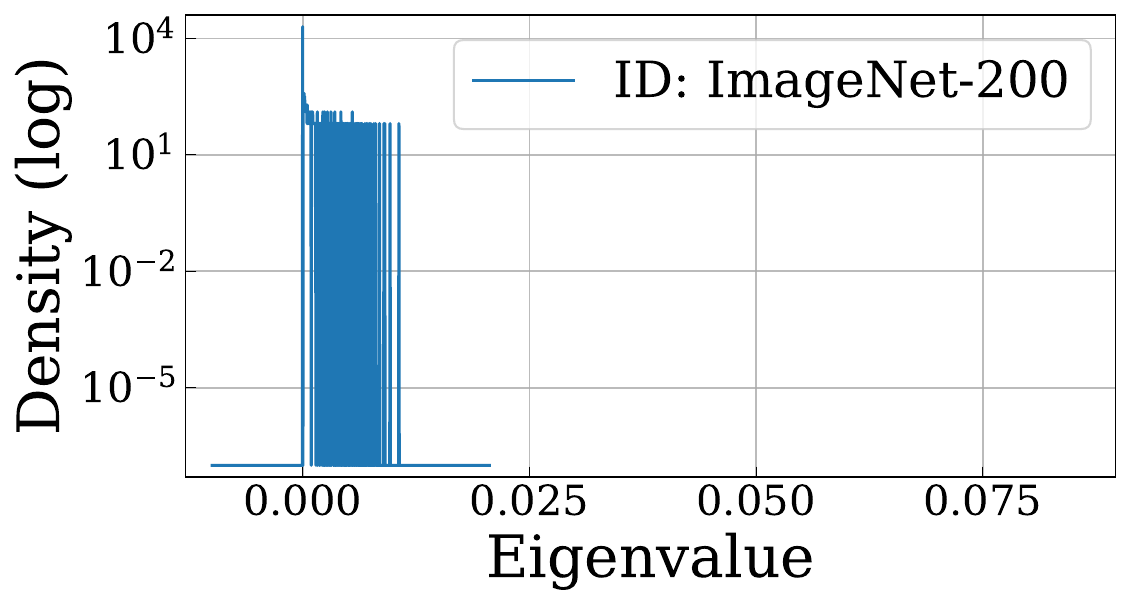}
        \subcaption{ImageNet-200}
    \end{subfigure}
    \begin{subfigure}{0.245\linewidth}
        \centering
        \includegraphics[width=\linewidth]{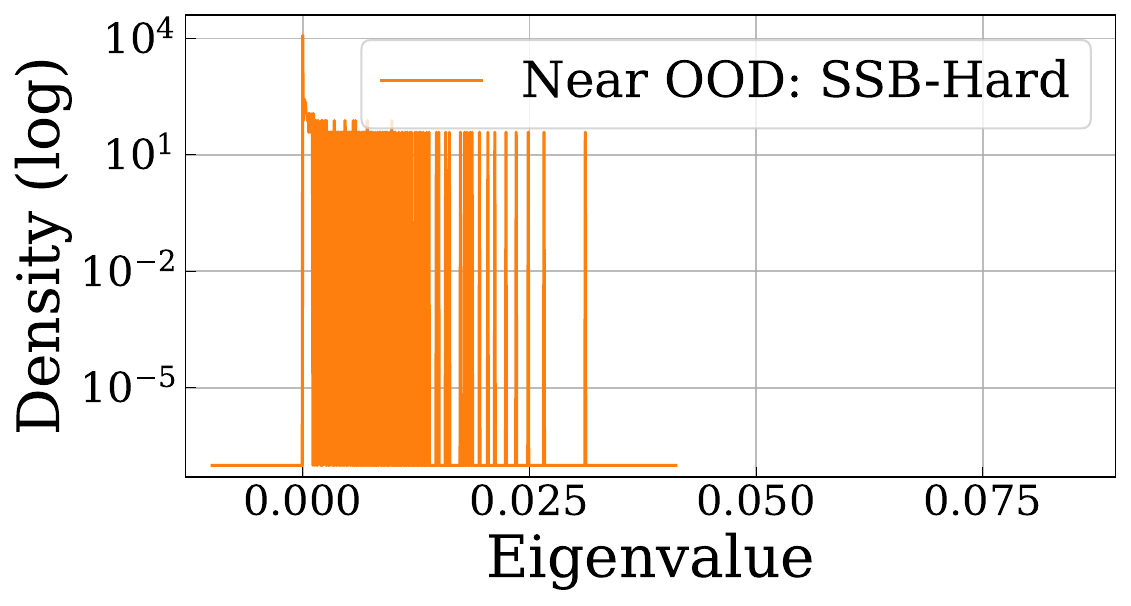}
        \subcaption{SSB-Hard}
    \end{subfigure}
    \begin{subfigure}{0.245\linewidth}
        \centering
        \includegraphics[width=\linewidth]{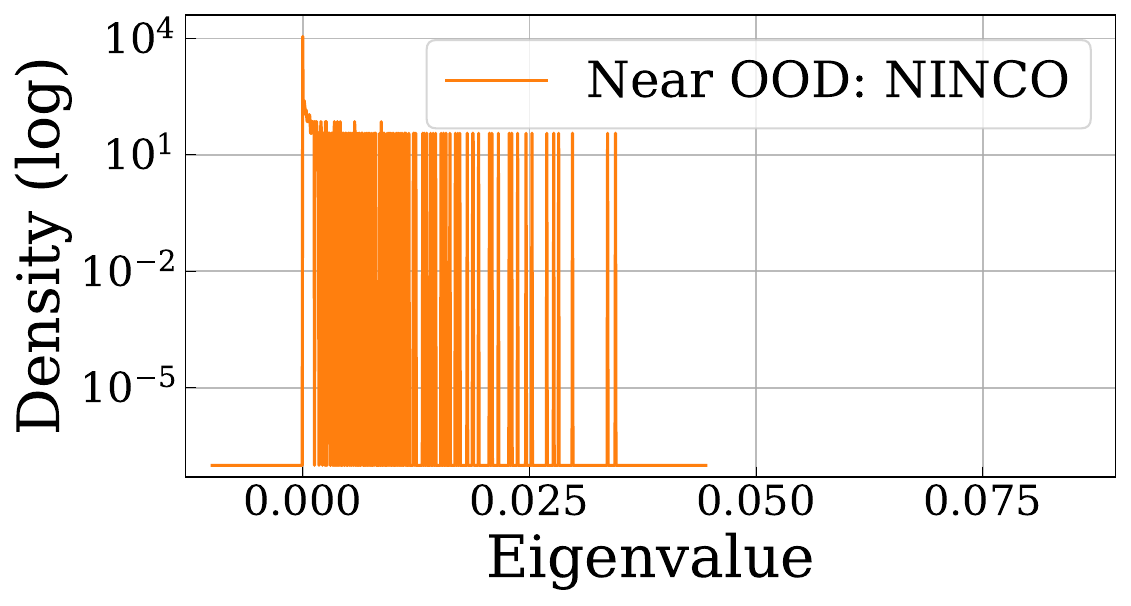}
        \subcaption{NINCO}
    \end{subfigure}
    
    \begin{subfigure}{0.245\linewidth}
        \centering
        \includegraphics[width=\linewidth]{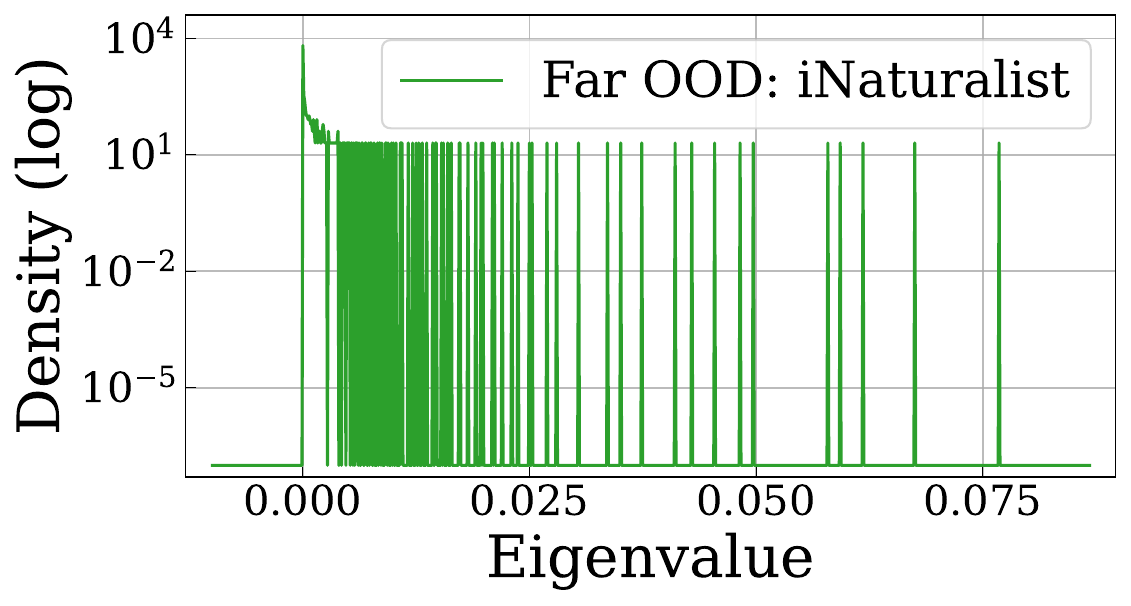}
        \subcaption{iNaturalist}
    \end{subfigure}
    \begin{subfigure}{0.245\linewidth}
        \centering
        \includegraphics[width=\linewidth]{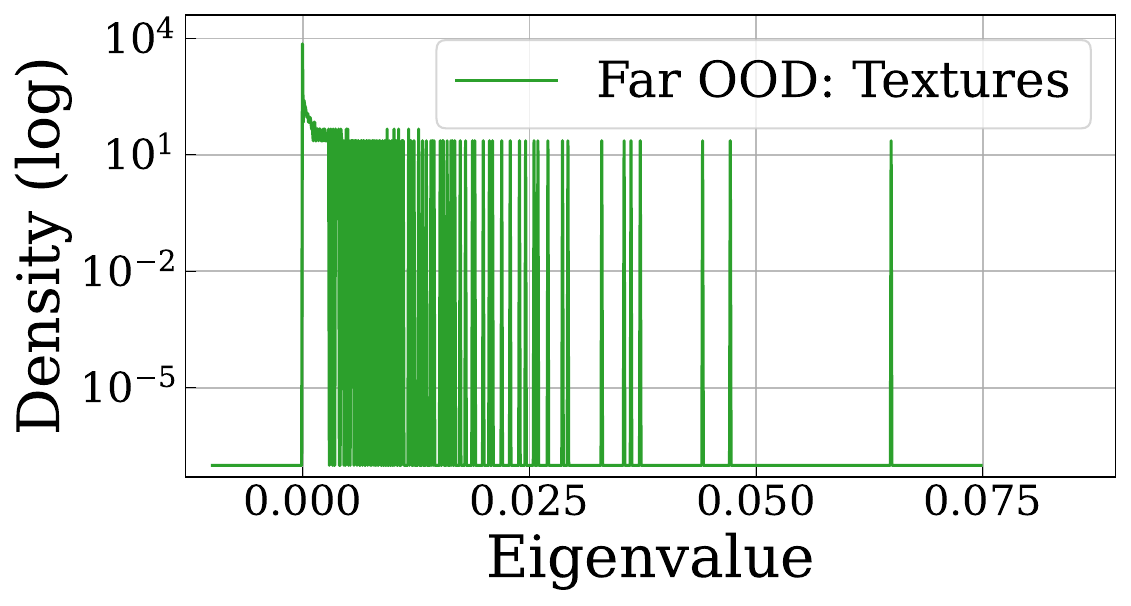}
        \subcaption{Textures}
    \end{subfigure}
    \begin{subfigure}{0.245\linewidth}
        \centering
        \includegraphics[width=\linewidth]{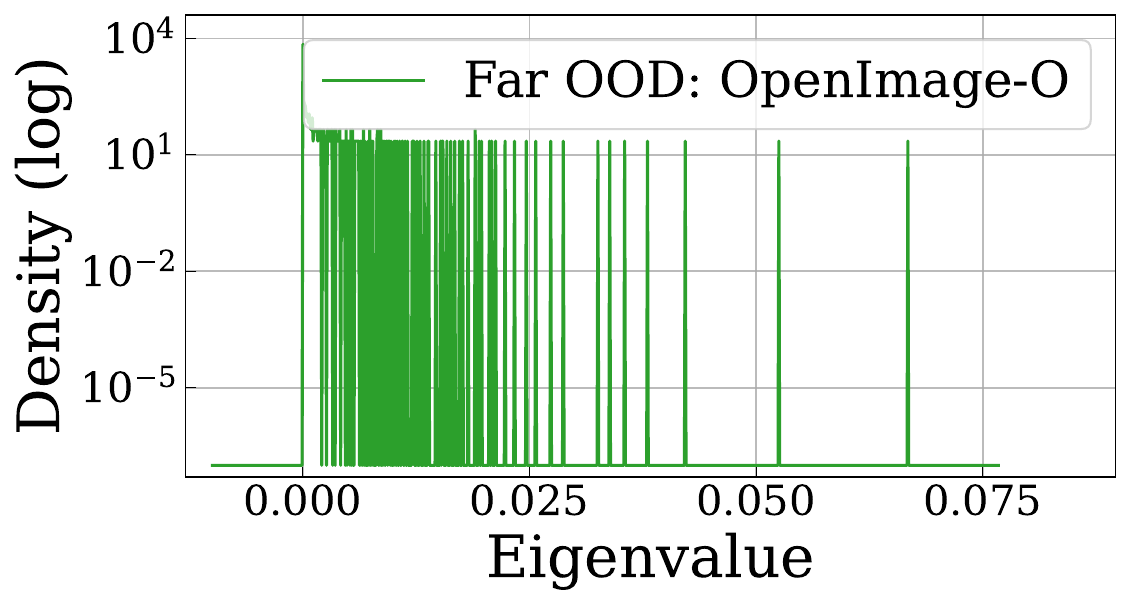}
        \subcaption{OpenImage-O}
    \end{subfigure}
    \caption{
        \textit{Feature Hessian} ESD of ImageNet-200–trained models across multiple OOD datasets.
        Colors indicate ID (\textcolor{dblue}{blue}), near-OOD (\textcolor{dorange}{orange}), and far-OOD (\textcolor{dgreen}{green}).
        }
    \label{fig:Feature Hessian ESD for ImageNet-200 across datasets}
\end{figure*}
\vspace{10em}

\subsection{Curvature Analysis}
Building upon the sample-level curvature analysis presented in the main text, we provide additional visualizations of the local loss landscape across multiple datasets.
Specifically, we plot the distributions of sample-wise Hessian trace values introduced in~\Cref{sec:observations}.
The results are shown in \Cref{fig:Sample-wise Hessian trace for CIFAR-10}, \ref{fig:Sample-wise Hessian trace for CIFAR-100}, \ref{fig:Sample-wise Hessian trace for ImageNet-200}, and \ref{fig:Sample-wise Hessian trace for ImageNet-1K}, corresponding to the CIFAR-10, CIFAR-100, ImageNet-200, and ImageNet-1K benchmarks.
Across these datasets, the kernel density estimates of the sample-wise trace distributions exhibit consistent patterns.
Although minor dataset-specific variations are observed, the overall geometric trend remains consistent across benchmarks, with Hessian trace values increasing as the data distribution shifts from ID to OOD.

\vspace{10em}


\begin{figure*}[h!]
    \centering
    \begin{subfigure}{0.245\linewidth}
        \centering
        \includegraphics[width=\linewidth]{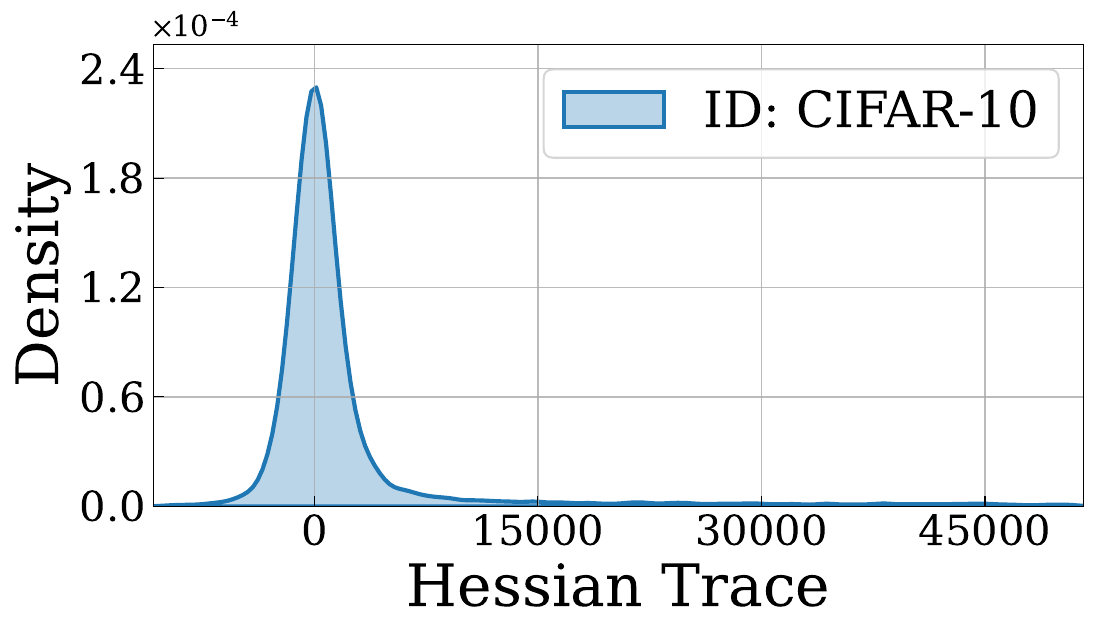}
        \subcaption{CIFAR-10}
    \end{subfigure}
    \begin{subfigure}{0.245\linewidth}
        \centering
        \includegraphics[width=\linewidth]{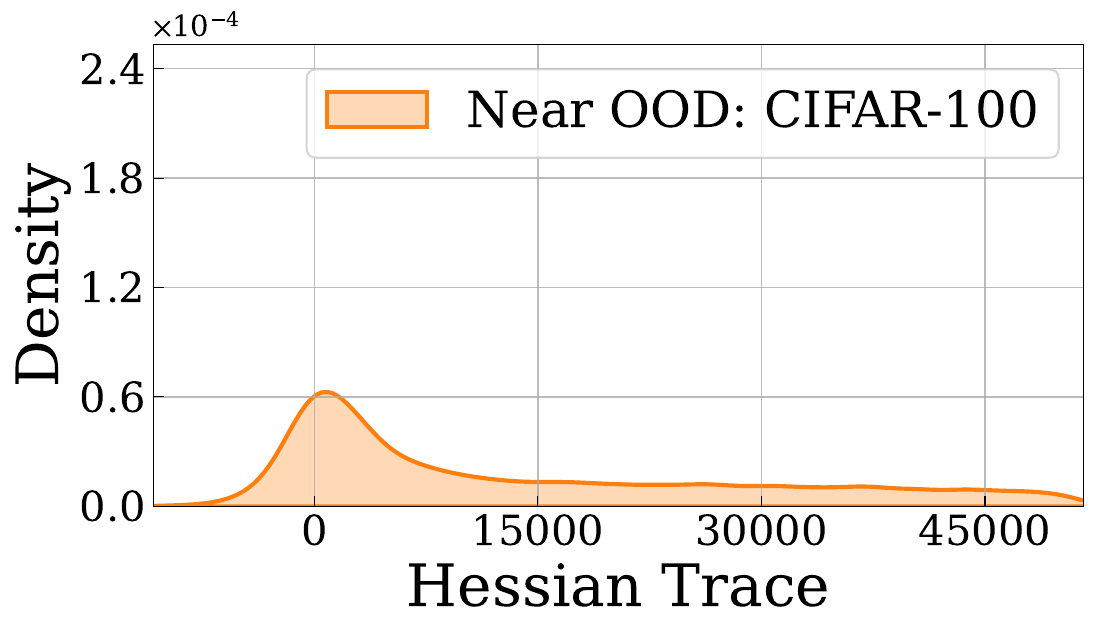}
        \subcaption{CIFAR-100}
    \end{subfigure}
    \begin{subfigure}{0.245\linewidth}
        \centering
        \includegraphics[width=\linewidth]{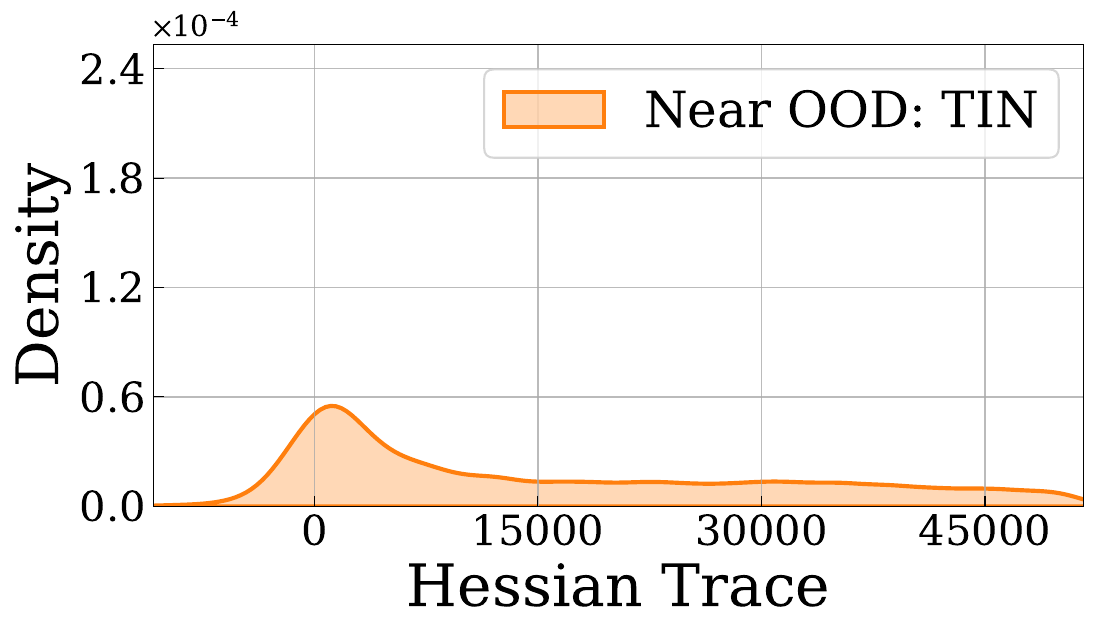}
        \subcaption{TIN}
    \end{subfigure}
    
    \begin{subfigure}{0.245\linewidth}
        \centering
        \includegraphics[width=\linewidth]{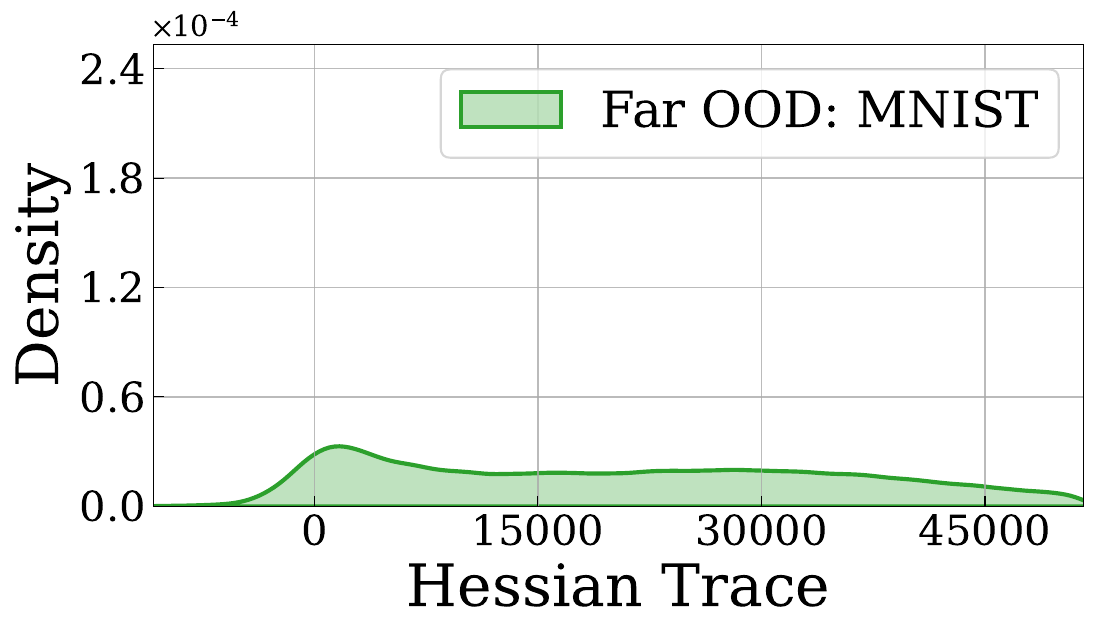}
        \subcaption{MNIST}
    \end{subfigure}
    \begin{subfigure}{0.245\linewidth}
        \centering
        \includegraphics[width=\linewidth]{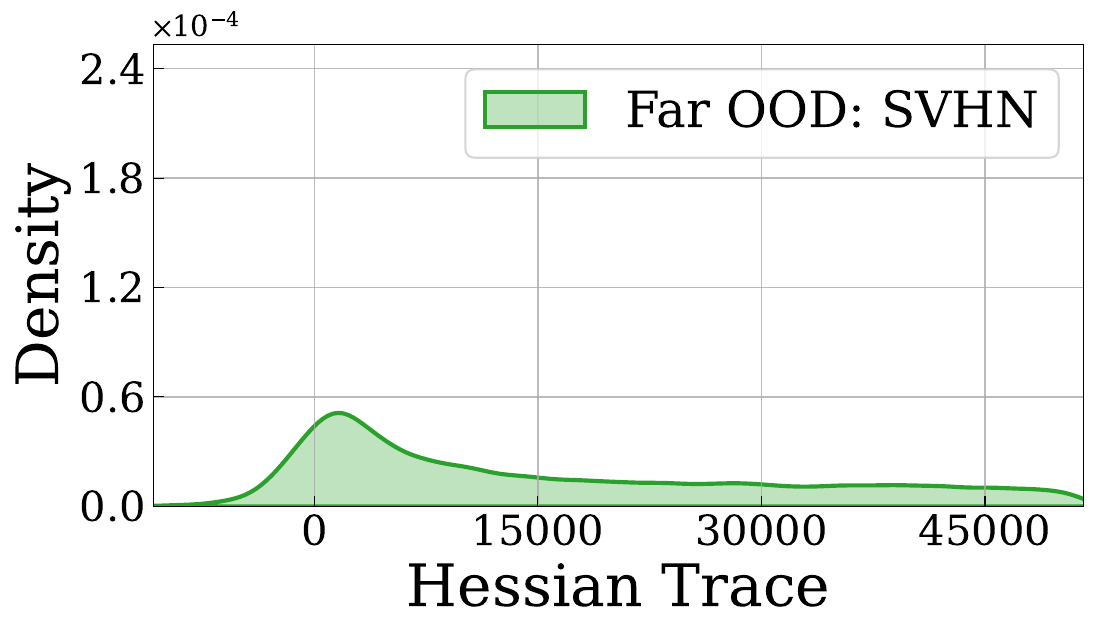}
        \subcaption{SVHN}
    \end{subfigure}
    \begin{subfigure}{0.245\linewidth}
        \centering
        \includegraphics[width=\linewidth]{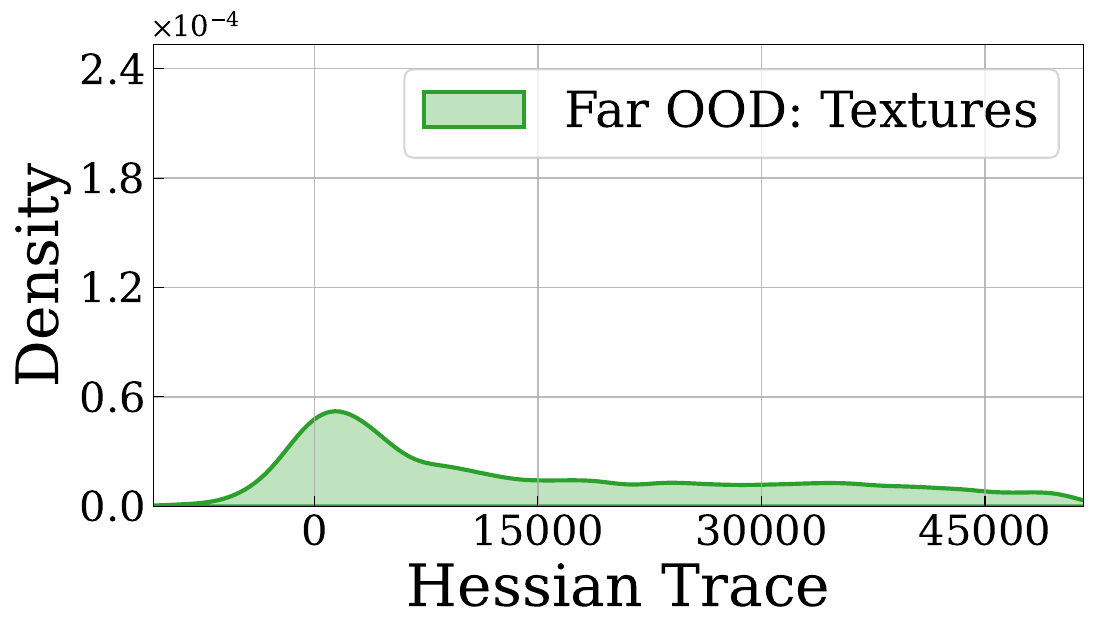}
        \subcaption{Textures}
    \end{subfigure}
    \begin{subfigure}{0.245\linewidth}
        \centering
        \includegraphics[width=\linewidth]{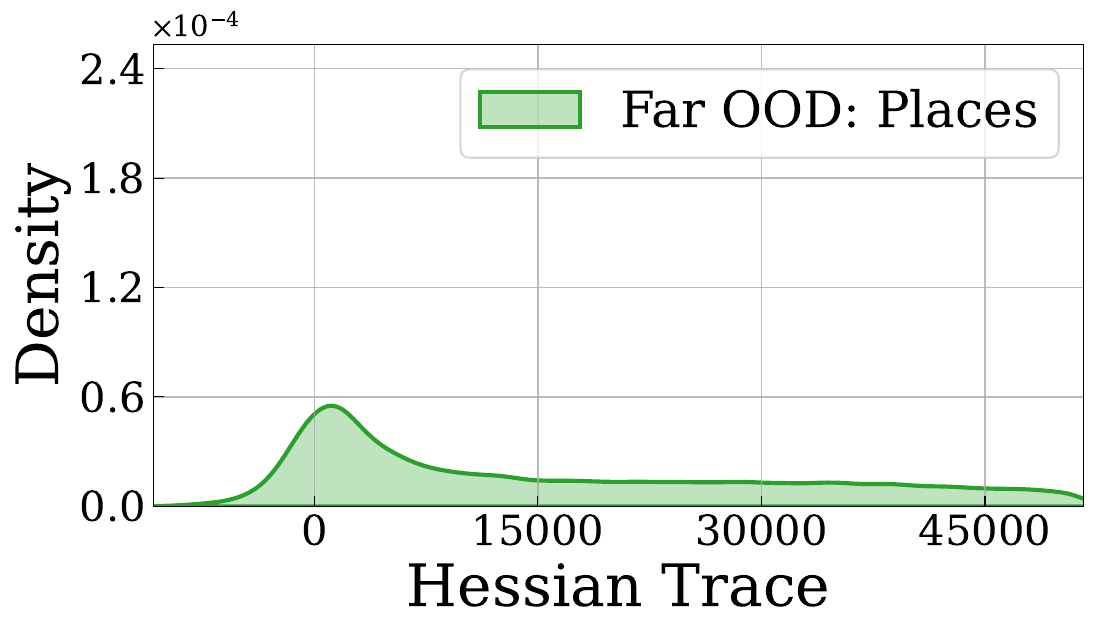}
        \subcaption{Places}
    \end{subfigure}
    \vspace{-1.5em}
    \caption{
        Per-sample Hessian trace distribution for a CIFAR-10–trained model evaluated on multiple OOD datasets. 
        Colors indicate ID (\textcolor{dblue}{blue}), near-OOD (\textcolor{dorange}{orange}), and far-OOD (\textcolor{dgreen}{green}).
        }
    \label{fig:Sample-wise Hessian trace for CIFAR-10}
\end{figure*}

\begin{figure*}[h!]
    \centering
    \begin{subfigure}{0.245\linewidth}
        \centering
        \includegraphics[width=\linewidth]{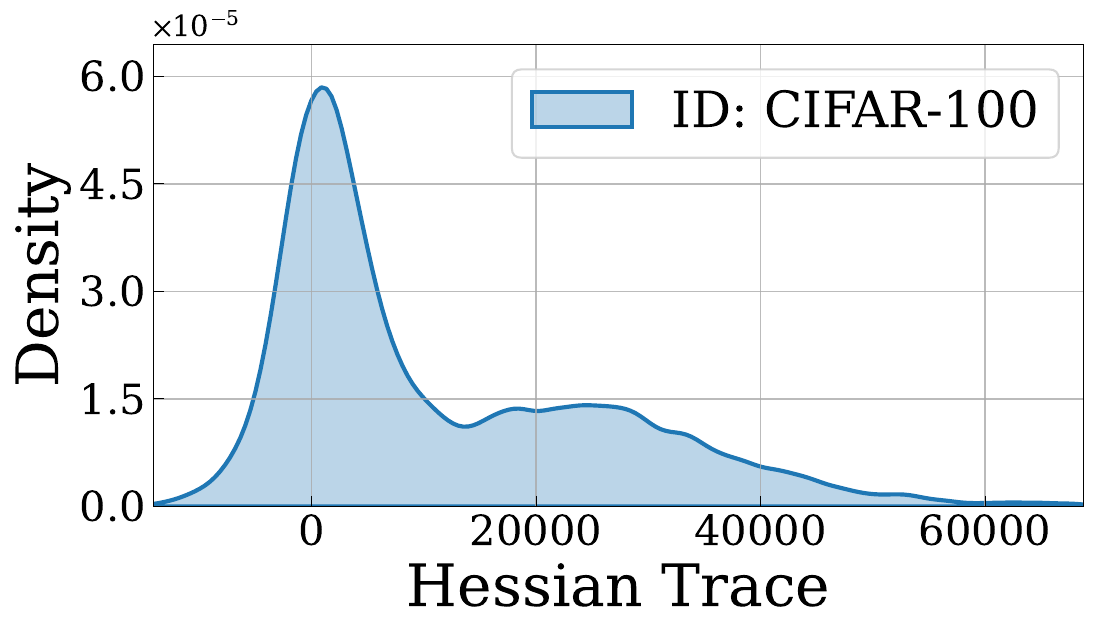}
        \subcaption{CIFAR-100}
    \end{subfigure}
    \begin{subfigure}{0.245\linewidth}
        \centering
        \includegraphics[width=\linewidth]{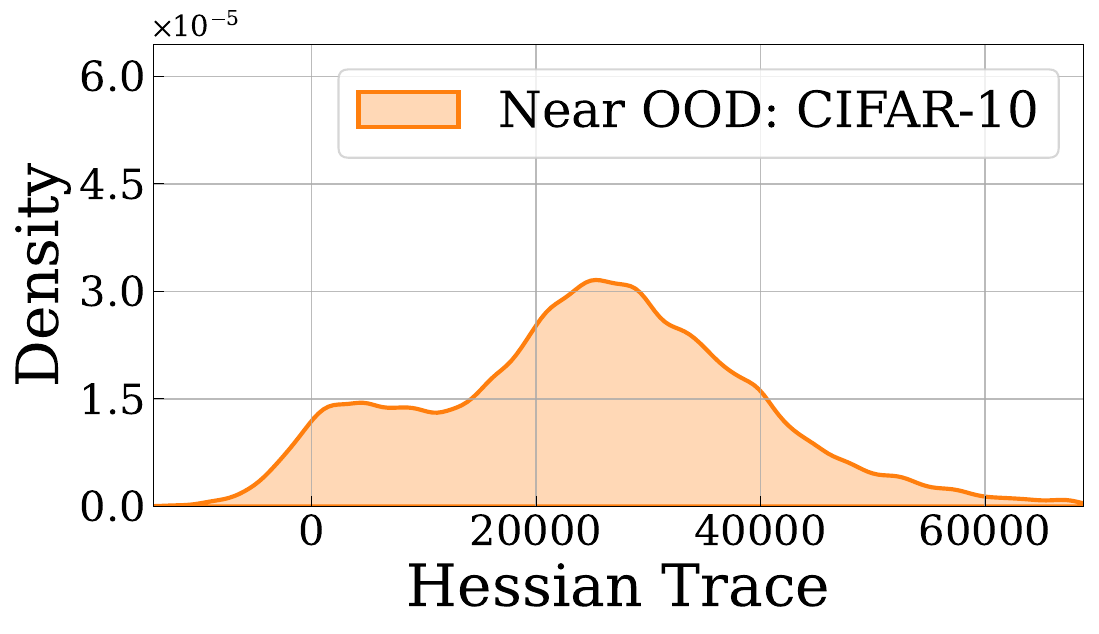}
        \subcaption{CIFAR-10}
    \end{subfigure}
    \begin{subfigure}{0.245\linewidth}
        \centering
        \includegraphics[width=\linewidth]{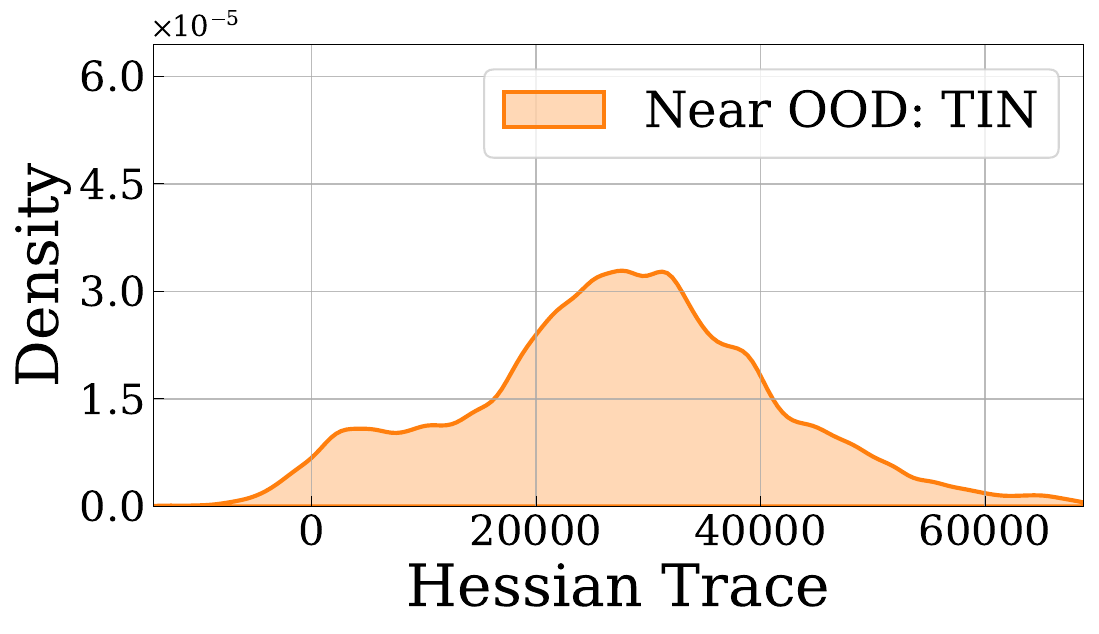}
        \subcaption{TIN}
    \end{subfigure}
    
    \begin{subfigure}{0.245\linewidth}
        \centering
        \includegraphics[width=\linewidth]{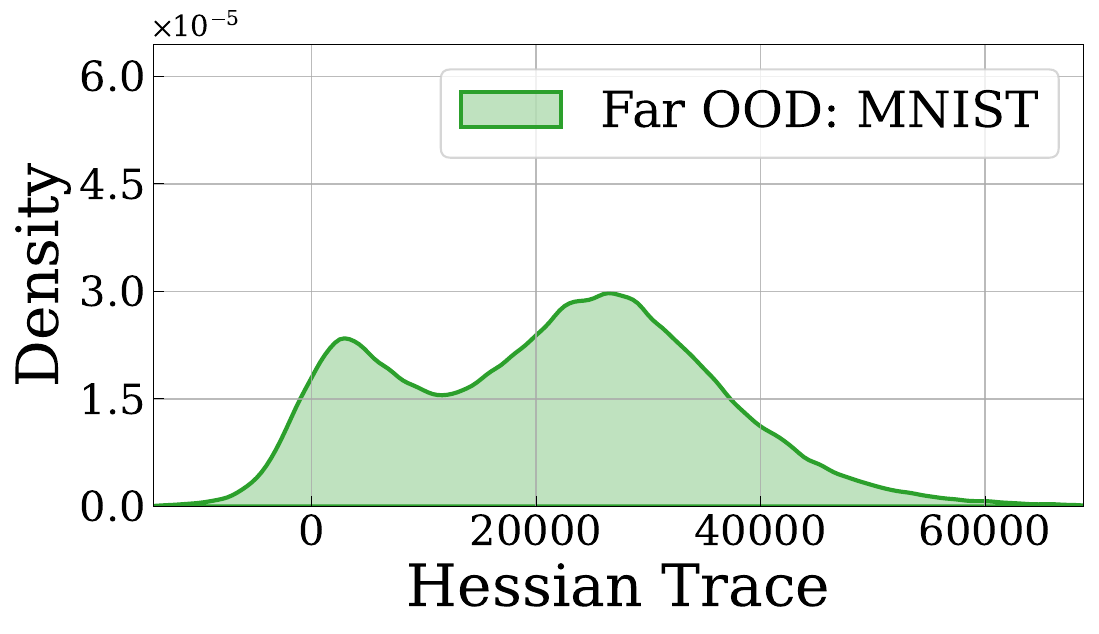}
        \subcaption{MNIST}
    \end{subfigure}
    \begin{subfigure}{0.245\linewidth}
        \centering
        \includegraphics[width=\linewidth]{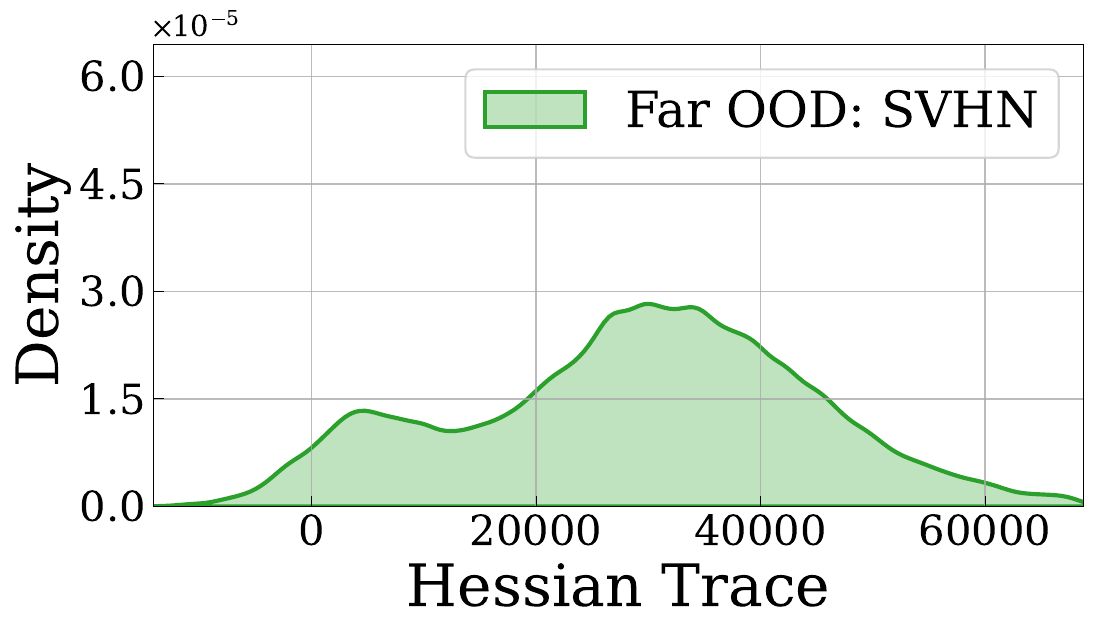}
        \subcaption{SVHN}
    \end{subfigure}
    \begin{subfigure}{0.245\linewidth}
        \centering
        \includegraphics[width=\linewidth]{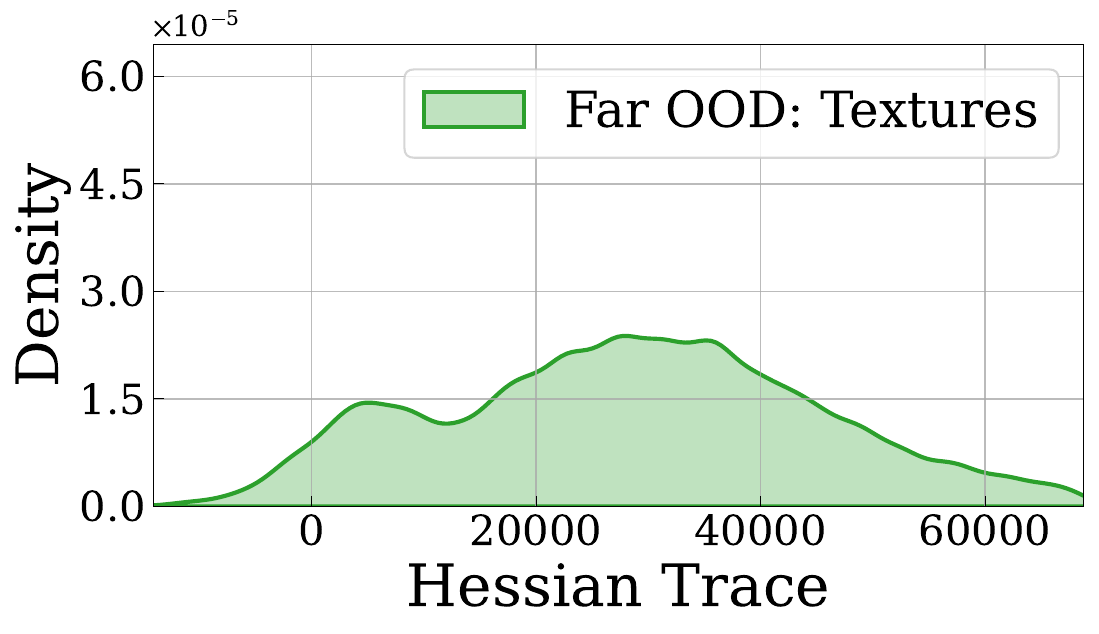}
        \subcaption{Textures}
    \end{subfigure}
    \begin{subfigure}{0.245\linewidth}
        \centering
        \includegraphics[width=\linewidth]{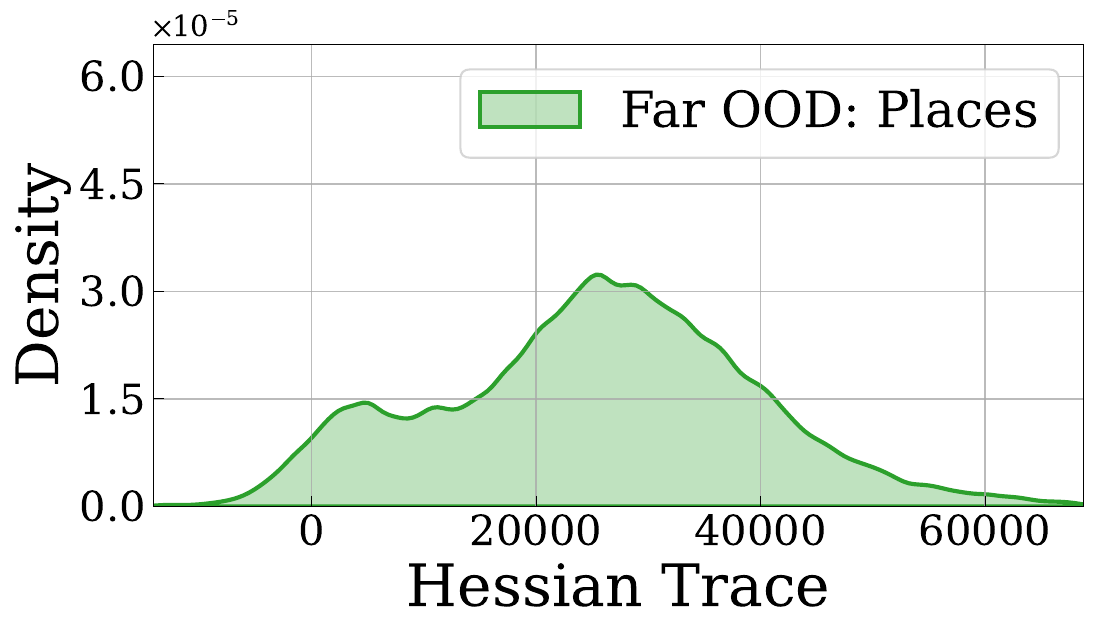}
        \subcaption{Places}
    \end{subfigure}
    \caption{
        Per-sample Hessian trace distribution for a CIFAR-100–trained model evaluated on multiple OOD datasets. 
        Colors indicate ID (\textcolor{dblue}{blue}), near-OOD (\textcolor{dorange}{orange}), and far-OOD (\textcolor{dgreen}{green}).
        }
    \label{fig:Sample-wise Hessian trace for CIFAR-100}
\end{figure*}

\begin{figure*}[h!]
    \centering
    \begin{subfigure}{0.245\linewidth}
        \centering
        \includegraphics[width=\linewidth]{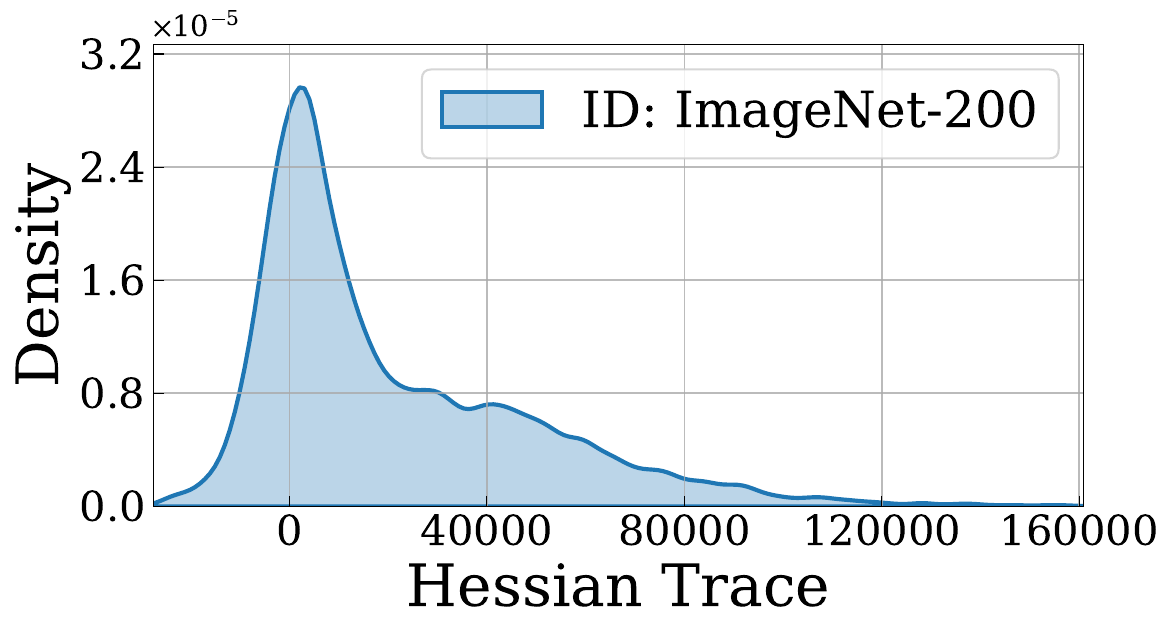}
        \subcaption{ImageNet-200}
    \end{subfigure}
    \begin{subfigure}{0.245\linewidth}
        \centering
        \includegraphics[width=\linewidth]{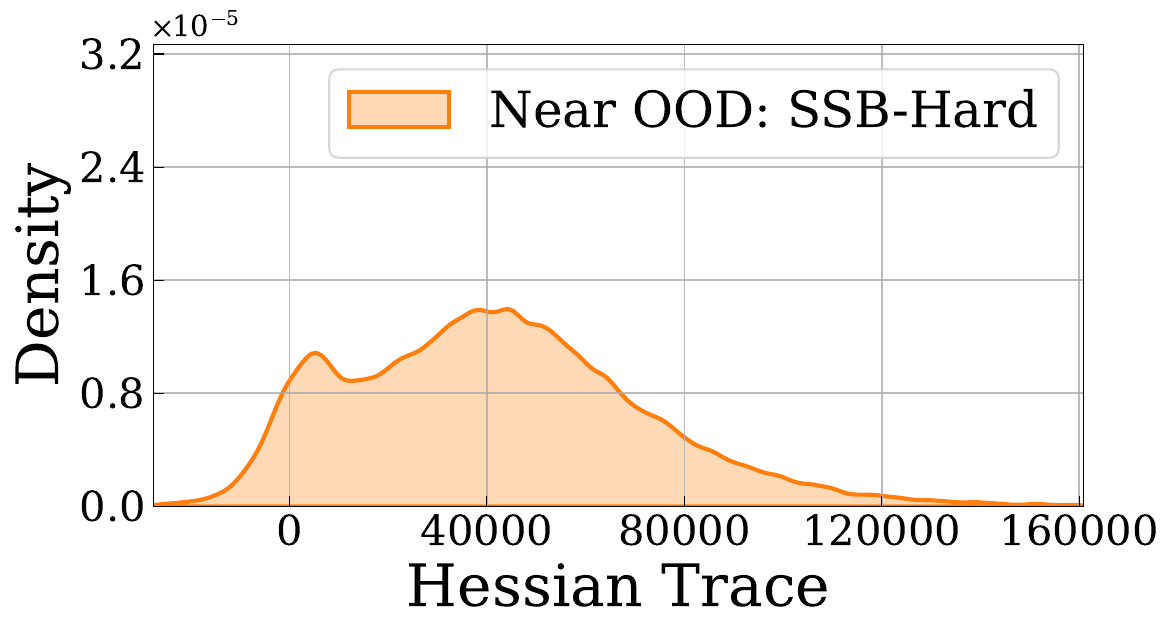}
        \subcaption{SSB-Hard}
    \end{subfigure}
    \begin{subfigure}{0.245\linewidth}
        \centering
        \includegraphics[width=\linewidth]{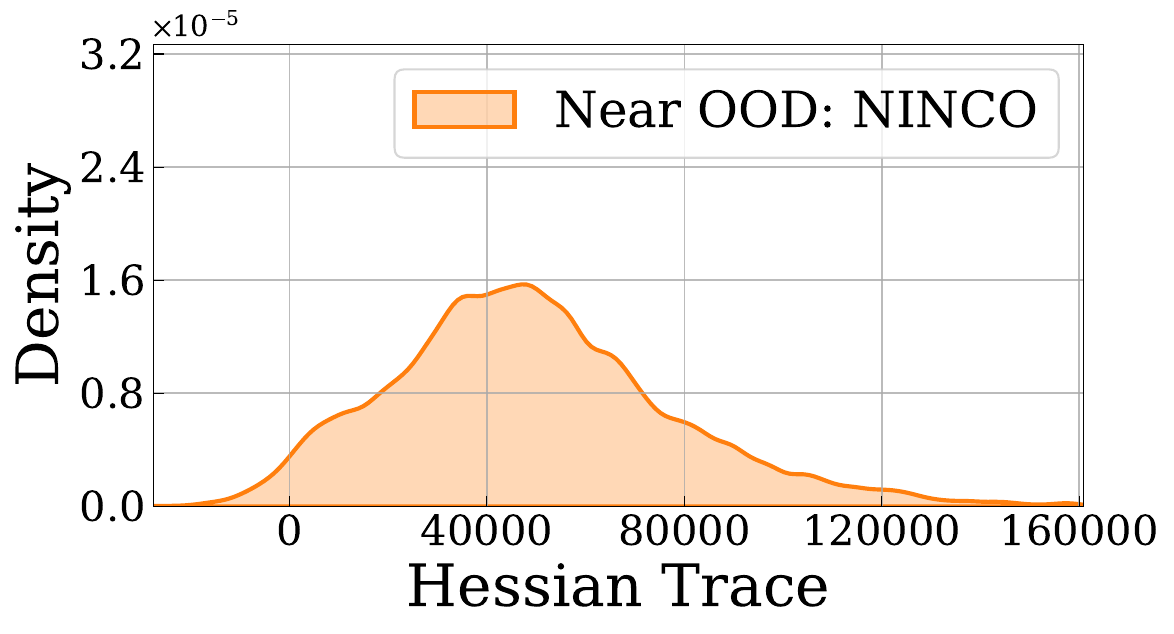}
        \subcaption{NINCO}
    \end{subfigure}
    
    \begin{subfigure}{0.245\linewidth}
        \centering
        \includegraphics[width=\linewidth]{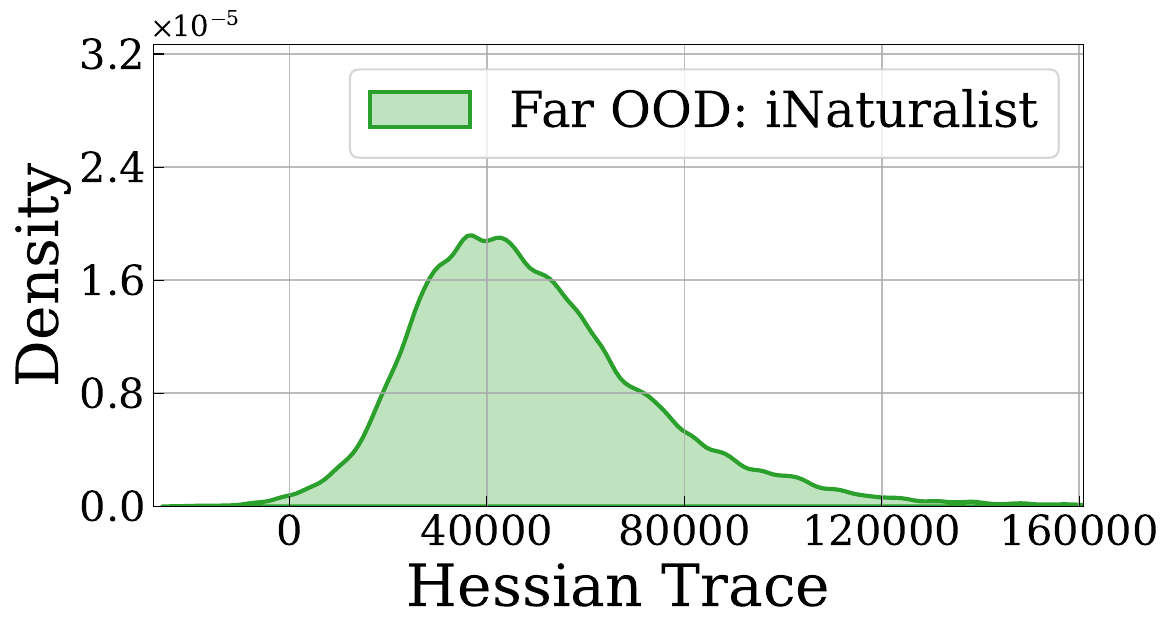}
        \subcaption{iNaturalist}
    \end{subfigure}
    \begin{subfigure}{0.245\linewidth}
        \centering
        \includegraphics[width=\linewidth]{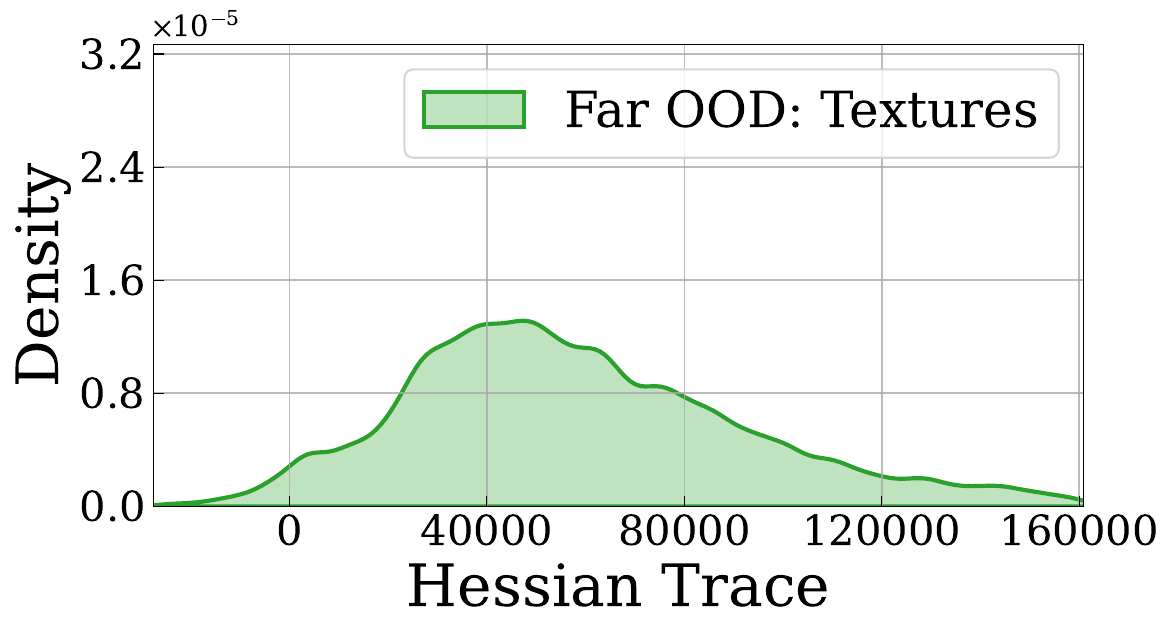}
        \subcaption{Textures}
    \end{subfigure}
    \begin{subfigure}{0.245\linewidth}
        \centering
        \includegraphics[width=\linewidth]{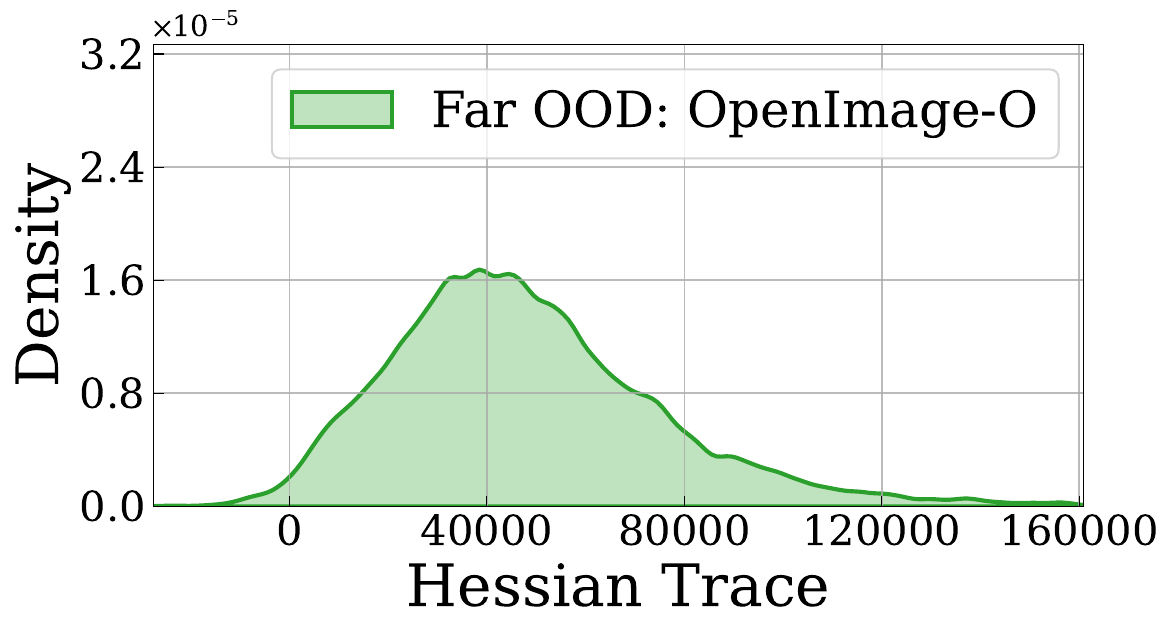}
        \subcaption{OpenImage-O}
    \end{subfigure}
    \caption{
        Per-sample Hessian trace distribution for a ImageNet-200–trained model evaluated on multiple OOD datasets. 
        Colors indicate ID (\textcolor{dblue}{blue}), near-OOD (\textcolor{dorange}{orange}), and far-OOD (\textcolor{dgreen}{green}).
        }
    \label{fig:Sample-wise Hessian trace for ImageNet-200}
\end{figure*}

\clearpage

\begin{figure*}[t]
    \centering
    \begin{subfigure}{0.245\linewidth}
        \centering
        \includegraphics[width=\linewidth]{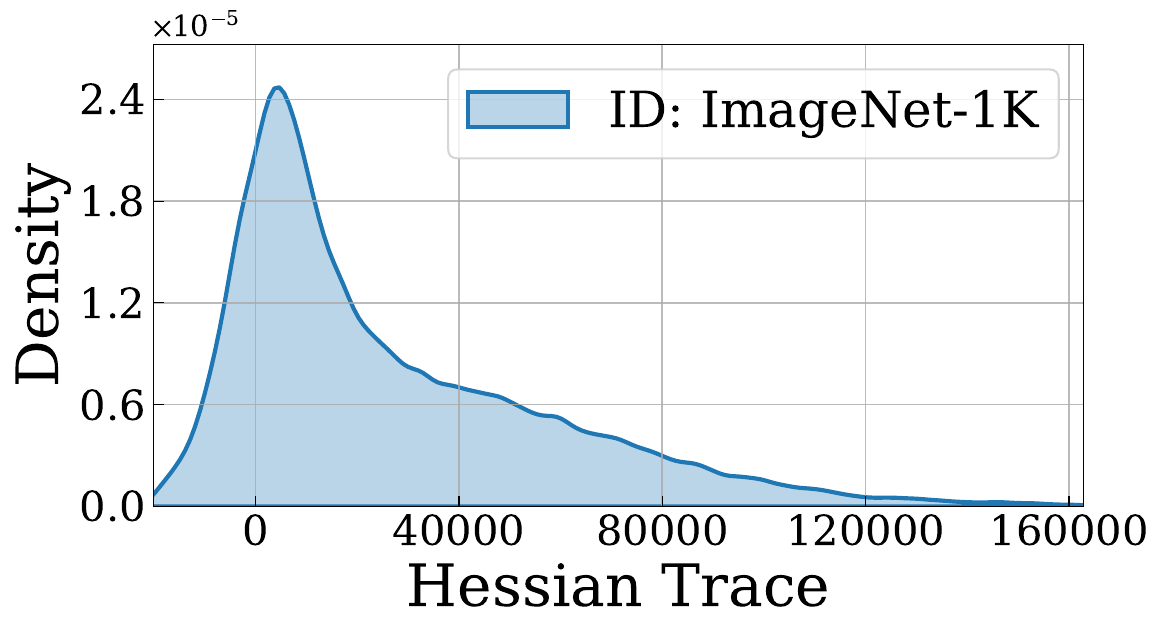}
        \subcaption{ImageNet-1K}
    \end{subfigure}
    \begin{subfigure}{0.245\linewidth}
        \centering
        \includegraphics[width=\linewidth]{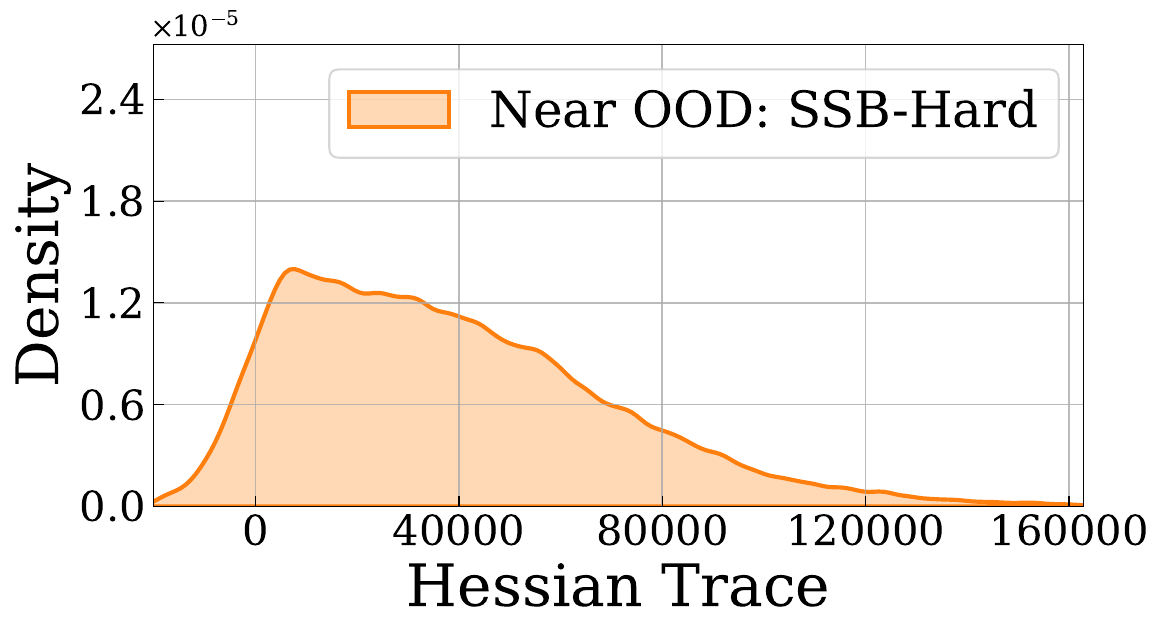}
        \subcaption{SSB-Hard}
    \end{subfigure}
    \begin{subfigure}{0.245\linewidth}
        \centering
        \includegraphics[width=\linewidth]{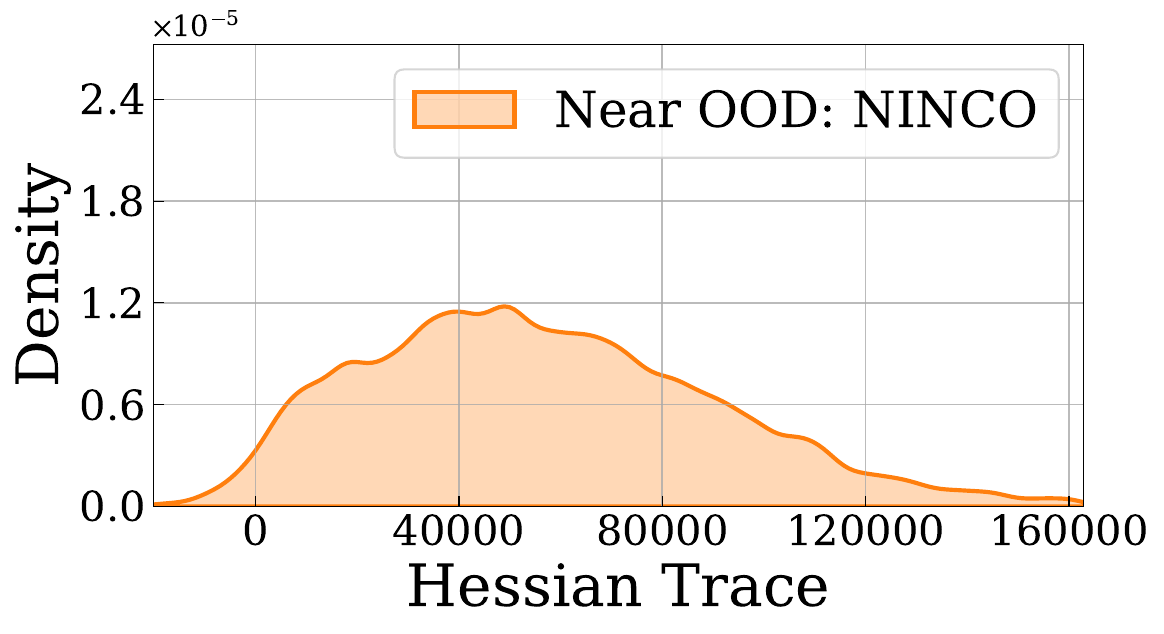}
        \subcaption{NINCO}
    \end{subfigure}
    
    \begin{subfigure}{0.245\linewidth}
        \centering
        \includegraphics[width=\linewidth]{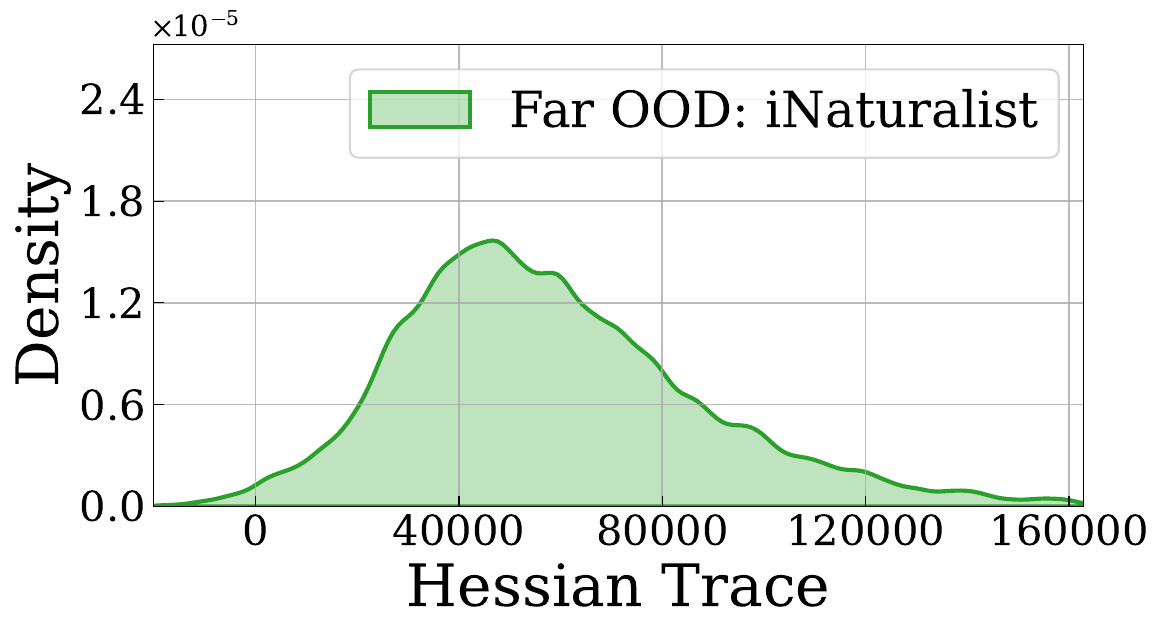}
        \subcaption{iNaturalist}
    \end{subfigure}
    \begin{subfigure}{0.245\linewidth}
        \centering
        \includegraphics[width=\linewidth]{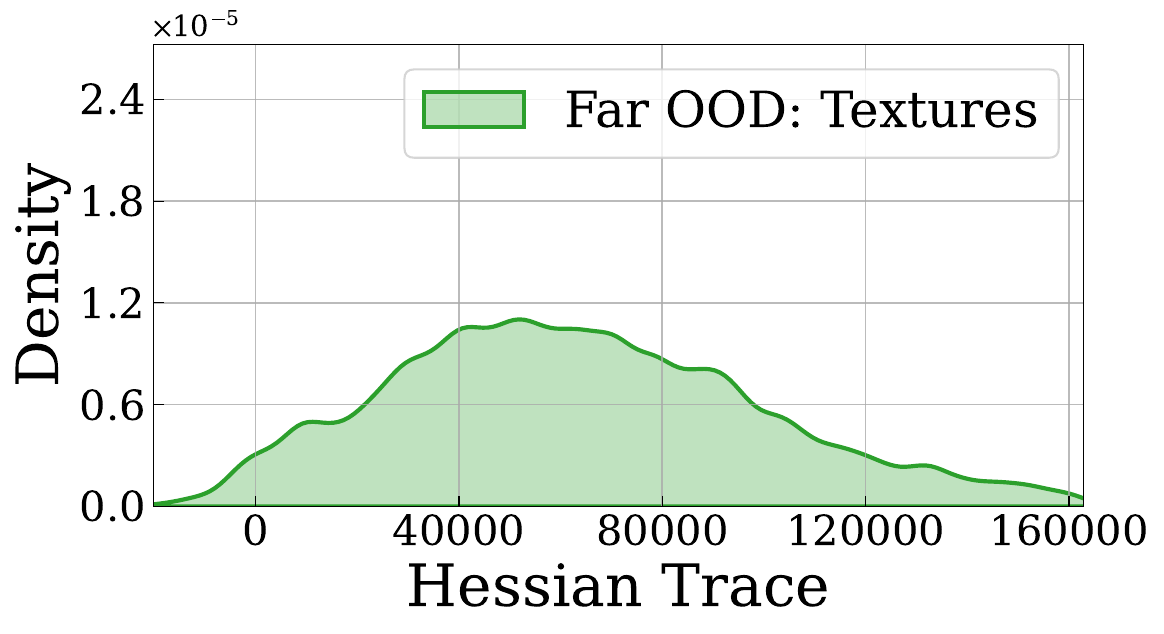}
        \subcaption{Textures}
    \end{subfigure}
    \begin{subfigure}{0.245\linewidth}
        \centering
        \includegraphics[width=\linewidth]{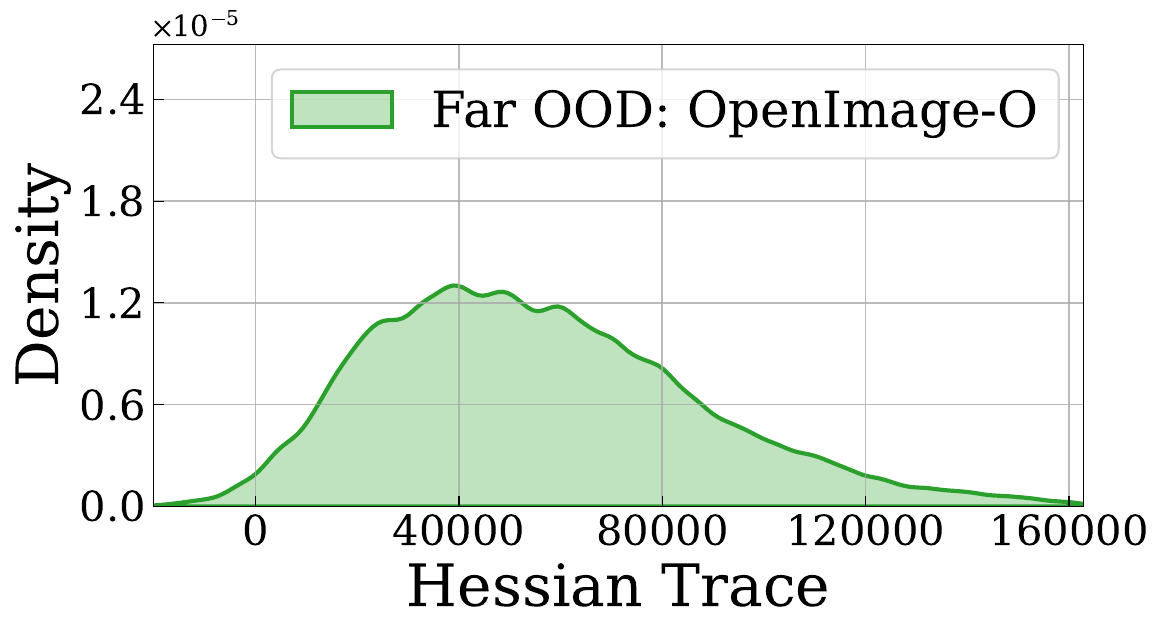}
        \subcaption{OpenImage-O}
    \end{subfigure}
    \caption{
        Per-sample Hessian trace distribution for a ImageNet-1K–trained model evaluated on multiple OOD datasets. 
        Colors indicate ID (\textcolor{dblue}{blue}), near-OOD (\textcolor{dorange}{orange}), and far-OOD (\textcolor{dgreen}{green}).
        }
    \label{fig:Sample-wise Hessian trace for ImageNet-1K}
\end{figure*}


\subsection{OOD Benchmark Results}
To provide a comprehensive evaluation of detection performance, we report AUROC and FPR95 across the OOD benchmarks introduced in~\Cref{sec:evaluations}.
Detailed per-dataset results are presented in \Cref{tab: comprehensive standard ood benchmark for cifar-10}, \ref{tab: comprehensive standard ood benchmark for cifar-100}, \ref{tab: comprehensive standard ood benchmark for imagenet-200}, and \ref{tab: comprehensive standard ood benchmark for imagenet-1k}, corresponding to CIFAR-10, CIFAR-100, ImageNet-200, and ImageNet-1K benchmarks.
Although our method does not necessarily achieve the best performance on every individual OOD dataset, it consistently demonstrates competitive and stable average performance across benchmarks.
In contrast to several existing baselines that exhibit strong dataset-specific biases or overfitting to particular visual statistics, our approach maintains robust detection capability across both smaller-scale CIFAR benchmarks and more complex ImageNet settings.
These results highlight the strong generalization and practical reliability of the proposed framework.
\begin{table*}[h]
    \centering
    \caption{
        OOD detection results of AUROC($\uparrow$) / FPR95($\downarrow$) on the CIFAR-10 benchmark.
        \textbf{Bold} entries indicate the best results; \underline{underlined} entries denote the second- and third-best results.
        }
    \vspace{-0.75em}
    \resizebox{\linewidth}{!}{%
        \begin{sc}
        \begin{tabular}{l @{\hspace{0.5em}}|@{\hspace{0.5em}} c@{\hspace{1em}}c@{\hspace{1em}}c@{\hspace{1em}}c@{\hspace{1em}}c@{\hspace{1em}}c @{\hspace{0.5em}}|@{\hspace{0.5em}} c} 
            \toprule
            \multirow{2}{*}{\textbf{Methods}} & \textbf{CIFAR-100} & \textbf{TIN} & \textbf{MNIST} & \textbf{SVHN} & \textbf{Textures} & \textbf{Places365} & \textbf{Average} \\
            \cmidrule(lr){2-2} \cmidrule(lr){3-3} \cmidrule(lr){4-4} \cmidrule(lr){5-5} \cmidrule(lr){6-6} \cmidrule(lr){7-7} \cmidrule(lr){8-8}
            & AUROC($\uparrow$) / FPR95($\downarrow$) & AUROC($\uparrow$) / FPR95($\downarrow$) & AUROC($\uparrow$) / FPR95($\downarrow$) & AUROC($\uparrow$) / FPR95($\downarrow$) & AUROC($\uparrow$) / FPR95($\downarrow$) & AUROC($\uparrow$) / FPR95($\downarrow$) & AUROC($\uparrow$) / FPR95($\downarrow$) \\ 
            \midrule
            
            \underline{\textbf{Baselines}}  \\
            
            OpenMax~\citep{openmax16cvpr} & 86.91 / 48.06 & 88.32 / 39.18 & 90.50 / 23.33 & 89.77 / 25.40 & 89.58 / 31.50 & 88.63 / 38.52 & 88.95 / 34.33 \\
            MSP~\citep{MSP} & 87.19 / 53.08 & 88.87 / 43.27 & 92.63 / 23.64 & 91.46 / 25.82 & 89.89 / 34.96 & 88.92 / 42.47 & 89.83 / 37.21 \\
            TempScale~\citep{tempscale} & 87.17 / 55.81 & 89.00 / 46.11 & 93.11 / 23.53 & 91.66 / 26.97 & 90.01 / 38.16 & 89.11 / 45.27 & 90.01 / 39.31 \\
            ODIN~\citep{ODIN} & 82.18 / 77.00 & 83.55 / 75.38 & \textbf{95.27} / 23.83 & 84.58 / 68.61 & 86.94 / 67.70 & 85.07 / 70.36 & 86.26 / 63.81 \\
            MDS~\citep{MDS} & 83.59 / 52.81 & 84.81 / 46.99 & 90.10 / 27.30 & 91.18 / 25.96 & 92.69 / 27.94 & 84.90 / 47.67 & 87.88 / 38.11 \\
            RMDS~\citep{RMDS} & 88.83 / \underline{43.86} & 90.76 / \underline{33.91} & 93.22 / 21.49 & \underline{91.84} / \underline{23.46} & 92.23 / 25.25 & 91.51 / 31.20 & 91.40 / \underline{29.86} \\
            EBO~\citep{Energy} & 86.36 / 66.60 & 88.80 / 56.08 & 94.32 / 24.99 & 91.79 / 35.12 & 89.47 / 51.82 & 89.25 / 54.85 & 90.00 / 48.24 \\
            ReAct~\citep{sun2021react} & 85.93 / 67.40 & 88.29 / 59.71 & 92.81 / 33.77 & 89.12 / 50.23 & 89.38 / 51.42 & 90.35 / 44.20 & 89.31 / 51.12 \\
            MLS~\citep{MLS} & 86.31 / 66.59 & 88.72 / 56.06 & 94.15 / 25.06 & 91.69 / 35.09 & 89.41 / 51.73 & 89.14 / 54.84 & 89.90 / 48.23 \\
            VIM~\citep{haoqi2022vim} & 87.75 / 49.19 & 89.62 / 40.49 & \underline{94.76} / \textbf{18.36} & \textbf{94.50} / \textbf{19.29} & \textbf{95.15} / \textbf{21.14} & 89.49 / 41.43 & 91.88 / 31.65 \\
            KNN~\citep{sun2022out} & \textbf{89.73} / \textbf{37.64} & \underline{91.56} / \textbf{30.37} & 94.26 / \underline{20.05} & \underline{92.67} / \underline{22.60} & 93.16 / \underline{24.06} & \underline{91.77} / \underline{30.38} & \underline{92.19} / \textbf{27.52} \\
            ASH~\citep{ASH} & 74.11 / 87.31 & 76.44 / 86.25 & 83.16 / 70.00 & 73.46 / 83.64 & 77.45 / 84.59 & 79.89 / 77.89 & 77.42 / 81.61 \\
            SHE~\citep{SHE} & 80.31 / 81.00 & 82.76 / 78.30 & 90.43 / 42.22 & 86.38 / 62.74 & 81.57 / 84.60 & 82.89 / 76.36 & 84.06 / 70.87 \\

            \midrule
            \textbf{\underline{Fixed $\alpha$}}\\

            \fold & \underline{89.45} / \underline{44.07} & \textbf{91.83} / \underline{32.77} & \underline{94.66} / \underline{21.18} & 91.56 / 28.50 & \underline{93.80} / \underline{24.17} & \textbf{93.46} / \textbf{25.87} & \textbf{92.46} / \underline{29.43} \\
            \fold-r & \underline{89.29} / 46.04 & \underline{91.66} / 33.96 & 94.26 / 23.08 & 91.47 / 30.36 & \underline{93.68} / 24.27 & \underline{93.42} / \underline{26.16} & \underline{92.30} / 30.65 \\      
            \fold-a & 86.31 / 62.94 & 88.94 / 53.52 & 91.44 / 45.77 & 86.79 / 57.51 & 92.21 / 34.17 & 90.91 / 41.19 & 89.43 / 49.18 \\      

            \midrule
            \textbf{\underline{Auto $\alpha$ tuning}}\\

            \autofold & \underline{89.45} / \underline{44.07} & \textbf{91.83} / \underline{32.77} & \underline{94.66} / \underline{21.18} & 91.56 / 28.50 & \underline{93.80} / \underline{24.17} & \textbf{93.46} / \textbf{25.87} & \textbf{92.46} / \underline{29.43} \\
            \autofold-r & \underline{89.29} / 46.04 & \underline{91.66} / 33.96 & 94.26 / 23.08 & 91.47 / 30.36 & \underline{93.68} / 24.27 & \underline{93.42} / \underline{26.16} & \underline{92.30} / 30.65 \\      
            \autofold-a & 86.31 / 62.94 & 88.94 / 53.52 & 91.44 / 45.77 & 86.79 / 57.51 & 92.21 / 34.17 & 90.91 / 41.19 & 89.43 / 49.18 \\        

            \bottomrule
        \end{tabular}
        \end{sc}
    }
    \label{tab: comprehensive standard ood benchmark for cifar-10}
\end{table*}

\begin{table*}[t]
    \centering
    \caption{
        OOD detection results of AUROC($\uparrow$) / FPR95($\downarrow$) on the CIFAR-100 benchmark.
        }
    \vspace{-0.75em}
    \resizebox{\linewidth}{!}{%
        \begin{sc}
        \begin{tabular}{l @{\hspace{0.5em}}|@{\hspace{0.5em}} c@{\hspace{1em}}c@{\hspace{1em}}c@{\hspace{1em}}c@{\hspace{1em}}c@{\hspace{1em}}c @{\hspace{0.5em}}|@{\hspace{0.5em}} c} 
            \toprule
            \multirow{2}{*}{\textbf{Methods}} & \textbf{CIFAR-10} & \textbf{TIN} & \textbf{MNIST} & \textbf{SVHN} & \textbf{Textures} & \textbf{Places365} & \textbf{Average} \\
            \cmidrule(lr){2-2} \cmidrule(lr){3-3} \cmidrule(lr){4-4} \cmidrule(lr){5-5} \cmidrule(lr){6-6} \cmidrule(lr){7-7} \cmidrule(lr){8-8}
            & AUROC($\uparrow$) / FPR95($\downarrow$) & AUROC($\uparrow$) / FPR95($\downarrow$) & AUROC($\uparrow$) / FPR95($\downarrow$) & AUROC($\uparrow$) / FPR95($\downarrow$) & AUROC($\uparrow$) / FPR95($\downarrow$) & AUROC($\uparrow$) / FPR95($\downarrow$) & AUROC($\uparrow$) / FPR95($\downarrow$) \\ 
            \midrule
            
            \underline{\textbf{Baselines}}  \\
            
            OpenMax~\citep{openmax16cvpr} & 74.38 / 60.17 & 78.44 / 52.99 & 76.01 / 53.82 & 82.07 / 53.20 & 80.56 / 56.12 & 79.29 / \underline{54.85} & 78.46 / 55.19 \\
            MSP~\citep{MSP} & 78.47 / \underline{58.91} & 82.07 / 50.70 & 76.08 / 57.23 & 78.42 / 59.07 & 77.32 / 61.88 & 79.22 / 56.62 & 78.60 / 57.40 \\
            TempScale~\citep{tempscale} & \underline{79.02} / \textbf{58.72} & 82.79 / \underline{50.26} & 77.27 / 56.05 & 79.79 / 57.71 & 78.11 / 61.56 & 79.80 / 56.46 & 79.46 / 56.79 \\
            ODIN~\citep{ODIN} & 78.18 / 60.64 & 81.63 / 55.19 & \underline{83.79} / \underline{45.94} & 74.54 / 67.41 & 79.33 / 62.37 & 79.45 / 59.71 & 79.49 / 58.54 \\
            MDS~\citep{MDS} & 55.87 / 88.00 & 61.50 / 79.05 & 67.47 / 71.72 & 70.68 / 67.21 & 76.26 / 70.49 & 63.15 / 79.61 & 65.82 / 76.01 \\
            RMDS~\citep{RMDS} & 77.75 / 61.37 & 82.55 / \textbf{49.56} & 79.74 / 52.05 & 84.89 / 51.65 & 83.65 / 53.99 & \textbf{83.40} / \textbf{53.57} & \underline{82.00} / \underline{53.70} \\
            EBO~\citep{Energy} & \underline{79.05} / 59.21 & 82.76 / 52.03 & 79.18 / 52.62 & 82.03 / 53.62 & 78.35 / 62.35 & 79.52 / 57.75 & 80.15 / 56.26 \\
            ReAct~\citep{sun2021react} & 78.65 / 61.30 & 82.88 / 51.47 & 78.37 / 56.04 & 83.01 / 50.41 & 80.15 / 55.04 & \underline{80.03} / 55.30 & 80.52 / 54.93 \\
            MLS~\citep{MLS} & \textbf{79.21} / \underline{59.11} & 82.90 / 51.83 & 78.91 / 52.95 & 81.65 / 53.90 & 78.39 / 62.39 & 79.75 / 57.68 & 80.14 / 56.31 \\
            VIM~\citep{haoqi2022vim} & 72.21 / 70.59 & 77.76 / 54.66 & 81.89 / 48.32 & 83.14 / 46.22 & \textbf{85.91} / \underline{46.86} & 75.85 / 61.57 & 79.46 / 54.70 \\
            KNN~\citep{sun2022out} & 77.02 / 72.80 & \textbf{83.34} / \underline{49.65} & 82.36 / 48.58 & 84.15 / 51.75 & 83.66 / 53.56 & 79.43 / 60.70 & 81.66 / 56.17 \\
            ASH~\citep{ASH} & 76.48 / 68.06 & 79.92 / 63.35 & 77.23 / 66.58 & 85.60 / 46.00 & 80.72 / 61.27 & 78.76 / 62.95 & 79.79 / 61.37 \\
            SHE~\citep{SHE} & 78.15 / 60.41 & 79.74 / 57.74 & 76.76 / 58.78 & 80.97 / 59.15 & 73.64 / 73.29 & 76.30 / 65.24 & 77.59 / 62.44 \\

            \midrule
            \textbf{\underline{Fixed $\alpha$}}\\

            \fold & 78.05 / 60.70 & \underline{83.26} / 50.51 & 82.27 / 47.04 & 85.65 / 44.54 & 83.05 / 50.81 & 79.37 / 56.09 & 81.94 / \underline{51.62} \\
            \fold-r & 77.49 / 63.67 & \underline{83.27} / 50.77 & 81.84 / 49.13 & \underline{86.58} / 40.53 & \underline{84.58} / \textbf{44.21} & 79.58 / 54.97 & \textbf{82.22} / \textbf{50.55} \\   
            \fold-a & 77.08 / 68.28 & 82.45 / 55.64 & 80.64 / 56.50 & \underline{87.16} / \underline{39.91} & 84.34 / 48.95 & \underline{80.21} / 56.29 & \underline{81.98} / 54.26 \\      

            \midrule
            \textbf{\underline{Auto $\alpha$ tuning}}\\

            \autofold & 70.96 / 79.66 & 78.80 / 68.66 & \underline{83.77} / \underline{42.72} & \textbf{87.42} / \textbf{32.93} & \underline{85.18} / \underline{45.14} & 72.42 / 76.03 & 79.76 / 57.52 \\      
            \autofold-r & 77.73 / 63.88 & 82.81 / 51.13 & 74.84 / 61.67 & 80.64 / 51.81 & 80.95 / 51.81 & 79.41 / \underline{53.97} & 79.40 / 55.71 \\      
            \autofold-a & 70.73 / 86.15 & 77.43 / 76.44 & \textbf{85.50} / \textbf{41.54} & 85.19 / \underline{39.73} & 82.31 / 56.20 & 69.49 / 82.98 & 78.44 / 63.84 \\      
            \bottomrule
        \end{tabular}
        \end{sc}
    }
    \label{tab: comprehensive standard ood benchmark for cifar-100}
\end{table*}


\begin{table*}[t]
    \centering
    \caption{
        OOD detection results of AUROC($\uparrow$) / FPR95($\downarrow$) on the ImageNet-200 benchmark. 
    }
    \vspace{-0.75em}
    \resizebox{\linewidth}{!}{%
        \begin{sc}
        \begin{tabular}{l @{\hspace{0.5em}}|@{\hspace{0.5em}} c@{\hspace{1em}}c@{\hspace{1em}}c@{\hspace{1em}}c@{\hspace{1em}}c @{\hspace{0.5em}}|@{\hspace{0.5em}} c} 
            \toprule
            \multirow{2}{*}{\textbf{Methods}} & \textbf{SSB-Hard} & \textbf{NINCO} & \textbf{iNaturalist} & \textbf{Textures} & \textbf{OpenImage-O} & \textbf{Average} \\
            \cmidrule(lr){2-2} \cmidrule(lr){3-3} \cmidrule(lr){4-4} \cmidrule(lr){5-5} \cmidrule(lr){6-6} \cmidrule(lr){7-7}
            & AUROC($\uparrow$) / FPR95($\downarrow$) & AUROC($\uparrow$) / FPR95($\downarrow$) & AUROC($\uparrow$) / FPR95($\downarrow$) & AUROC($\uparrow$) / FPR95($\downarrow$) & AUROC($\uparrow$) / FPR95($\downarrow$) & AUROC($\uparrow$) / FPR95($\downarrow$) \\ 
            \midrule
            
            \underline{\textbf{Baselines}}  \\
            
            OpenMax \citep{openmax16cvpr} & 77.53 / 72.37 & 83.01 / 54.59 & 92.32 / 24.53 & 90.21 / 36.80 & 88.07 / 38.03 & 86.23 / 45.26 \\
            MSP \citep{MSP} & 80.38 / \underline{66.00} & 86.29 / 43.65 & 92.80 / 26.48 & 88.36 / 44.58 & 89.24 / 35.23 & 87.41 / 43.19 \\
            TempScale \citep{tempscale} & \underline{80.71} / \underline{66.43} & \underline{86.67} / \underline{43.21} & 93.39 / 24.39 & 89.24 / 43.57 & 89.84 / 34.04 & 87.97 / 42.33 \\
            ODIN \citep{ODIN} & 77.19 / 73.51 & 83.34 / 60.00 & 94.37 / 22.39 & 90.65 / 42.99 & 90.11 / 37.30 & 87.13 / 47.24 \\
            MDS \citep{MDS} & 58.38 / 83.65 & 65.48 / 74.57 & 75.03 / 58.53 & 79.25 / 58.16 & 69.87 / 68.29 & 69.60 / 68.64 \\
            RMDS \citep{RMDS} & 80.20 / \textbf{65.91} & 84.94 / \textbf{42.13} & 90.64 / 24.70 & 86.77 / 37.80 & 86.77 / 34.85 & 85.86 / 41.08 \\
            EBO \citep{Energy} & 79.83 / 69.77 & 85.17 / 50.70 & 92.55 / 26.41 & 90.79 / 41.43 & 89.23 / 36.74 & 87.51 / 45.01 \\
            ReAct \citep{sun2021react} & 78.97 / 71.51 & 84.76 / 53.47 & 93.65 / 22.97 & 92.86 / 29.67 & 90.40 / 32.86 & 88.13 / 42.10 \\
            MLS \citep{MLS} & 80.15 / 69.64 & 85.65 / 49.87 & 93.12 / 25.09 & 90.60 / 41.25 & 89.62 / 35.76 & 87.83 / 44.32 \\
            VIM \citep{haoqi2022vim} & 74.04 / 71.28 & 83.32 / 47.10 & 90.96 / 27.34 & 94.61 / \underline{20.39} & 88.20 / 33.86 & 86.23 / 39.99 \\
            KNN \citep{sun2022out} & 77.03 / 73.71 & 86.10 / 46.64 & 93.99 / 24.46 & \underline{95.29} / 24.45 & 90.19 / 32.90 & 88.52 / 40.43 \\
            ASH \citep{ASH} & 79.52 / 72.14 & 85.24 / 57.63 & \textbf{95.10} / 22.49 & 94.77 / 25.65 & \underline{91.82} / 33.72 & \underline{89.29} / 42.33 \\
            SHE \citep{SHE} & 78.30 / 72.64 & 82.07 / 60.96 & 91.43 / 34.38 & 90.51 / 45.58 & 87.49 / 46.54 & 85.96 / 52.02 \\

            \midrule
            \textbf{\underline{Fixed $\alpha$}}\\

            \fold & \textbf{80.92} / \underline{66.43} & \textbf{87.14} / \underline{42.23} & 94.02 / 20.89 & 90.74 / 40.79 & 90.45 / 31.18 & 88.65 / 40.30 \\
            \fold-r & 79.89 / 69.85 & 86.36 / 47.96 & 94.79 / \textbf{18.76} & 92.93 / 29.05 & 91.38 / \underline{28.99} & 89.07 / \underline{38.92} \\        
            \fold-a & 79.25 / 70.38 & 86.49 / 47.20 & \underline{95.09} / \underline{19.14} & \underline{95.45} / \underline{21.28} & \textbf{92.14} / \textbf{27.29} & \textbf{89.68} / \textbf{37.06} \\      

            \midrule
            \textbf{\underline{Auto $\alpha$ tuning}}\\

            \autofold & \underline{80.46} / 67.62 & 86.58 / 44.97 & 93.49 / 22.43 & 91.46 / 37.40 & 89.86 / 31.68 & 88.37 / 40.82 \\      
            \autofold-r & 79.48 / 69.36 & \underline{86.63} / 45.10 & 94.87 / 20.77 & 92.83 / 31.90 & 91.34 / 30.11 & 89.03 / 39.45 \\      
            \autofold-a & 78.67 / 70.76 & 85.86 / 50.20 & \underline{94.96} / \underline{20.28} & \textbf{95.76} / \textbf{20.15} & \underline{91.95} / \underline{28.44} & \underline{89.44} / \underline{37.97} \\      

            \bottomrule
        \end{tabular}
        \end{sc}
    }
    \label{tab: comprehensive standard ood benchmark for imagenet-200}
\end{table*}

\begin{table*}[t]
    \centering
    \caption{
        OOD detection results of AUROC($\uparrow$) / FPR95($\downarrow$) on the ImageNet-1K benchmark.
    }
    \vspace{-0.75em}
    \resizebox{\linewidth}{!}{%
        \begin{sc}
        \begin{tabular}{l @{\hspace{0.5em}}|@{\hspace{0.5em}} c@{\hspace{1em}}c@{\hspace{1em}}c@{\hspace{1em}}c@{\hspace{1em}}c @{\hspace{0.5em}}|@{\hspace{0.5em}} c} 
            \toprule
            \multirow{2}{*}{\textbf{Methods}} & \textbf{SSB-Hard} & \textbf{NINCO} & \textbf{iNaturalist} & \textbf{Textures} & \textbf{OpenImage-O} & \textbf{Average} \\
            \cmidrule(lr){2-2} \cmidrule(lr){3-3} \cmidrule(lr){4-4} \cmidrule(lr){5-5} \cmidrule(lr){6-6} \cmidrule(lr){7-7}
            & AUROC($\uparrow$) / FPR95($\downarrow$) & AUROC($\uparrow$) / FPR95($\downarrow$) & AUROC($\uparrow$) / FPR95($\downarrow$) & AUROC($\uparrow$) / FPR95($\downarrow$) & AUROC($\uparrow$) / FPR95($\downarrow$) & AUROC($\uparrow$) / FPR95($\downarrow$) \\ 
            \midrule
            
            \underline{\textbf{Baselines}}  \\
            
            OpenMax~\citep{openmax16cvpr} & 71.37 / 77.33 & 78.17 / 60.81 & 92.05 / 25.29 & 88.10 / 40.26 & 87.62 / 37.39 & 83.46 / 48.22 \\
            MSP~\citep{MSP} & 72.09 / 74.49 & 79.95 / 56.88 & 88.41 / 43.34 & 82.43 / 60.87 & 84.86 / 50.13 & 81.55 / 57.14 \\
            TempScale~\citep{tempscale} & 72.87 / \underline{73.90} & 81.41 / 55.10 & 90.50 / 37.63 & 84.95 / 56.90 & 87.22 / 45.40 & 83.39 / 53.79 \\
            ODIN~\citep{ODIN} & 71.74 / 76.83 & 77.77 / 68.16 & 91.17 / 35.98 & 89.00 / 49.24 & 88.23 / 46.67 & 83.58 / 55.38 \\
            MDS~\citep{MDS} & 48.50 / 92.10 & 62.38 / 78.80 & 63.67 / 73.81 & 89.80 / 42.79 & 69.27 / 72.15 & 66.72 / 71.93 \\
            RMDS~\citep{RMDS} & 71.77 / 77.88 & 82.22 / 52.20 & 87.24 / 33.67 & 86.08 / 48.80 & 85.84 / 40.27 & 82.63 / 50.56 \\
            EBO~\citep{Energy} & 72.08 / 76.54 & 79.70 / 60.58 & 90.63 / 31.30 & 88.70 / 45.77 & 89.06 / 38.09 & 84.03 / 50.46 \\
            ReAct~\citep{sun2021react} & \underline{73.03} / 77.55 & 81.73 / 55.82 & 96.34 / 16.72 & 92.79 / 29.64 & 91.87 / 32.58 & 87.15 / 42.46 \\
            MLS~\citep{MLS} & 72.51 / 76.20 & 80.41 / 59.44 & 91.17 / 30.61 & 88.39 / 46.17 & 89.17 / 37.88 & 84.33 / 50.06 \\
            VIM~\citep{haoqi2022vim} & 65.54 / 80.41 & 78.63 / 62.29 & 89.56 / 30.68 & \textbf{97.97} / \textbf{10.51} & 90.50 / 32.82 & 84.44 / 43.34 \\
            KNN~\citep{sun2022out} & 62.57 / 83.36 & 79.64 / 58.39 & 86.41 / 40.80 & \underline{97.09} / 17.31 & 87.04 / 44.27 & 82.55 / 48.83 \\
            ASH~\citep{ASH} & 72.89 / \underline{73.66} & \underline{83.45} / 52.97 & \textbf{97.07} / \textbf{14.04} & \underline{96.90} / \underline{15.26} & \underline{93.26} / \underline{29.15} & \textbf{88.71} / \underline{37.02} \\
            SHE~\citep{SHE} & 71.08 / 76.30 & 76.49 / 69.72 & 92.65 / 34.06 & 93.60 / 35.27 & 86.52 / 55.02 & 84.07 / 54.07 \\

            \midrule
            \textbf{\underline{Fixed $\alpha$}}\\

            \fold & \underline{73.95} / \textbf{73.37} & 82.83 / 52.16 & 91.96 / 29.92 & 89.93 / 49.70 & 90.04 / 36.62 & 85.74 / 48.35 \\
            \fold-r & \textbf{74.45} / 76.60 & \underline{83.45} / \underline{49.69} & 95.99 / 18.09 & 94.52 / 25.74 & 91.78 / 31.68 & \underline{88.04} / 40.36 \\    
            \fold-a & 71.19 / 76.66 & \textbf{83.56} / \textbf{48.78} & \underline{96.87} / \underline{14.67} & 96.38 / 16.97 & \textbf{93.38} / \underline{28.07} & \underline{88.28} / \underline{37.03} \\      

            \midrule
            \textbf{\underline{Auto $\alpha$ tuning}}\\

            \autofold & 71.74 / 74.10 & 81.38 / 52.68 & 91.06 / 26.72 & 88.39 / 46.51 & 89.35 / 34.63 & 84.38 / 46.93 \\      
            \autofold-r & 69.17 / 78.49 & 81.35 / 50.77 & 88.68 / 24.74 & 91.09 / 31.74 & 88.95 / 31.21 & 83.85 / 43.39 \\      
            \autofold-a & 69.53 / 76.44 & 82.03 / \underline{50.46} & \underline{96.42} / \underline{15.15} & 96.83 / \underline{14.20} & \underline{93.21} / \textbf{27.78} & 87.60 / \textbf{36.81} \\      
            
            \bottomrule
        \end{tabular}
        \end{sc}
    }
    \label{tab: comprehensive standard ood benchmark for imagenet-1k}
\end{table*}
\end{document}